\newcommand\DoToC{%
  \startcontents
  \printcontents{}{1}{\textbf{Contents}\vskip3pt\hrule\vskip5pt}
  \vskip3pt\hrule\vskip5pt
}
\newtheorem{proposition}{Proposition}
\newtheorem{theorem}{Theorem}
\newtheorem{lemma}{Lemma}
\newtheorem{corollary}{Corollary}
\newtheorem{definition}{Definition}
\theoremstyle{remark}
\newtheorem{remark}{Remark}
\theoremstyle{definition}
\newtheorem{example}{Example}[section]
\crefname{algocf}{alg.}{algs.}
\Crefname{algocf}{Algorithm}{Algorithms}
\newcommand*{\R}{\mathbb{R}} 
\newcommand*{\N}{\mathbb{N}} 
\newcommand*{\X}{\mathcal{X}} 
\newcommand*{\Y}{\mathcal{Y}} 
\newcommand*{\Z}{\mathcal{Z}} 
\newcommand*{\F}{\mathcal{F}} 
\DeclareMathOperator*{\argmax}{arg\,max}
\DeclareMathOperator*{\argmin}{arg\,min}
\DeclareMathOperator*{\spn}{span}
\title{Sampling weights of deep neural networks}
\author{%
  Erik Lien Bolager \\
  School of Computation, Information\\
  and Technology\\
  Technical University of Munich\\
  Munich, Germany \\
  \texttt{erik.bolager@tum.de} \\
  \And
  Iryna Burak \\
  School of Computation, Information\\
  and Technology\\
  Technical University of Munich\\
  Munich, Germany \\
  \texttt{iryna.burak@tum.de} \\
  \And
  Chinmay Datar \\
  School of Computation, Information\\
  and Technology\\
  Technical University of Munich\\
  Munich, Germany \\
  \texttt{chinmay.datar@tum.de} \\
  \And
  Qing Sun \\
  School of Computation, Information\\
  and Technology\\
  Technical University of Munich\\
  Munich, Germany \\
  \texttt{qing.sun@tum.de} \\
  \And
  Felix Dietrich \\ 
  School of Computation, Information\\
  and Technology\\
  Technical University of Munich\\
  Munich, Germany \\
  \texttt{felix.dietrich@tum.de} \\
}
\author{%
Erik Lien Bolager\(^1\) \quad Iryna Burak\(^1\) \quad Chinmay Datar\(^1\)\\ 
\textbf{Qing Sun}\(^1\) \quad \textbf{Felix Dietrich}\(^1\)\\
\(^1\)School of Computation, Information and Technology, \\
Technical University of Munich\\
\texttt{\{erik.bolager,iryna.burak,chinmay.datar,qing.sun, felix.dietrich\}@tum.de}
}
\author{%
Erik Lien Bolager$^+$ \quad Iryna Burak$^+$ \quad Chinmay Datar$^{+\dagger}$ \quad
\textbf{Qing Sun}$^+$ \quad \textbf{Felix Dietrich}$^+$\thanks{Corresponding author,~\texttt{felix.dietrich@tum.de}.}\\
Technical University of Munich\\
$^+$School of Computation, Information and Technology; $^\dagger$Institute for Advanced Study%
}
\begin{document}

\maketitle

\begin{abstract}
We introduce a probability distribution, combined with an efficient sampling algorithm, for weights and biases of fully-connected neural networks. In a supervised learning context, no iterative optimization or gradient computations of internal network parameters are needed to obtain a trained network.
The sampling is based on the idea of random feature models. However, instead of a data-agnostic distribution, e.g., a normal distribution, we use both the input and the output training data to sample shallow and deep networks.
We prove that sampled networks are universal approximators.
For Barron functions, we show that the $L^2$-approximation error of sampled shallow networks decreases with the square root of the number of neurons.
Our sampling scheme is invariant to rigid body transformations and scaling of the input data, which implies many popular pre-processing techniques are not required. 
In numerical experiments, we demonstrate that sampled networks achieve accuracy comparable to iteratively trained ones, but can be constructed orders of magnitude faster.
Our test cases involve a classification benchmark from OpenML, sampling of neural operators to represent maps in function spaces, and transfer learning using well-known architectures.
\end{abstract}

\section{Introduction}

Training deep neural networks involves finding all weights and biases. Typically, iterative, gradient-based methods are employed to solve this high-dimensional optimization problem.
Randomly sampling all weights and biases before the last, linear layer circumvents this optimization and results in much shorter training time. 
However, the drawback of this approach is that the probability distribution of the parameters must be chosen. Random projection networks~\cite{rahimi-2008} or extreme learning machines~\cite{guang-binhuang-2004} involve weight distributions that are completely problem- and data-agnostic, e.g., a normal distribution.
In this work, we introduce a data-driven sampling scheme to construct weights and biases close to gradients of the target function (cf.~\Cref{fig:figure_main}). This idea provides a solution to three main challenges that have prevented randomly sampled networks to compete successfully against iterative training in the setting of supervised learning: deep networks, accuracy, and interpretability.
\begin{figure}[ht]
\centering
\includegraphics[width=1\textwidth]{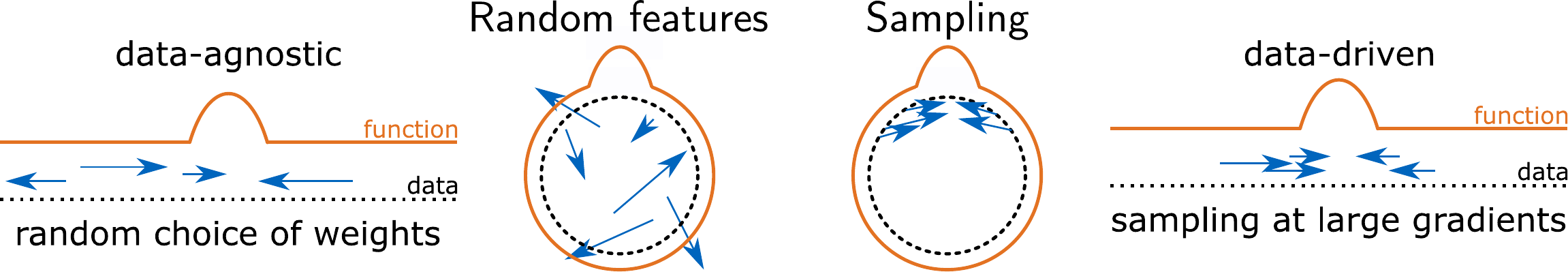}
\caption{Random feature models choose weights in a data-agnostic way, compared to sampling them where it matters: at large gradients. The arrows illustrate where the network weights are placed.}
\label{fig:figure_main}
\end{figure}

\textbf{Deep neural networks.}
Random feature models and extreme learning machines are typically defined for networks with a single hidden layer. Our sampling scheme accounts for the high-dimensional ambient space that is introduced after this layer, and thus deep networks can be constructed efficiently.

\textbf{Approximation accuracy.}
Gradient-based, iterative approximation can find accurate solutions with a relatively small number of neurons. Randomly sampling weights using a data-agnostic distribution often requires thousands of neurons to compete. Our sampling scheme takes into account the given training data points and function values, leading to accurate and width-efficient approximations. The distribution also leads to invariance to orthogonal transformations and scaling of the input data, which makes many common pre-processing techniques redundant.

\textbf{Interpretability.}
Sampling weights and biases completely randomly, i.e., without taking into account the given data, leads to networks that do not incorporate any information about the given problem. 
We analyze and extend a recently introduced weight construction technique~\cite{galaris-2022} that uses the direction between pairs of data points to construct individual weights and biases.
In addition, we propose a sampling distribution over these data pairs that leads to efficient use of weights; cf.~\Cref{fig:figure_main}.

\section{Related work}
%

\textbf{Regarding random Fourier features,} \citet{li-2021} and \citet{liu-2022b} review and unify theory and algorithms of this approach.  
%
Random features have been used to approximate input-output maps in Banach spaces~\cite{nelsen-2021} and solve partial differential equations~\cite{dong-2021a,meade-1994,chen-2022a}.
\citet{gallicchio-2020} provide a review of deep random feature models, and discuss autoencoders and reservoir computing (resp. echo-state networks~\cite{jaeger-2004}). The latter are randomly sampled, recurrent networks to model dynamical systems~\cite{bollt-2021a}.
\textbf{Regarding construction of features,} Monte Carlo approximation of data-dependent parameter distributions is used towards faster kernel approximation~\cite{bach-2017,sonoda-2020,matsubara-2021}. Our work differs in that we do not start with a kernel and decompose it into random features, but we start with a practical and interpretable construction of random features and then discuss their approximation properties. This may also help to construct activation functions similar to collocation~\cite{unser-2019}.
\citet{fiedler-2021} and \citet{fornasier-2022} prove that for given, comparatively small networks with one hidden layer, all weights and biases can be recovered exactly by evaluating the network at specific points in the input space.
The work of~\citet{spek-2022} showed a certain duality between weight spaces and data spaces, albeit in a purely theoretical setting.
Recent work from~\citet{bollt-2023} analyzes individual weights in networks by visualizing the placement of ReLU activation functions in space.
\textbf{Regarding approximation errors and convergence rates of networks,} Barron spaces are very useful~\cite{barron-1993,e-2022}, also to study regularization techniques, esp. Tikohnov and Total Variation~\cite{li-2022c}.
A lot of work~\cite{rahimi-2008,e-2019,e-2020,siegel-2020,wu-2022} surrounds the approximation rate of \(\mathcal{O}(m^{-1/2})\) for neural networks with one hidden layer of width \(m\), originally proved by Barron~\cite{barron-1993}. The rate, but not the constant, is independent of the input space dimension. 
This implies that neural networks can mitigate the curse of dimensionality, as opposed to many approximation methods with fixed, non-trained basis functions~\cite{devore-2021}, including random feature models with data-agnostic probability distributions.
The convergence rates of over-parameterized networks with one hidden layer is considered in~\cite{e-2019}, with a comparison to the Monte Carlo approximation.
In our work, we prove the same convergence rate for our networks.
\textbf{Regarding deep networks,}~E and Wojtowytsch~\cite{e-2020,e-2022a} discuss simple examples that are not Barron functions, 
i.e., cannot be represented by shallow networks.
Shallow~\cite{dirksen-2022} and deep random feature networks~\cite{giryes-2016a}  have also been analyzed regarding classification accuracy.
%
%
%
\textbf{Regarding different sampling techniques, } Bayesian neural networks are prominent examples~\cite{neal-1996a, blundell-2015a, gal-2016, sun-2019a}. The goal is to learn a good posterior distribution and ultimately express uncertainty around both weights and the output of the network. These methods are computationally often on par with or worse than iterative optimization. In this work, we directly relate data points and weights, while Bayesian neural networks mostly employ distributions only over the weights.
Generative modeling has been proposed as a way to sample weights from existing, trained networks~\cite{schurholt-2022,peebles-2022}. It may be interesting to consider our sampled weights as training set in this context.
In the lottery ticket hypothesis~\cite{frankle-2019,burkholz-2022}, ``winning'' subnetworks are often not trained, but selected from a randomly initialized starting network, which is similar to our approach. Still, the score computation during selection requires gradient updates.
Most relevant to our work is the weight construction method by~\citet{galaris-2022}, who proposed to use pairs of data points to construct weights. Their primary goal was to randomly sample weights that capture low-dimensional structures. No further analysis was provided, and only a uniform distribution over the data pairs was used. We expand and analyze their setting here.

\section{Mathematical framework}
We introduce sampled networks, which are neural networks where each pair of weight and bias of all hidden layers is completely determined by two points from the input space. This duality between weights and data has been shown theoretically~\cite{spek-2022}, here, we provide an explicit relation. The weights are constructed using the difference between the two points, and the bias is the inner product between the weight and one of the two points. After all hidden layers are constructed, we must only solve an optimization problem for the coefficients of a linear layer, mapping the output from the last hidden layer to the final output. We start to formalize this construction by introducing some notation. 

Let \(\X\subseteq \R^D\) be the input space with \(\lVert \cdot \rVert\) being the Euclidean norm with inner product \(\langle \cdot,\cdot \rangle\). Further, let \(\Phi\) be a neural network with \(L\) hidden layers, parameters \(\{W_{l}, b_l\}_{l=1}^{L+1}\), and activation function \(\phi : \mathbb{R}\to\mathbb{R}\). For \(x\in\X\), we write \(\Phi^{(l)}(x) = \phi(W_{l}\Phi^{(l-1)}(x) - b_{l})\) as the output of the \(l\)th layer, with \(\Phi^{(0)}(x) = x\). The two activation functions we focus on are the rectified linear unit (ReLU), \(\phi(x) = \max\{x,0\}\), and the hyperbolic tangent (tanh). We set \(N_l\) to be the number of neurons in the \(l\)th layer, with \(N_0 = D\) and \(N_{L+1}\) as the output dimension. We write \(w_{l,i}\) for the \(i\)th row of \(W_l\) and \(b_{l,i}\) for the \(i\)th entry of \(b_l\). 
Building upon work of~\citet{galaris-2022}, we now introduce sampled networks. The probability distribution to sample pairs of data points is arbitrary here, but we will refine it in~\Cref{def:sampling distribution}. We use \(\mathcal{L}\) to denote the loss of our network we would like to minimize.
\begin{definition}\label{def:sampled_network_paper}
  Let \(\Phi\) be a neural network with \(L\) hidden layers. For \(l=1, \dots, L\), let \((x_{0,i}^{(1)} ,x_{0,i}^{(2)})_{i=1}^{N_{l}}\) be pairs of points sampled over \(\X\times \X\). We say \(\Phi\) is a sampled network if the weights and biases of every layer \(l=1,2,\dots,L\) and neurons \(i=1,2,\dots,N_l\), are of the form
  \begin{align}
      w_{l,i} = s_1 \frac{x_{l-1,i}^{(2)} -  x_{l-1,i}^{(1)}}{\lVert  x_{l-1,i}^{(2)} -  x_{l-1,i}^{(1)}\rVert^2}, \quad b_{l,i} = \langle w_{l,i}, x_{l-1,i}^{(1)} \rangle + s_2,
  \end{align}
  where \(s_1, s_2 \in \R\) are constants, \( x^{(j)}_{l-1,i} = \Phi^{(l-1)}(  x^{(j)}_{0,i})\) for \(j=1,2\), and \(x^{(1)}_{l-1,i} \neq  x^{(2)}_{l-1,i}\). The last set of weights and biases are \(W_{L+1}, b_{L+1} = \argmin \mathcal{L}(W_{L+1}\Phi^{(L)}(\cdot)-b_{L+1})\).
\end{definition}
The constants \(s_1,s_2\) are used to fix what values the activation function takes on when it is applied to the points \( x^{(1)}, x^{(2)}\); cf.~\Cref{fig:relu_tanh_illustration_main}. For ReLU, we set \(s_1=1\) and \(s_2=0\), so that \( \phi\left(x^{(1)}\right)=0\) and \( \phi\left(x^{(2)}\right)=1\). For tanh, we set \(s_1=2s_2\) and \(s_2 = \ln (3)/2\), which implies \( \phi\left(x^{(1)}\right)=1/2\) and \( \phi\left(x^{(2)}\right)=-1/2\), respectively, and \(\phi\left(1/2\left(x^{(1)}+x^{(2)}\right)\right)=0\). 
This means that in a regression problem with ReLU, we linearly interpolate values between the two points. For classification, the tanh construction introduces a boundary if \( x^{(1)}\) belongs to a different class than \( x^{(2)}\). We will use this idea later to define a useful distribution over pairs of points (cf.~\Cref{def:sampling distribution}).
\begin{figure}[ht]
    \centering
    \includegraphics[width=0.35\textwidth]{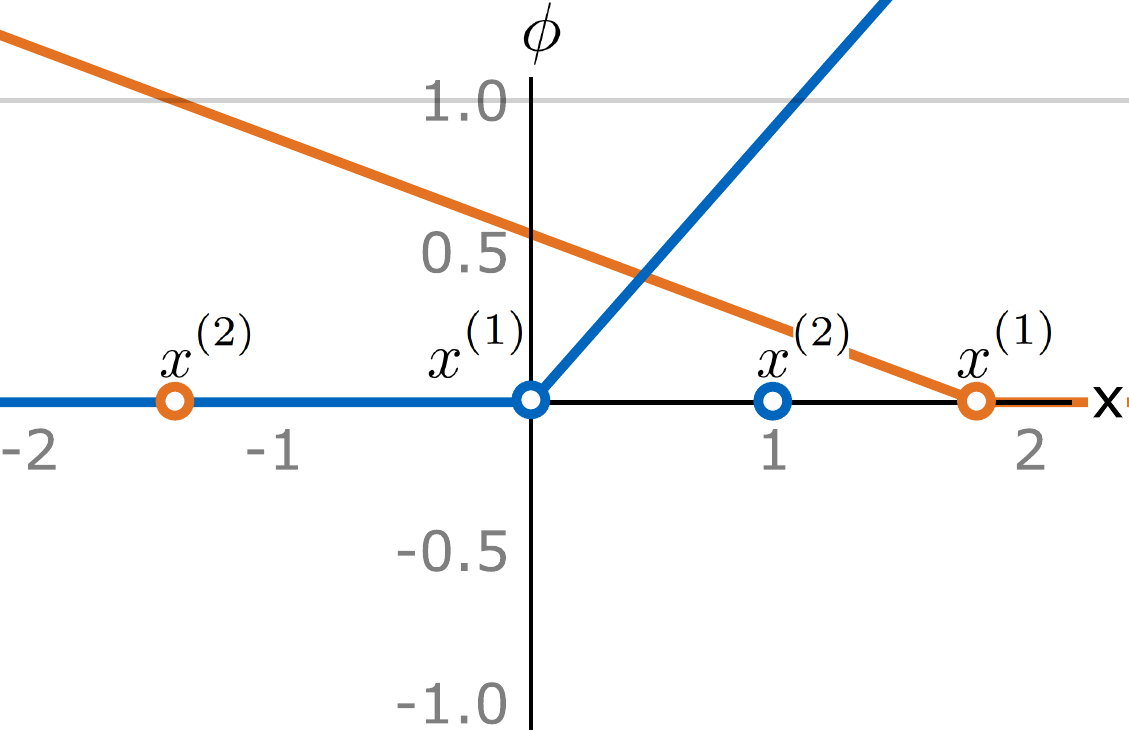}~
    \includegraphics[width=0.35\textwidth]{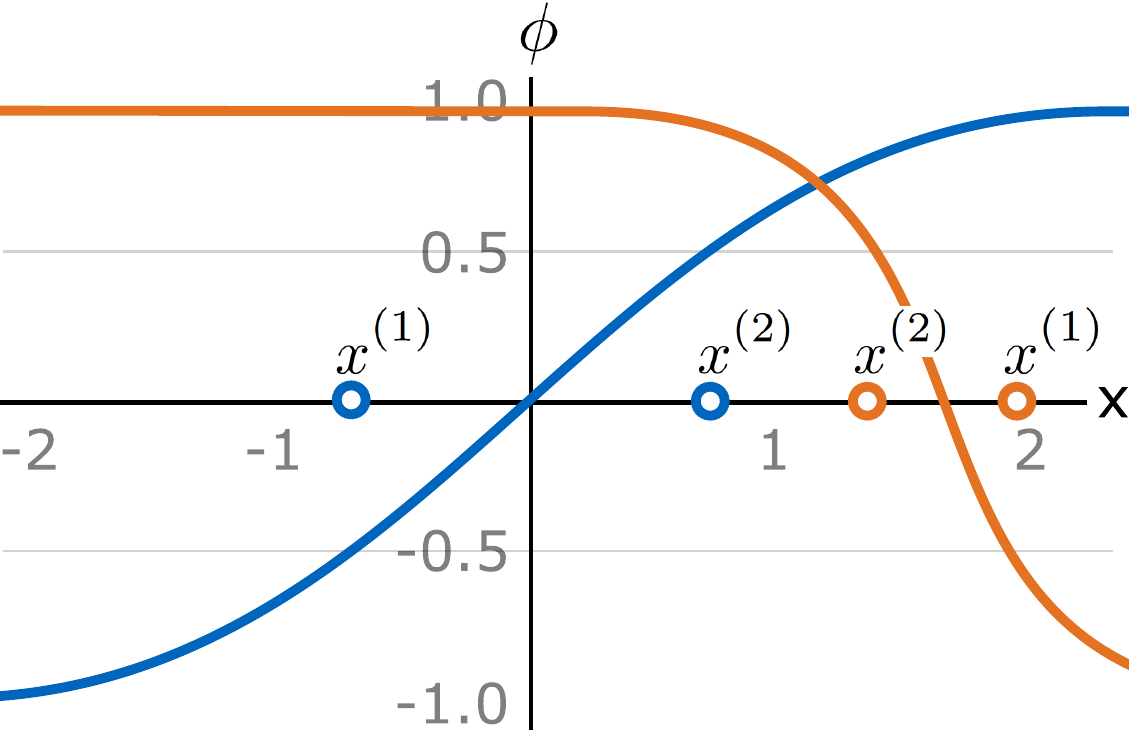}
    \caption{Placement of the point pairs  \({x}^{(1)},{x}^{(2)}\) for activation functions ReLU (left) and tanh (right). Two data pairs are chosen in each subfigure, resulting in two activation functions on each data domain.}
    \label{fig:relu_tanh_illustration_main}
\end{figure}

The space of functions that sampled networks can approximate is not immediately clear. First, we are only using points in the input space to construct both the weights and the biases, instead of letting them take on any value. Second, there is a dependence between the bias and the direction of the weight. Third, for deep networks, the sampling space changes after each layer. These apparent restrictions require investigation into which functions we can approximate.
We assume that the input space in \Cref{thm:universality_proof} and \Cref{thm:barron_bound} extends into its ambient space \(\mathbb{R}^D\) as follows. Let \(\X'\) be any compact subset of \(\R^D\) with finite reach \(\tau>0\). Informally, such a set has a boundary that does not change too quickly~\cite{cuevas-2012}. We then set the input space \(\X\) to be the space of points including \(\X'\) and those that are at most \(\epsilon_I\) away from \(\X'\), given the canonical distance function in \(\R^D\), where \(0< \epsilon_I<\tau\). In~\Cref{thm:universality_proof}, we also consider \(L\) layers to show that the construction of weights in deep networks does not destroy the space of functions that networks with one hidden layer can approximate, even though we alter the space of weights we can construct when \(L>1\). 
\begin{theorem}\label{thm:universality_proof}
  For any number of layers \(L \in \N_{>0}\), the space of sampled networks with \(L\) hidden layers, \(\Phi\colon \X\rightarrow \R^{N_{L+1}}\), with activation function ReLU, is dense in \(C(\X, \R^{N_{L+1}})\).
\end{theorem}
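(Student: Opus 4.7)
I would prove the theorem by induction on the number of layers $L$. For the base case $L = 1$, I would reduce to the classical universal approximation theorem for non-polynomial activations (Leshno--Lin--Pinkus--Schocken). For the inductive step, I would engineer the first hidden layer so that on $\X$ it realizes an invertible affine map, reducing density for $L$ layers to density for $L-1$ layers on the affinely-transformed input space.

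For the base case I would work coordinate-wise on the output, reducing to scalar-valued $f$. The positive homogeneity of ReLU, $\phi(\lambda u) = \lambda \phi(u)$ for $\lambda > 0$, lets me absorb the weight norm $\lVert w_{1,i}\rVert = \lVert x_{0,i}^{(2)} - x_{0,i}^{(1)}\rVert^{-1}$ into the outer coefficient $w_{2,i}$, so each sampled neuron becomes, up to a positive scalar, the ridge function $x \mapsto \phi(\langle v, x - x_{0,i}^{(1)}\rangle)$, where $v \in S^{D-1}$ is the unit direction of $x_{0,i}^{(2)} - x_{0,i}^{(1)}$. Thus the span of sampled neurons coincides with the span of $\{x \mapsto \phi(\langle v, x\rangle - t) : v \in S^{D-1},\ t \in P_v(\X)\}$, where $P_v(\X) := \{\langle v, y\rangle : y \in \X\}$ is a non-degenerate closed interval since $\X$ contains an open ball. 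I would then show this restricted span already contains every unrestricted shallow ReLU ridge function on $\X$: after normalizing $\lVert w\rVert = 1$, if $b \in P_w(\X)$ the neuron is directly realizable; if $b > \max P_w(\X)$ it vanishes on $\X$; and if $b < \min P_w(\X)$ it equals the affine function $\langle w, \cdot\rangle - b$, which belongs to the span because $\phi(\langle w, \cdot\rangle - t) - \phi(-\langle w, \cdot\rangle + t) = \langle w, \cdot\rangle - t$ for any $t \in P_w(\X) = -P_{-w}(\X)$. Density in $C(\X)$ then follows from the classical theorem.

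For the inductive step ($L \geq 2$), assuming the claim for $L-1$ layers, I would build the first hidden layer with $N_1 = D$ neurons by choosing pairs $(x_{0,i}^{(1)}, x_{0,i}^{(2)})$ with $x_{0,i}^{(2)} - x_{0,i}^{(1)} = \alpha_i e_i$ for small $\alpha_i > 0$ and with the $i$-th coordinate of $x_{0,i}^{(1)}$ equal to $\min_{x \in \X} x_i$. Both choices are admissible: the minimum is attained by compactness at a boundary point of the $\epsilon_I$-thickening, from which moving by $+\alpha_i e_i$ with $\alpha_i < \epsilon_I$ stays inside $\X$. By construction every preactivation $\langle w_{1,i}, x\rangle - b_{1,i}$ is non-negative on $\X$, so ReLU acts as the identity and $\Phi^{(1)}|_{\X}$ is the invertible affine map $x \mapsto \mathrm{diag}(1/\alpha_i)(x - c)$. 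The image $\Y := \Phi^{(1)}(\X)$ is a compact subset of $\R^D$ that inherits, after rescaling, the same structural hypotheses used in \Cref{def:sampled_network_paper}. Applying the inductive hypothesis to $f \circ (\Phi^{(1)})^{-1} \in C(\Y, \R^{N_{L+1}})$ yields a sampled $(L-1)$-layer network whose composition with the first layer is the required sampled $L$-layer network approximating $f$.

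The hardest part is the inductive step, specifically the verification that the structural assumptions on the input space (compact core with positive finite reach, together with its $\epsilon_I$-thickening) transfer to the image $\Y = \Phi^{(1)}(\X)$, since \Cref{def:sampled_network_paper} implicitly relies on these assumptions when sampling pairs in subsequent layers. This reduces to a short geometric check that invertible affine maps preserve positivity of reach and admit a new thickening constant $\epsilon_I' > 0$ for $\Y$. A secondary subtlety, in the base case, is the recovery of ridge neurons whose bias falls outside $P_w(\X)$; this is where the ReLU identity $\phi(u) - \phi(-u) = u$ becomes essential.
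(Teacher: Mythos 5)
Your overall architecture mirrors the paper's: for $L=1$, use positive homogeneity of ReLU to reduce sampled neurons to unit-direction ridge functions with biases drawn from $P_v(\X)=\{\langle v,y\rangle: y\in\X\}$, then invoke the classical universal approximation theorem; for deep networks, make the early layers act as an injective affine map (preactivations nonnegative so ReLU is the identity) and apply the shallow result on the image. However, there is a genuine gap in your base case. You assert that $P_v(\X)$ is a closed interval ``since $\X$ contains an open ball.'' Containing a ball gives non-degeneracy, not connectedness: \Cref{def:input_space} only requires $\X'$ to be compact with positive reach, so $\X$ may be disconnected (e.g.\ two far-apart balls), and then $P_v(\X)$ is a union of disjoint intervals. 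Your case analysis ($b\in P_w(\X)$, $b>\max$, $b<\min$) misses the fourth case where $b$ lies in a \emph{gap} of $P_w(\X)$: there the target neuron $\phi(\langle w,\cdot\rangle-b)$ is neither realizable directly, nor identically zero, nor affine on $\X$ --- it vanishes on one part of $\X$ and is affine on another, with an offset that no single sampled neuron can reproduce. The paper handles exactly this case by snapping the bias to $\inf\bigl(B_i\cap[b,\max B_i]\bigr)$ and compensating with a ``constant block'' of five auxiliary neurons that adds a constant only on the components of $\X$ where the preactivation is positive; this is the source of the $6\cdot N_1$ neuron count in the paper's proof sketch. Without this (or an added connectedness hypothesis), your reduction from arbitrary shallow ReLU networks to sampled ones fails.

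Two smaller points. First, your inductive step differs in bookkeeping from the paper: you want to re-apply the theorem to $\Y=\Phi^{(1)}(\X)$, which forces you to verify that $\Y$ is again an $\epsilon$-thickening of a positive-reach core; since your first layer is a non-isometric diagonal map, thickenings do not transform into thickenings, and this check is less of a one-liner than you suggest. The paper instead constructs all $L-1$ inner layers directly and only tracks the weaker property that each point of $\Phi^{(l)}(\X')$ retains a closed ball inside $\X_l$, which is all the shallow argument actually consumes. Second, your claim that from the coordinate-minimizing boundary point one can move by $+\alpha_i e_i$ and stay in $\X$ is true but needs the observation that the minimizer sits at distance exactly $\epsilon_I$ from $\X'$ with $\zeta(x^*)-x^*=\epsilon_I e_i$, so the segment toward $\zeta(x^*)$ stays in $\X$; the paper proves the analogous fact via a tangent-hyperplane argument.
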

\textit{Sketch of proof:} We show that the possible biases that sampled networks can produce are all we need inside a neuron, and the rest can be added in the last linear part and with additional neurons. We then show that we can approximate any neural network with one hidden layer with at most \(6\cdot N_1\) neurons --- which is not much, considering the cost of sampling versus backpropagation. We then show that we can construct weights so that we preserve the information of \(\X\) through the first \(L-1\) layers, and then we use the arbitrary width result applied to the last hidden layer. The full proof can be found in \Cref{sec:universality_proofs}.
We also prove a similar theorem for networks with tanh activation function and one hidden layer. The proof differs fundamentally, because tanh is not positive homogeneous. %

We now show existence of sampled networks for which the \(L^2\) approximation error of Barron functions is bounded (cf. \Cref{thm:barron_bound}). We later demonstrate that we can actually construct such networks (cf.~\Cref{sec:illustrative_barron}). The Barron space~\cite{barron-1993,e-2022} is defined as 
\begin{align*}
  \mathcal{B} = \{f \colon f(x) = \int_{\Omega} w_{2} \phi(\langle w_1, x\rangle - b)d\mu(b, w_1, w_2) \text{ and } \lVert f\rVert_{\mathcal{B}} < \infty\}
\end{align*}
with \(\phi\) being the ReLU function, \(\Omega = \R\times\R^D\times \R\), and \(\mu\) being a probability measure over \(\Omega\). The Barron norm is defined as 
\(
  \lVert f \rVert_{\mathcal{B}} = \inf_{\mu}\max_{(b, w_1, w_2) \in \text{supp}(\mu)}\{\lvert w_2\rvert(\lVert w_1\rVert_{1} + \lvert b\rvert)\}.
\)
\begin{theorem}\label{thm:barron_bound}
  Let \(f \in \mathcal{B}\) and \(\X = [0,1]^D\). For any \(N_1\in \N_{>0}\), \(\epsilon>0\), and an arbitrary probability measure \(\pi\), there exist sampled networks \(\Phi\) with one hidden layer, \(N_1\) neurons, and ReLU activation function, such that 
  \begin{align*}
      \lVert f - \Phi\rVert_2^2 = \int_{\X} \lvert f(x) - \Phi(x)\rvert^2d\pi(x) < \frac{(3+\epsilon)\lVert f \rVert_{\mathcal B}^2}{N_1}.
  \end{align*}
\end{theorem}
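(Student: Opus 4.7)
The plan is a Monte Carlo existence argument applied to a near-optimal Barron representation of $f$, combined with a reparametrization that turns each ReLU ridge into a sampled neuron. Fix $\eta > 0$ and choose a probability measure $\mu$ on $\Omega$ with $f(x) = \int_\Omega w_2\,\phi(\langle w_1,x\rangle - b)\,d\mu$ and $\sup_{\text{supp}(\mu)} |w_2|(\|w_1\|_1 + |b|) \le \|f\|_\mathcal{B} + \eta$. Normalize by positive homogeneity of ReLU so that $\|w_1\|_2 = 1$ on the support (the scale is absorbed into $w_2$).

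The core technical step is to realize each Barron triple as a sampled neuron. For $(w_1, b, w_2)$ whose hyperplane $H = \{x : \langle w_1, x\rangle = b\}$ meets $\X = [0,1]^D$, pick $x^{(1)} \in H \cap \X$ and set $x^{(2)} = x^{(1)} + \delta\,w_1$ with $\delta \in (0, \epsilon_I]$ chosen so that $x^{(2)}$ lies in the $\epsilon_I$-expansion of $\X$. Then
\begin{equation*}
w^s = \frac{x^{(2)} - x^{(1)}}{\|x^{(2)} - x^{(1)}\|^2} = \frac{w_1}{\delta}, \qquad b^s = \langle w^s, x^{(1)}\rangle, \qquad c = \delta\, w_2,
\end{equation*}
and a direct computation gives $c\,\phi(\langle w^s, x\rangle - b^s) = w_2\,\phi(\langle w_1, x\rangle - b)$ on $\X$. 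Triples whose hyperplane misses $\X$ are either identically zero or affine on $\X$; I would aggregate the affine contributions into a single correction $\langle V, x\rangle + c_0$ with $\|V\|_1 + |c_0| \le \|f\|_\mathcal{B} + \eta$, represent $\langle V, x\rangle$ by one extra sampled neuron anchored at the corner of $\X$ minimizing $\langle V, \cdot\rangle$, and absorb $c_0$ into the outer bias $b_{L+1}$.

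Given this representation, I would sample $N_1 - 1$ triples i.i.d.\ from the active restriction of $\mu$, convert each to a pair via the construction above, and form the empirical sampled network $\Phi$ with the natural Monte Carlo outer coefficients. Since each reparametrized summand is pointwise equal to the corresponding Barron integrand on $\X$ and is therefore bounded by $\|f\|_\mathcal{B} + \eta$, the standard variance bound yields
\begin{equation*}
\mathbb{E}\,\|f - \Phi\|_2^2 \le \frac{(\|f\|_\mathcal{B} + \eta)^2}{N_1 - 1}.
\end{equation*}
The theorem's $(3 + \epsilon)$-factor is then obtained by absorbing $\eta$ and $N_1/(N_1-1)$ into $\epsilon$ and by paying a Cauchy--Schwarz split $(a+b+c)^2 \le 3(a^2+b^2+c^2)$ that apportions the squared error among the active ReLU part, the linear correction, and the constant correction. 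A realization achieving at most the expected error exists by the probabilistic method, and the subsequent optimization of $(W_{L+1}, b_{L+1})$ from \Cref{def:sampled_network_paper} can only decrease the error.

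The main obstacle is controlling the Barron mass whose hyperplane lies outside $\X$, because those ReLUs cannot be realized directly as sampled neurons. The delicate accounting needed is to show (i) the aggregate affine correction has coefficients controlled by $\|f\|_\mathcal{B}$, (ii) a single extra sampled neuron together with the outer bias realizes it exactly without pushing any sample point outside the expanded domain, and (iii) the squared-error budget still fits within $(3+\epsilon)\|f\|_\mathcal{B}^2 / N_1$. A secondary subtlety is trimming $\mu$ to its active support without inflating $\sup_{\text{supp}(\mu)}|w_2|(\|w_1\|_1 + |b|)$, which I expect to handle by a minor re-weighting before the Monte Carlo step.
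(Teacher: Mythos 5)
Your route is genuinely different from the paper's, and in outline it works. The paper's proof is a two-step reduction: it first cites the direct approximation theorem of E et al.\ to obtain an \emph{ordinary} one-hidden-layer ReLU network $\Hat\Phi$ with $N_1$ neurons satisfying $\lVert f-\Hat\Phi\rVert_2^2\le 3\lVert f\rVert_{\mathcal B}^2/N_1$ (this is where the factor $3$ lives), and then invokes the already-proved universality machinery (\Cref{thm:universality_proof} via the bias- and weight-translation lemmas) to find a \emph{sampled} network $\Phi$ with the same $N_1$ neurons and $\lVert\Phi-\Hat\Phi\rVert_\infty<\sqrt{\epsilon\lVert f\rVert_{\mathcal B}^2/N_1}$; since $[0,1]^D$ is connected no constant blocks are needed, so the neuron count is preserved, and the $\epsilon$ is absorbed by a triangle-inequality step. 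You instead inline the Monte Carlo argument on a near-optimal Barron measure and convert each sampled ridge into a sampled neuron on the fly via $x^{(2)}=x^{(1)}+\delta w_1$, $w^s=w_1/\delta$, $c=\delta w_2$ — which is exactly the reparametrization the paper isolates in \Cref{lemma:pos_const_weights} and \Cref{lemma:weights_w_fixed_bias}, here rediscovered and fused with the variance bound. What your approach buys is self-containedness and a potentially sharper constant (the raw variance bound gives $(\lVert f\rVert_{\mathcal B}+\eta)^2/(N_1-1)$ for the active part, with the affine remainder realized exactly); what the paper's approach buys is brevity and reuse of lemmas it needs anyway.

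Two loose ends you should tighten. First, the domain bookkeeping: in this theorem $\X=[0,1]^D$ \emph{is} the set from which pairs are drawn, so you cannot appeal to a further $\epsilon_I$-expansion; you need $x^{(1)}$ and $x^{(2)}$ both in $[0,1]^D$. This is fine when the hyperplane meets the open cube (pick $x^{(1)}$ interior, then $x^{(2)}=x^{(1)}+\delta w_1$ stays inside for small $\delta$), and when it only touches the boundary the neuron is either a.e.\ zero or affine on $\X$ and falls into your correction bucket — but that case split should be made explicit. Second, your factor-$3$ accounting is muddled: if the affine and constant corrections are realized exactly (as you claim), there is no three-way error to apportion and the Cauchy--Schwarz split is unnecessary; if they are not exact, you have not shown their error is controlled. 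Either way the budget $(3+\epsilon)\lVert f\rVert_{\mathcal B}^2/N_1$ is generous enough to absorb $N_1/(N_1-1)$ and $\eta$, but the $N_1=1$ edge case (no Monte Carlo neurons left) needs its own one-line treatment, e.g.\ via $\lVert f\rVert_\infty\le\lVert f\rVert_{\mathcal B}$ on $[0,1]^D$ and the free outer bias.
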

\textit{Sketch of proof:} It quickly follows from the results of~\citet{e-2022}, which showed it for regular neural networks, and \Cref{thm:universality_proof}. The full proof can be found in \Cref{sec:barron}.

Up until now, we have been concerned with the space of sampling networks, but not with the distribution of the parameters. We found that putting emphasis on points that are close and differ a lot with respect to the output of the true function works well. As we want to sample layers sequentially, and neurons in each layer independently from each other, we define a layer-wise conditional definition underneath. The norms \(\lVert \cdot \rVert_{\X_{l-1}}\) and \(\lVert \cdot \rVert_{\Y}\), that defines the following densities, are arbitrary over their respective space, denoted by the subscript.
\begin{definition}\label{def:sampling distribution}
   Let \(f \colon \R^D \rightarrow \R^{N_{L+1}}\) be Lipschitz-continuous and set \(\Y = f(\X)\). For any \(l \in \{1,2,\dots,L\}\),  setting \(\epsilon=0\) when \(l=1\) and otherwise \(\epsilon>0\), we define 
  \begin{align}\label{eq:density_paper}
      q^\epsilon_l\left(x^{(1)}_0, x^{(2)}_0\mid \{W_{j}, b_{j}\}_{j=1}^{l-1}\right) = 
      \begin{cases}
          \cfrac{\lVert f(x^{(2)}_0) - f(x^{(1)}_0)\rVert_{\Y}}{\max\{ \lVert x^{(2)}_{l-1}-\ x^{(1)}_{l-1} \rVert_{\X_{l-1}}, \epsilon\}}, & x^{(1)}_{l-1} \neq  x^{(2)}_{l-1}\\
          0, & \text{otherwise},
      \end{cases}
  \end{align}
  where  \(x^{(1)}_{0}, x^{(2)}_{0}\in\X \), \( x^{(1)}_{l-1} = \Phi^{(l-1)}(x^{(1)}_0)\), and \(x^{(2)}_{l-1} = \Phi^{(l-1)}(x^{(2)}_0)\), with the network \(\Phi^{(l-1)}\) parameterized by sampled \(\{W_{j}, b_{j}\}_{j=1}^{l-1}\). 
  Then, using \(\lambda\) as the Lebesgue measure, we define the integration constant $C_l=\int_{\X\times\X} q^\epsilon_l d\lambda$.  The density \(p_l^\epsilon\) to sample pairs of points for weights and biases in layer \(l\) is equal to $q^\epsilon_l/C_l$ if $C_l>0$, and uniform over \(\X\times\X\) otherwise.
\end{definition}
Note that here, a distribution over pair of points is equivalent to a distribution over weights and biases, and the additional \(\epsilon\) is a regularization term. Now we can sample for each layer sequentially, starting with $l=1$, using the conditional density \(p_l^\epsilon\). This induces a probability distribution \(P\) over the full parameter space, which, with the given regularity conditions on \(\X\) and \(f\), is a valid probability distribution. For a complete definition of \(P\) and proof of validity, see \Cref{sec:distribution}. 

Using this distribution also comes with the benefit that sampling and training are not affected by rigid body transformations (affine transformation with orthogonal matrix) and scaling, as long as the true function \(f\) is equivariant w.r.t. to the transformation. That is, if \(H\) is such a transformation, we say \(f\) is equivariant with respect to \(H\), if there exists a scalar and rigid body transformation \(H'\) such that \(H'(f(x)) = f(H(x))\) for all \(x\in\X\), and invariant if \(H'\) is the identity function. We also assume that norms \(\lVert \cdot \rVert_{\Y}\) and \(\lVert \cdot \rVert_{\X_0}\) in \Cref{eq:density_paper} are chosen such that orthogonal matrix multiplication is an isometry. 
\begin{theorem}
  Let \(f\) be the target function and equivariant w.r.t. to a scalar and rigid body transformation \(H\). If we have two sampled networks, \(\Phi, \Hat \Phi\), with the same number of hidden layers \(L\) and neurons \(N_1,\dots, N_L\), where \(\Phi \colon \X \rightarrow \R^{N_{L+1}}\) and \(\Hat \Phi\colon H(\X) \rightarrow \R^{N_{L+1}}\), then the following statements hold for all \(x\in\X\): 
  \begin{enumerate}[(1)]
      \item If \(\Hat x^{(1)}_{0,i} = H(x^{(1)}_{0,i})\) and \(\Hat x^{(2)}_{0,i} = H(x^{(2)}_{0,i})\) for all \(i=1,2,\dots, N_1\), then \({\Phi^{(1)}(x) = \Hat \Phi^{(1)}(H(x))}\). 
      \item If \(f\) is invariant w.r.t. \(H\), then for any parameters of \(\Phi\), there exist parameters of \(\Hat \Phi\) such that \(\Phi(x)=\Hat\Phi(H(x))\), and vice versa. 
      \item The probability measure \(P\) over the parameters is invariant under \(H\). 
  \end{enumerate}
\end{theorem}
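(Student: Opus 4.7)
The plan is to handle the three parts in order, writing $H(x)=cQx+t$ with $Q$ orthogonal, $c>0$ the scale factor, and $t\in\R^D$ the translation, and similarly $H'(y)=c'Q'y+t'$ on the output side (so an orthogonal transformation corresponds to $c=1$, $t=0$).

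For part (1), I would substitute $\hat x^{(j)}_{0,i}=H(x^{(j)}_{0,i})$ directly into the formulas of \Cref{def:sampled_network_paper}. The difference $\hat x^{(2)}_{0,i}-\hat x^{(1)}_{0,i}=cQ(x^{(2)}_{0,i}-x^{(1)}_{0,i})$ together with orthogonality of $Q$ gives $\hat w_{1,i}=\tfrac{1}{c}Q w_{1,i}$, and then $\hat b_{1,i}=b_{1,i}+\langle\hat w_{1,i},t\rangle$. Expanding the pre-activation $\langle\hat w_{1,i},H(x)\rangle-\hat b_{1,i}$, the translation $t$ cancels and the rotation is absorbed by $Q^\top Q=I$, so the expression collapses to $\langle w_{1,i},x\rangle-b_{1,i}$. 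Applying $\phi$ componentwise finishes (1).

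For part (2) I would induct on $l$, using (1) as the base case. In the inductive step, take the layer-$l$ input pairs of $\hat\Phi$ to be the $H$-images of those of $\Phi$; by the induction hypothesis $\hat\Phi^{(l-1)}\circ H=\Phi^{(l-1)}$, so the intermediate features satisfy $\hat x^{(j)}_{l-1,i}=x^{(j)}_{l-1,i}$ exactly. The formulas in \Cref{def:sampled_network_paper} then produce $\hat W_l=W_l$ and $\hat b_l=b_l$, forcing $\hat\Phi^{(l)}\circ H=\Phi^{(l)}$. For the linear output layer, the invariance assumption $f\circ H=f$ combined with the change of variables $z=H(x)$ reduces the optimization over $H(\X)$ to the original optimization up to a constant Jacobian factor, so copying $W_{L+1},b_{L+1}$ is an admissible choice and $\hat\Phi\circ H=\Phi$. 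The reverse direction follows by the same argument with $H^{-1}$, which is itself a rigid-body plus scaling transformation.

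Part (3) is the main obstacle, because the density $p_l^\epsilon$ is conditional and nested, so the change of variables must be tracked carefully through the layers. I would proceed inductively in $l$. For $l=1$, the numerator of $q_1^0$ scales by $c'$ (equivariance of $f$ and isometry of $\|\cdot\|_\Y$ under $Q'$) and the denominator scales by $c$ (isometry of $\|\cdot\|_{\X_0}$ under $Q$), so $q_1^0\circ(H\times H)^{-1}=(c'/c)\,\hat q_1^0$. Folding in the $c^{2D}$ Jacobian of $(H,H)$ gives $\hat C_1=c'c^{2D-1}C_1$, and hence $(H\times H)_*\,p_1^0=\hat p_1^0$. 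For $l\ge2$, the argument from part (1) shows that once the induced parameter action has been applied to the previous layers, $\hat\Phi^{(l-1)}\circ H=\Phi^{(l-1)}$, so the feature-space denominator of $q_l^\epsilon$ is exactly preserved; only the numerator rescales by $c'$, and the same Jacobian calculation yields $(H\times H)_*\,p_l^\epsilon=\hat p_l^\epsilon$. Composing with the explicit parameter map from part (1) and iterating through layers gives $\hat P=\tilde H_*P$, which is the claimed invariance of $P$ under $H$. The delicate bookkeeping is keeping the constants $c,c'$, the $c^{2D}$ pair Jacobian, and the induced parameter action consistent across the nested conditional factors of $P$; this is where most of the care is required.
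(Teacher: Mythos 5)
Your proposal is correct and follows essentially the same route as the paper: part (1) by direct substitution into the weight/bias formulas so that translation, rotation, and scale cancel; part (2) by induction using (1) and the observation that the hidden features (hence the sampling spaces and the final least-squares problem) coincide exactly from the second layer on; and part (3) by the layer-wise rescaling of numerator and denominator of \(q_l\) together with the normalization constants. If anything, your treatment of part (3) is slightly more careful than the paper's, since you track the \(c^{2D}\) Jacobian of the pair map explicitly when relating \(\Hat C_1\) to \(C_1\) and state the conclusion as a pushforward identity, whereas the paper's computation of the constants elides that factor (your stray \(c'/c\) versus \(c/c'\) in the intermediate identity is a harmless inversion that washes out in the normalization).
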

\textit{Sketch of proof:} Any neuron in the sampled network can be written as \(\phi(\langle s_1\frac{w}{\lVert w\rVert^2}, x-x^{(1)} \rangle - s_2)\). As we divide by the square of the norm of \(w\), the scalar in \(H\) cancels. There is a difference between two points in both inputs of \(\langle \cdot, \cdot\rangle\), which means the translation cancels. Orthogonal matrices cancel due to isometry. When \(f\) is invariant with respect to \(H\), the loss function is also unchanged and lead to the same output. Similar argument is made for \(P\), and the theorem follows (cf.~\Cref{sec:distribution}).

If the input is embedded in a higher-dimensional ambient space \(\R^{D'}\), with \(D < D'\), we sample from a subspace with dimension \(\Tilde D = \dim(\overline{\spn\{\X\}}) \leq D'\). All the results presented in this section still hold due to orthogonal decomposition. However, the standard approach of backpropagation and initialization allows the weights to take on any value in \(\R^{D'}\), which implies a lot of redundancy when \(\Tilde D \ll D'\). The biases are also more relevant to the input space than when initialized to zero --- potentially avoiding the issues highlighted by~\citet{holzmuller-22}. For these reasons, we have named the proposed method Sampling Where It Matters (SWIM), which is summarized in \Cref{alg:sampling}. For computational reasons, we consider a random subset of all possible pairs of training points when sampling weights and biases. 

\begin{algorithm}
  \caption{The SWIM algorithm, for activation function \(\phi\), and norms on input, output of the hidden layers, and output space, \(\lVert\cdot\rVert_{\X_0}\),\(\lVert \cdot \rVert_{\X_{l}}\), and \(\lVert \cdot \rVert_\Y\) respectively. Also, \(\mathcal{L}\) is a loss function, which in our case is always \(L^2\) loss, and \(\argmin \mathcal{L}(\cdot, \cdot)\) becomes a linear optimization problem.}\label{alg:sampling}
    \Constant{\(\epsilon\in \R_{>0}\), \(\varsigma \in \N_{>0}\), \(L\in \N_{>0}\), \(\{N_l\in \N_{>0}\}_{l=1}^{L+1}\), and \(s_1,s_2 \in \R\)}
  \KwData{\(X = \{x_i \colon x_i \in \R^D, i=1,2,\dots,M\}\), \(Y= \{y_i \colon f(x_i) = y_i \in \R^{N_{L+1}}, i=1,2,\dots,M\}\)}
  \(\Phi^{(0)}(x) = x\)\;
  
  \For{\(l=1,2,\dots, L\)} {
    \(\Tilde M \gets \varsigma \cdot \lceil\frac{N_l}{M}\rceil \cdot M\) \;
    \(P^{(l)} \in \R^{\Tilde M}\); \(P^{(l)}_{i} \gets 0 \quad \forall i\)\;
    \(\Tilde X = \{(x_i^{(1)}, x_i^{(2)}) \colon \Phi^{(l-1)}(x_i^{(1)}) \neq \Phi^{(l-1)}(x_i^{(2)})\}_{i=1}^{\Tilde M} \sim \text{Uniform}(X\times X)\)\;
    \For{\(i=1,2,\dots,  \Tilde M\)}{
    \(\Tilde x_i^{(1)}, \Tilde x_i^{(2)} \gets \Phi^{(l-1)}(x_i^{(1)}),\Phi^{(l-1)}(x_i^{(2)})\)\;
    \(\Tilde y_i^{(1)}, \Tilde y_i^{(2)} = f(x_i^{(1)}), f(x_i^{(2)})\)\;
    \(P^{(l)}_{i} \gets \frac{\lVert \Tilde y_i^{(2)}-\Tilde y_i^{(1)}\rVert_{\Y}}{\max\{\lVert \Tilde x_i^{(2)} -\Tilde x_i^{(1)} \rVert_{\X_{l-1}},\epsilon\}}\)\;
    }

    \(W_{l} \in \R^{N_l, N_{l-1}}, b_{l}\in \R^{N_l}\)\;
    \For{\(k=1,2,\dots,N_l\)}{
      Sample \((x^{(1)}, x^{(2)})\) from \(\Tilde X\), with replacement and with probability proportional to \(P^{(l)}\)\;
      \(\Tilde x^{(1)}, \Tilde x^{(2)} \gets \Phi^{(l-1)}(x^{(1)}), \Phi^{(l-1)}(x^{(2)})\)\;
      \(W_{l}^{(k,:)} \gets s_1\frac{\Tilde x^{(2)}-\Tilde x^{(1)}}{\lVert \Tilde x^{(2)}-\Tilde x^{(1)}\rVert^2}\); \(b_{l}^{(k)} \gets \langle W_{l}^{(k,:)}, \Tilde x^{(1)}\rangle + s_2\)\; 
    }
    \(\Phi^{(l)}(\cdot) \gets \phi(W_{l}\,\Phi^{(l-1)}(\cdot) - b_{l})\)\;
   }
 
  \(W_{L+1}, b_{L+1} \gets \argmin \mathcal{L}(\Phi^{(L)}(X), Y)\)\;
  \Return \(\{W_{l}, b_{l}\}_{l=1}^{L+1}\)
\end{algorithm}
We end this section with a time and memory complexity analysis of \Cref{alg:sampling}. In \Cref{tab:complexity_pap}, we list runtime and memory usage for three increasingly strict assumptions. The main assumption is that the dimension of the output is less than or equal to the largest dimension of the hidden layers. This is true for the problems we consider, and the difference in runtime without this assumption is only reflected in the linear optimization part. 
The term \(\lceil N/M \rceil\), i.e., integer ceiling of \(N/M\), is required because only a subset of points are considered when sampling. 
For the full analysis, see \Cref{appendix:code_repository}.

\begin{table}\caption{Runtime and memory requirements for training sampled neural networks with the SWIM algorithm, where \({N=\max\{N_0, N_1,N_2, \dots, N_L\}}\). Assumption (I) is that the output dimension is less than or equal to \(N\). Assumption (II) adds that \(N < M^2\), i.e., number of neurons and input dimension is less than the size of dataset squared. Assumption (III) requires a fixed architecture.}
  \begin{center}
      \begin{tabular}{c c c }
          & Runtime  & Memory \\
      \toprule
         Assumption (I)  & \(\mathcal{O}(L\cdot M(\min\{\lceil N/M\rceil, M\}+ N^2))\) & \(\mathcal{O}(M\cdot \min \{\lceil N/M\rceil, M\} + L N^2)\)  \\
          Assumption (II) & \({\mathcal{O}(L\cdot  M( \lceil N/M\rceil + N^2))}\) & \(\mathcal{O}(M\cdot \lceil N/M\rceil + L N^2)\) \\
           Assumption (III) &\({\mathcal{O}(M)}\) & \(\mathcal{O}(M)\)
      \end{tabular}
  \end{center}
  \label{tab:complexity_pap}
\end{table}

\section{Numerical experiments}
We now demonstrate the performance of~\Cref{alg:sampling} on numerical examples. Our implementation is based on the numpy and scipy Python libraries, and we run all experiments on a machine with 32GB system RAM (256GB in~\Cref{sec:deep_neural_operators} and ~\Cref{sec:transfer_learning}) and a GeForce 4x RTX 3080 Turbo GPU with 10GB RAM. The Appendix contains detailed information on all experiments.
In~\Cref{sec:illustrative_barron} we compare sampling to random Fourier feature models regarding the approximation of a Barron function.
In~\Cref{sec:openml_benchmark} we compare classification accuracy of sampled networks to iterative, gradient-based optimization in a classification benchmark with real datasets.
In~\Cref{sec:deep_neural_operators} we demonstrate that more specialized architectures can be sampled, by constructing deep neural architectures as PDE solution operators.
In \Cref{sec:transfer_learning} we show how to use sampling of fully-connected layers for transfer learning.
For the probability distribution over the pairs in \Cref{alg:sampling}, we always choose the \(L^\infty\) norm for \(\lVert \cdot \rVert_{\Y}\) and for \(l=1,2,\dots,L\), we choose the \(L^2\) norm for \(\lVert \cdot \rVert_{\X_{l-1}}\).
The code to reproduce the experiments from the paper, and an up-to-date code base, can be found at 
\begin{center}
\url{https://gitlab.com/felix.dietrich/swimnetworks-paper},\\
\url{https://gitlab.com/felix.dietrich/swimnetworks}.
\end{center}

\subsection{Illustrative example: approximating a Barron function}\label{sec:illustrative_barron}
We compare random Fourier features and our sampling procedure on a test function for neural networks~\cite{wojtowytsch-2020}: \(f(x)=\sqrt{3/2}(\|x-a\|-\|x+a\|),\) with \(x\in\mathbb{R}^D\) and the vector \(a\in\mathbb{R}^D\) defined by \(a_j=2j/D-1\), \(j=1,2,\dots,D\).
The Barron norm of \(f\) is equal to one for all input dimensions, and it can be represented exactly with a network with one infinitely wide hidden layer, ReLU activation, and weights uniformly distributed on a sphere of radius \(D^{1/2}\).
We approximate \(f\) using networks \(\hat{f}\) of up to three hidden layers.
The error is defined by \(e_\text{rel}^2=\sum_{x\in \X} (f(x)-\hat{f}(x))^2 / \sum_{x\in \X} f(x)^2\).
We compare this error over the domain \(\X=[-1,1]^2\), with \(10,000\) points sampled uniformly, separately for training and test sets. 
For random features, we use \(w\sim N(0,1),\ b\sim U(-\pi,\pi)\), as proposed in~\cite{rahimi-2008}, and \(\phi=\sin\).
For sampling, we also use \(\phi=\sin\) to obtain a fair comparison. 
We also observed similar accuracy results when repeating the experiment with the \(\tanh\) function.
The number of neurons $m$ is the same in each hidden layer and ranges from \(m=64\) up to \(m=4096\).
\Cref{fig:barron_illustration} shows results for \(D=5,10\) (results are similar for \(D=2, 3, 4\), and sampled networks can be constructed as fast as the random feature method, cf.~\Cref{appendix:barron_illustration}).
\begin{figure}[ht!]
    \centering
    \includegraphics[width=.49\textwidth]{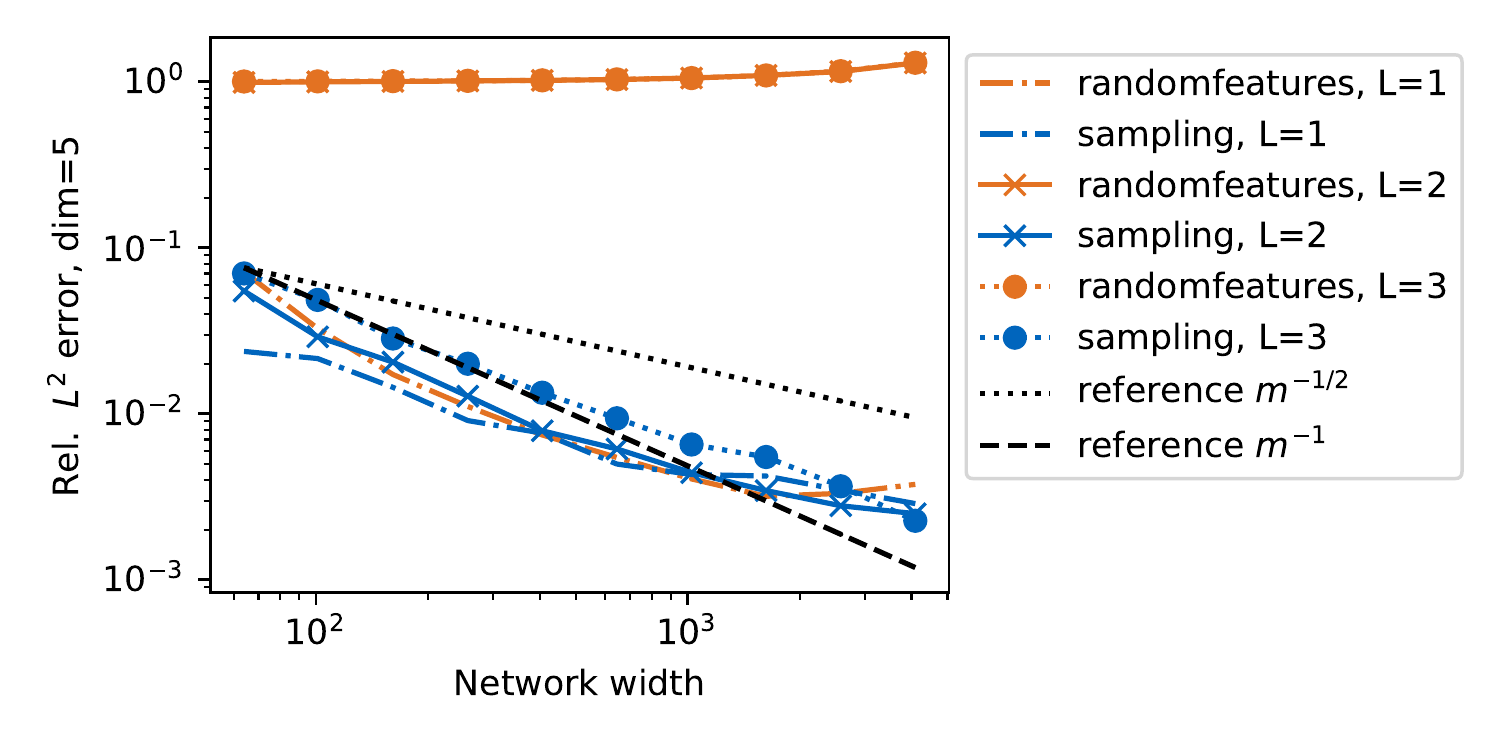}
    \includegraphics[width=.49\textwidth]{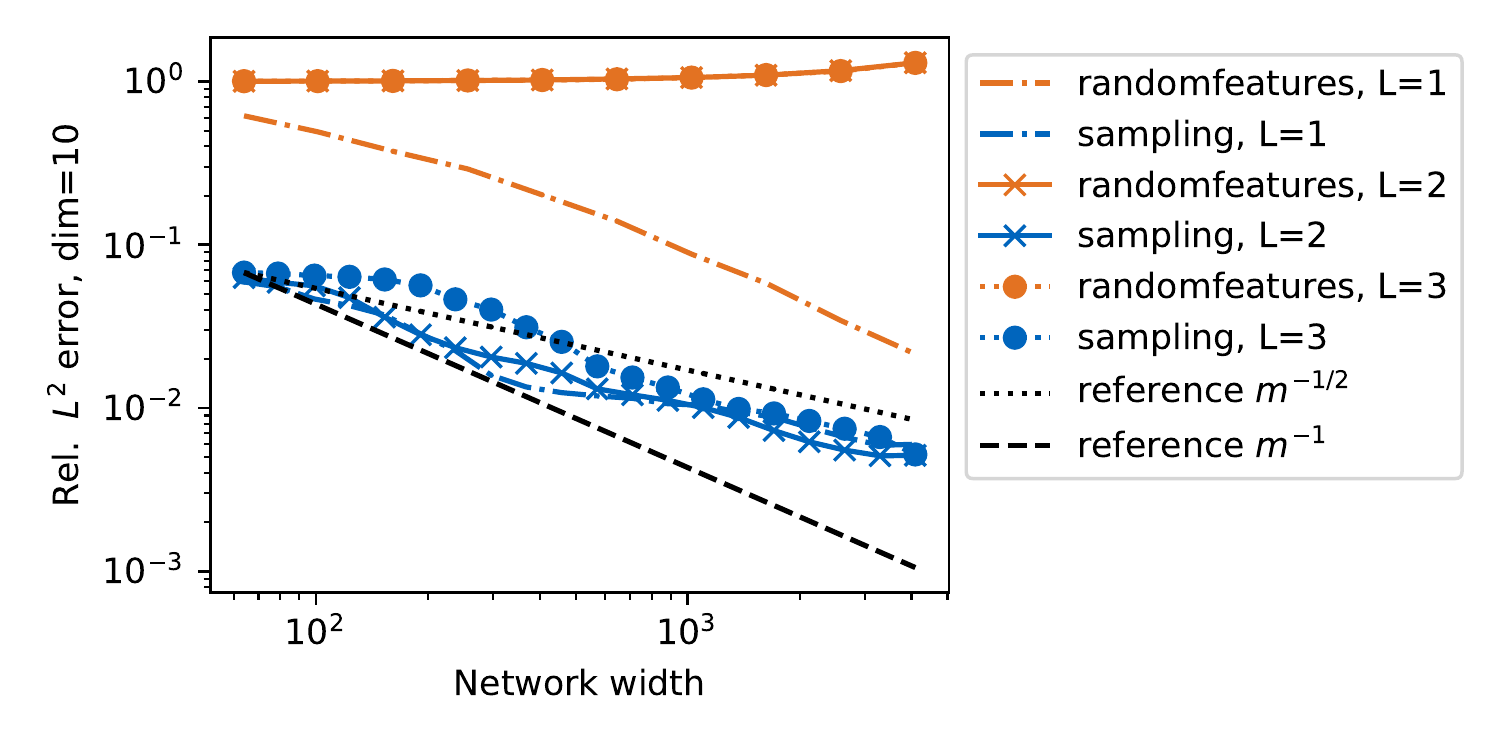}
    \caption{Relative $L^2$ approximation error of a Barron function (test set), using random features and sampling, both with sine activation. Left: input dimension $D=5$. Right: input dimension $D=10$.}
    \label{fig:barron_illustration}
\end{figure}
%
%
%
Random features here have comparable accuracy for networks with one hidden layer, but very poor performance for deeper networks.
This may be explained by the much larger ambient space dimension of the data after it is processed through the first hidden layer.
With our sampling method, we obtain accurate results even with more layers. 
The convergence rate for \(D<10\) seems to be faster than the theoretical rate.

\subsection{Classification benchmark from OpenML}\label{sec:openml_benchmark}
We use the ``OpenML-CC18 Curated Classification benchmark''~\cite{bischl-2021} with all its 72 tasks to compare our sampling method to the Adam optimizer~\cite{kingma-2015}.
With both methods, we separately perform neural architecture search, changing the number of hidden layers from $1$ to $5$. All layers always have \(500\) neurons. Details of the training are in~\Cref{appendix:openml}.
\Cref{fig:openml_sampling_vs_adam} shows the benchmark results.
On all tasks, sampling networks is faster than training them iteratively (on average, \(30\) times faster). The classification accuracy is comparable (cf.~\Cref{fig:openml_sampling_vs_adam}, second and third plot). The best number of layers for each problem is slightly higher for the Adam optimizer~(cf.~\Cref{fig:openml_sampling_vs_adam}, fourth plot).
\begin{figure}
    \centering
    \includegraphics[width=1\textwidth]{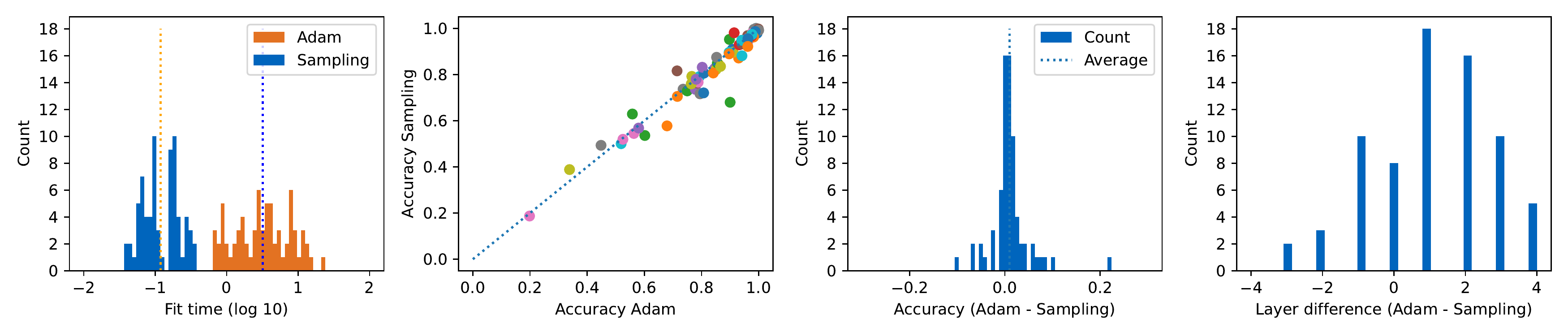}
    \caption{Fitting time, accuracy, and number of layers using weight sampling, compared to training with the Adam optimizer. The best architecture is chosen separately for each method and each problem, by evaluating 10-fold cross-validation error over \(1\)-\(5\) layers with 500 neurons each.}
    \label{fig:openml_sampling_vs_adam}
\end{figure}

\subsection{Deep neural operators}\label{sec:deep_neural_operators}
We sample deep neural operators and compare their speed and accuracy to iterative gradient-based training of the same architectures.
%
As a test problem, we consider Burgers' equation,
\(
    \frac{\partial u}{\partial t} + u\frac{\partial u}{\partial x} = \nu\frac{\partial^2 u}{\partial^2 x}, x\in(0, 2\pi), t\in(0, 1],
\)
with periodic boundary conditions and viscosity $\nu = 0.1$. The goal is to predict the solution at $t=1$ from the initial condition at $t=0$. Thus, we construct neural operators that represent the map $\mathcal{G}: u(x, 0) \to u(x, 1)$.
We generate initial conditions by sampling five Fourier coefficients of lowest frequency and restoring the function values from these coefficients. Using a classical numerical solver, we generate 15000 pairs of $(u(x, 0), u(x, 1))$, and split them into the train (60\%), validation (20\%), and test sets (20\%). 
\Cref{fig:burgers_figure} shows samples from the generated dataset.

\subsubsection{Fully-connected network in signal space}
The first baseline for the task is a fully-connected network (FCN) trained with tanh activation to predict the discretized solution from the discretized initial condition. We trained the classical version using the Adam optimizer and the mean squared error as a loss function. We also performed early stopping based on the mean relative $L^2$-error on the validation set.
\textbf{For sampling}, we use \Cref{alg:sampling} to construct a fully-connected network with tanh as the activation function.

\subsubsection{Fully-connected network in Fourier space}
Similarly to \citet{poli-2022}, we train a fully-connected network in Fourier space. For training, we perform a Fourier transform on the initial condition and the solution, keeping only the lowest frequencies. We always split complex coefficients into real and imaginary parts, and train a standard FCN on the transformed data. The reported metrics are in signal space, i.e., after inverse Fourier transform.
\textbf{For sampling}, we perform exactly the same pre-processing steps. 

\subsubsection{POD-DeepONet}
The third architecture considered here is a variation of a deep operator network (DeepONet) architecture \cite{lu-2019}. The original DeepONet consists of two trainable components: the trunk net, which transforms the coordinates of an evaluation point, and the branch net, which transforms the function values on some grid. The outputs of these nets are then combined into the predictions of the whole network \(\mathcal{G}(u)(y) = \sum_{k=1}^pb_k(u)t_k(y) + b_0,\)
where $u$ is a discretized input function; $y$ is an evaluation point; $[t_1, \dots, t_p]^T \in \mathbb{R}^p$ are the $p$ outputs of the trunk net; $[b_1, \dots, b_p]^T \in \mathbb{R}^p$ are the $p$ outputs of the branch net; and $b_0$ is a bias. DeepONet sets no restrictions on the architecture of the two nets, but often fully-connected networks are used for one-dimensional input.
POD-DeepONet proposed by \citet{lu-2022} first assumes that evaluation points lie on the input grid. It performs proper orthogonal decomposition (POD) of discretized solutions in the train data and uses its components instead of the trunk net to compute the outputs
\(
    \mathcal{G}(u)(\xi_j) = \sum_{k=1}^pb_k(u)\psi_k(\xi_j)\ + \psi_0(\xi_j).
\)
Here $[\psi_1(\xi_j), \dots, \psi_p(\xi_j)]$ are $p$ precomputed POD components for a point $\xi_j$, and $\psi_0(\xi_j)$ is the mean of discretized solutions evaluated at $\xi_j$. Hence, only the branch net is trained in POD-DeepONet. We followed \citet{lu-2022} and applied scaling of $1/p$ to the network output. 
\textbf{For sampling}, we employ orthogonality of the components and turn POD-DeepONet into a fully-connected network. Let $\xi = [\xi_1,\dots , \xi_n]$ be the grid used to discretize the input function $u$ and evaluate the output function $\mathcal{G}(u)$. Then the POD components of the training data are $\Psi(\xi) = [\psi_1(\xi), \dots, \psi_p(\xi)] \in \mathbb{R}^{n\times p}$. If $b(u) \in \mathbb{R}^p$ is the output vector of the trunk net, the POD-DeepONet transformation can be written 
\(\mathcal{G}(u)(\xi) = \Psi b(u) + \psi_0(\xi).\)
As $\Psi^T\Psi = I_p$, we can express the output of the trunk net as $b(u) = \Psi^T(\mathcal{G}(u)(\xi) - \psi_0(\xi))$. Using this equation, we can transform the training data to sample a fully-connected network for $b(u)$. We again use tanh as the activation function for sampling.

\subsubsection{Fourier Neural Operator}
The concept of a Fourier Neural Operator (FNO) was introduced by \citet{li-2020a} to represent maps in function spaces. An FNO consists of Fourier blocks, combining a linear operator in the Fourier space and a skip connection in signal space.
As a first step, FNO lifts an input signal to a higher dimensional channel space. Let $v_t\in\mathbb{R}^{n \times d_v}$ be an input to a Fourier block having $d_v$ channels and discretized with $n$ points. Then, the output $v_{t+1}$ of the Fourier block is computed as \(
    v_{t+1} = \phi(F_k^{-1}(R \cdot F_k(v_t)) + W \circ v_t) \in \mathbb{R}^{n \times d_v}.
\)
Here, $F_k$ is a discrete Fourier transform keeping only the $k$ lowest frequencies and $F^{-1}_k$ is the corresponding inverse transform; $\phi$ is an activation function; $\cdot$ is a spectral convolution; $\circ$ is a $1 \times 1$ convolution with bias; and $R \in \mathbb{C}^{k \times d_v \times d_v}$, $W \in \mathbb{R}^{d_v \times d_v}$ are learnable parameters.
FNO stacks several Fourier blocks and then projects the output signal to the target dimension. The projection and the lifting operators are parameterized with neural networks.
\textbf{For sampling}, the construction of convolution kernels is not possible yet, so we cannot sample FNO directly. Instead, we use the idea of FNO to construct a neural operator with comparable accuracy.
Similar to the original FNO, we normalize the input data and append grid coordinates to it before lifting. Then, we draw the weights from a uniform distribution on \([-1, 1]\) to compute the $1\times 1$ lifting convolution.
%
We first apply the Fourier transform to both input and target data, and then train a fully-connected network for each channel in Fourier space.
We use skip connections, as in the original FNO, by removing the input data from the lifted target function during training, and then add it before moving to the output of the block.
%
After sampling and transforming the input data with the sampled networks, we apply the inverse Fourier transform. After the Fourier block(s), we sample a fully-connected network that maps the signal to the solution.


\begin{figure}
    \centering
    \begin{minipage}{0.3\textwidth}
        \begin{flushleft}
        \includegraphics[width=0.95\textwidth]{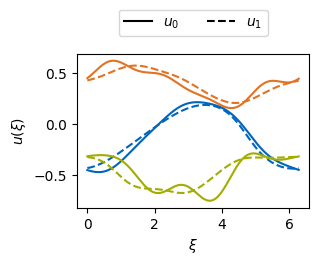}
        \end{flushleft}
    \end{minipage}
    \begin{minipage}{0.68\textwidth}
        \begin{flushright}
        \scriptsize
        \begin{tabular}{ccccccc}
            & Model & width & layers & mean rel. $L^2$ error & Time &  \\
            \hline 
            \parbox[t]{2mm}{\multirow{4}{*}{\rotatebox[origin=c]{90}{Adam}}}
             & FCN; signal space & 1024 & 2& $4.48\times 10^{-3}$ & 644s & \parbox[t]{2mm}{\multirow{4}{*}{\rotatebox[origin=c]{-90}{GPU}}}\\
             & FCN; Fourier space & 1024 & 1 & $3.29\times10^{-3}$ & 1725s &\\
             & POD-DeepONet & 2048 & 4 & $1.62\times10^{-3}$ & 4217s &\\
             & FNO & n/a & 4 & $\mathbf{0.38\times10^{-3}}$ & 3119s &\\
             \hline
             \parbox[t]{2mm}{\multirow{4}{*}{\rotatebox[origin=c]{90}{Sampling}}} &  FCN; signal space & 4096 & 1 & $0.92\times10^{-3}$ & 20s & \parbox[t]{2mm}{\multirow{4}{*}{\rotatebox[origin=c]{-90}{CPU}}}\\
             & FCN; Fourier space & 4096 & 1 & $1.08\times10^{-3}$ & 16s &\\
             & POD-DeepONet & 4096 & 1 &  $\mathbf{0.85\times10^{-3}}$ & 21s & \\
             & FNO & 4096 & 1 & $0.94\times10^{-3}$ & 387s &\\
             \hline
        \end{tabular}
        \end{flushright}
    \end{minipage}
    \caption{Left: Samples of initial conditions $u_0$ and corresponding solutions $u_1$ for Burgers' equation. Right: Parameters of the best model for each architecture, the mean relative $L^2$ error on the test set, and the training time. We average the metrics across three runs with different random seeds.}
    \label{fig:burgers_figure}
\end{figure}

The results of the experiments in \Cref{fig:burgers_figure} show that sampled models are comparable to the Adam-trained ones. 
The sampled FNO model does not directly follow the original FNO architecture, as we are able to only sample fully-connected layers. This shows the advantage of gradient-based methods: as of now, they are applicable to much broader use cases.
These experiments showcase one of the main advantages of sampled networks: speed of training. We run sampling on the CPU; nevertheless, we see a significant speed-up compared to Adam training performed on GPU.

\subsection{Transfer learning}\label{sec:transfer_learning}
Training deep neural networks from scratch involves finding a suitable neural network architecture \cite{elsken2019neural} and hyper-parameters \cite{bergstra2011algorithms,yang2020hyperparameter}.
Transfer learning aims to improve performance on the target task by leveraging learned feature representations from the source task. This has been successful in image classification \cite{kim2022transfer}, multi-language text classification, and sentiment analysis \cite{weiss2016survey,chan2023state}. Here, we compare the performance of sampling with iterative training on an image classification transfer learning task. 
%
We choose the CIFAR-10 dataset~\cite{krizhevsky-2009}, with 50000 training and 10000 test images. Each image has dimension \(32 \times 32 \times 3\) and must be classified into one of the ten classes.
We consider ResNet50 \cite{he2016deep}, VGG19 \cite{simonyan2014very}, and Xception \cite{chollet2017xception}, all pre-trained on the ImageNet dataset~\cite{russakovsky2015imagenet}. %
We freeze the weights of all convolutional layers and append one fully connected hidden layer and one output layer. We refer to these two layers as the classification head, which we sample with \Cref{alg:sampling} and compare the classification accuracy against iterative training with the Adam optimizer.
%
%


\begin{figure}[h]
    \centering
    \includegraphics[width=1\textwidth]{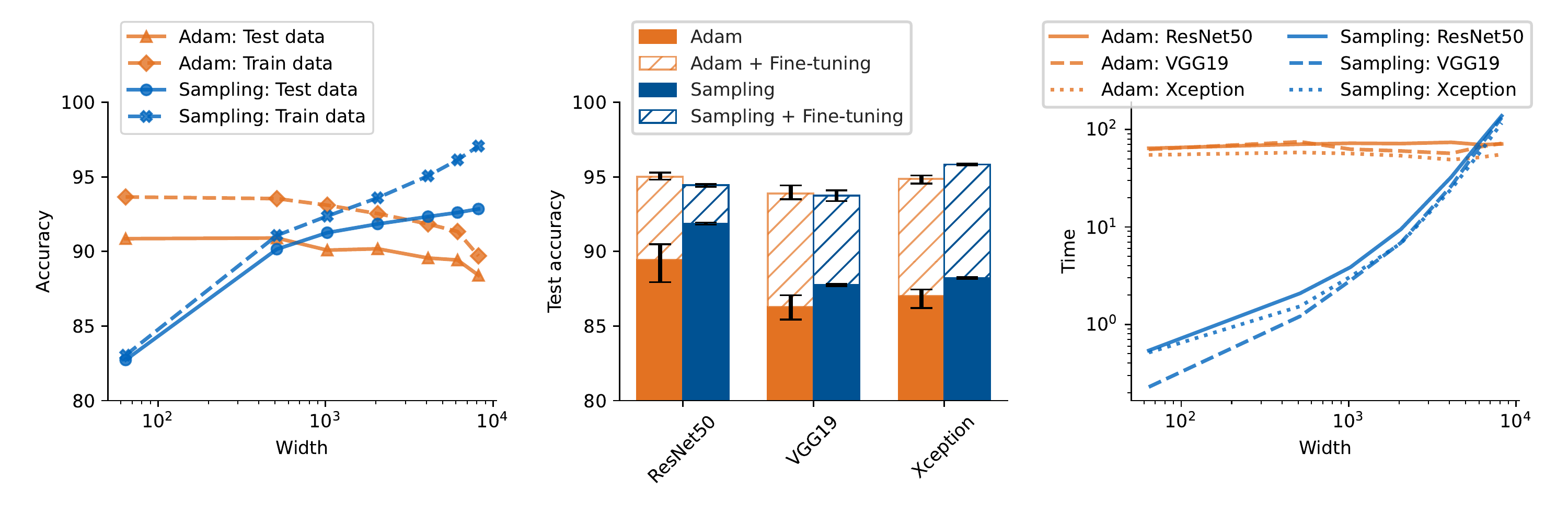}
    \caption{Left: Train and test accuracies with different widths for ResNet50 (averaged over 5 random seeds). Middle: Test accuracy with different models with and without fine-tuning (width = 2048). Right: Adam training and sampling times of the classification head (averaged over 5 experiments).}
    \label{fig:transfer_learning}
\end{figure}

%
Figure \ref{fig:transfer_learning} (left) shows that for a pre-trained ResNet50, the test accuracy using the sampling approach is higher than the Adam training approach for a width greater than 1024. We observe similar qualitative behavior for VGG19 and Xception (figures in~\Cref{appendix:transfer_learning}). 
Figure \ref{fig:transfer_learning} (middle) shows that the sampling approach results in a higher test accuracy with all three pre-trained models. Furthermore, the deviation in test accuracy obtained with the sampling algorithm is very low, demonstrating that sampling is more robust to changing random seeds than iterative training.
After fine-tuning the whole neural network with the Adam optimizer with a learning rate of $10^{-5}$, the test accuracies of sampled networks are close to the iterative approach. 
Thus, sampling provide a good starting point for fine-tuning the entire model. A comparison for the three models before and after fine-tuning is contained in~\Cref{appendix:transfer_learning}.
\Cref{fig:transfer_learning} (right) shows that sampling is up to two orders of magnitude faster than iterative training for smaller widths, and around ten times faster for a width of 2048.
In summary, \Cref{alg:sampling} is much faster than iterative training, yields a higher test accuracy for certain widths before fine-tuning, and is more robust with respect to changing random seeds. The sampled weights also provide a good starting point for fine-tuning of the entire model. 
%

\section{Broader Impact}

Sampling weights through data pairs at large gradients of the target function offers improvements over random feature models. In terms of accuracy, networks with relatively large widths can even be competitive to iterative, gradient-based optimization.
Constructing the weights through pairs of points also allows to sample deep architectures efficiently.
Sampling networks offers a straightforward interpretation of the internal weights and biases, namely, which data pairs are important.
Given the recent critical discussions around fast advancement in artificial intelligence, and calls to slow it down, publishing work that potentially speeds up the development (concretely, training speed) in this area by orders of magnitude may seem irresponsible. 
The solid mathematical underpinning of random feature models and, now, sampled networks, combined with much greater interpretability of the individual steps during network construction, should mitigate some of these concerns.

\section{Conclusion}

We present a data-driven sampling method for fully-connected neural networks that outperforms random feature models in terms of accuracy, and in many cases is competitive to gradient-based optimization.
The time to obtain a trained network is orders of magnitude faster compared to gradient-based optimization.
In addition, much fewer hyperparameters need to be optimized, as opposed to learning rate, number of training epochs, and type of optimizer.

Several open issues remain, we list the most pressing here.
Many architectures like convolutional or transformer networks cannot be sampled with our method yet, and thus must still be trained with iterative methods.
Implicit problems, such as the solution to PDE without any training data, are a challenge, as our distribution over the data pairs relies on known function values from a supervised learning setting. Iteratively refining a random initial guess may prove useful here.
On the theory side, convergence rates for \Cref{alg:sampling} beyond the default Monte-Carlo estimate are not available yet, but are important for robust applications in engineering.

In the future, hyperparameter optimization, including neural architecture search, could benefit from the fast training time of sampled networks. %
We already demonstrate benefits for transfer learning here, which may be exploited for other pre-trained models and tasks.
Analyzing which data pairs are sampled during training may help to understand the datasets better. We did not show that our sampling distribution results in optimal weights, so there is a possibility of even more efficient heuristics.
Applications in scientific computing may benefit most from sampling networks, as accuracy and speed requirements are much higher than for many tasks in machine learning.
\begin{ack}
We are grateful for discussions with Erik Bollt, Constantinos Siettos, Edmilson Roque Dos Santos, Anna Veselovska, and Massimo Fornasier. We also thank the anonymous reviewers at NeurIPS for their constructive feedback.
E.B., I.B., and F.D. are funded by the Deutsche Forschungsgemeinschaft (DFG, German Research Foundation) - project no. 468830823, and also acknowledge association to DFG-SPP-229.
C.D. is partially funded by the Institute for Advanced Study (IAS) at the Technical University of Munich.
F.D. and Q.S. are supported by the TUM Georg Nemetschek Institute - Artificial Intelligence for the Built World.
\end{ack}


{
\small
\bibliographystyle{plainnat}
\bibliography{lib}
}
\clearpage
\setcounter{page}{1}

\appendix
\section*{Appendix}
Here we include full proofs of all theorems in the paper, and then details on all numerical experiments. In the last section, we briefly explain the code repository (URL in section~\Cref{appendix:code_repository}) and provide the analysis of the algorithm runtime and memory complexity.

\DoToC

\section{Sampled networks theory}\label{sec:theory}
In this section, we provide complete proofs of the results stated in the main paper. In \Cref{sec:universality_proofs} we consider sampled networks and the universal approximation property, followed by the result bounding \(L_2\) approximation error between sampled networks and Barron functions in \Cref{sec:barron}. We end with the proof regarding equivariance and invariance in \Cref{sec:distribution}.

We start by defining neural networks and sampled neural networks, and show more in depth the choices of the scalars in sampled networks for ReLU and tanh. 
Let the input space \(\X\) be a subset of \(\R^D\). We now define a fully connected, feed forward neural network; mostly to introduce some notation.
\begin{definition}
    Let \(\phi \colon \R \rightarrow \R \) be non-polynomial, \(L \geq
    1\), \(N_0 = D\), and \(N_1, \dots, N_{L+1} \in \N_{>0}\). A (fully connected)
    neural network with \(L\) hidden layers, activation function
    \(\phi\), and weight space \({\Theta = \{W_l \in \R^{N_{l},
    N_{l-1}}, b_l\in \R^{N_l}\}_{l=1}^{L+1}}\), is a
    function \(\Phi \colon \R^D \rightarrow \R^{N_{L+1}}\), of the form
    \begin{align}
        \Phi(x) = W_{L+1} x_{L} - b_{L+1},
    \end{align}
    where \(x_l\) is defined recursively as \(x_0 = x \in \X\) and 
    \begin{align}
        x_l = \phi(W_l x_{l-1} - b_l), \quad l=1,2,\dots, L.
    \end{align}
\end{definition}
The image after \(l\) layers is denoted as \(\X_l = \Phi^{(l)}(\X)\). 

As we study the space of these networks, we find it useful to introduce the following notation.
The space of all neural networks with \(L\) hidden layers, with \(\bm N = [N_1, \dots, N_{L}]\)
neurons in the separate layers, is denoted as \(\F_{L,\bm N}\), assuming the input dimension and the output dimension are implicitly given. If each hidden
layer has \(N\) neurons, and we let the input dimension and the output
dimension be fixed, we write the space of all such neural
networks as \(\F_{L,N}\). We then let 
\begin{align*}
    \F_{\infty,  N} =  \bigcup_{L=1}^{\infty} \F_{L, N},
\end{align*}
meaning the set of neural networks with \(N\) width, and arbitrary depth. Similarly,
\begin{align*}
    \F_{L, \infty} = \bigcup_{N=1}^{\infty} \F_{L, N},
\end{align*}
for arbitrary width at fixed depth. Finally, 
\begin{align*}
    \F_{\infty, \infty} = \bigcup_{L=1}^{\infty} \bigcup_{N=1}^{\infty} \F_{L, N}.
\end{align*}

We will now introduce \emph{sampled} networks again, and go into depth the choice of constants. The input space \(\X\) is a subset of Euclidean space, and we will work with  canonical inner products
\(\langle \cdot, \cdot \rangle\) and their induced norms \(\lVert \cdot \rVert\) if we do not explicitly specify the norm.
\begin{definition}\label{def:sampled_network}
  Let \(\Phi\) be an neural network with \(L\) hidden layers. For \(l=1, \dots, L\), let \((x_{0,i}^{(1)} ,x_{0,i}^{(2)})_{i=1}^{N_{l}}\) be pairs of points sampled over \(\X\times \X\). We say \(\Phi\) is a sampled network if the weights and biases of every layer \(l=1,2,\dots,L\) and neurons \(i=1,2,\dots,N_l\), are of the form
  \begin{align}
      w_{l,i} =  s_1 \frac{x_{l-1,i}^{(2)} -  x_{l-1,i}^{(1)}}{\lVert  x_{l-1,i}^{(2)} -  x_{l-1,i}^{(1)}\rVert^2}, \quad b_{l,i} = \langle w_{l,i},  x_{l-1,i}^{(1)} \rangle + s_2 
  \end{align}
  where \(s_1, s_2 \in \R\) are constants related to the activation function, and \( x^{(j)}_{l-1,i} = \Phi^{(l-1)}(  x^{(j)}_{0,i})\) for \(j=1,2,\) assuming \(x^{(1)}_{l-1,i} \neq  x^{(2)}_{l-1,i}\). The last set of weights and biases are \(W_{L+1}, b_{L+1} = \argmin \mathcal{L}(W_{L+1}\Phi^{(L)}(\cdot)-b_{L+1})\),
  where \(\mathcal{L}\) is a suitable loss.
\end{definition}
\begin{remark}
    The constants \(s_1, s_2\) can be fixed such that for a given activation function, we can specify values at specific points, e.g., we can specify what value to map
    the two points \(x^{(1)}\) and \(x^{(2)}\) to; cf.~\Cref{fig:relu_tanh_illustration}. Also note that the points we sampled from \(\X \times \X\) are sampled such that we have unique points after mapping the two sampled points through the first \(l-1\) layers. This is enforced by constructing the density of the distribution we use to sample the points so that zero density is assigned to points that map to the same output of \(\Phi^{(l-1)}\) (see~\Cref{def:sampling distribution}).  
\end{remark}
\begin{figure}[ht]
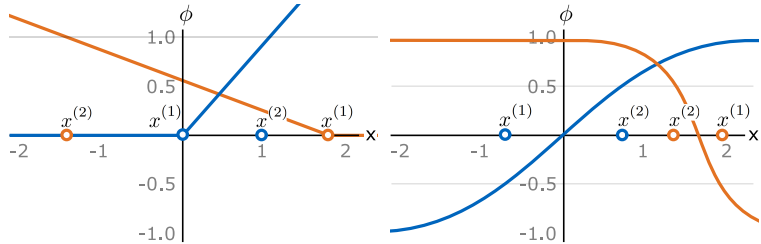

    \centering
    \includegraphics[width=0.35\textwidth]{figures/relu_sm.pdf}~
    \includegraphics[width=0.35\textwidth]{figures/tanh_sm.pdf}
    \caption{Illustration of the placement of point pairs  \({x}^{(1)},{x}^{(2)}\) for activation functions ReLU (left) and tanh (right).}
    \label{fig:relu_tanh_illustration}
\end{figure}
\begin{example}
    We consider the construction of the constants \(s_1,s_2\) for ReLU and tanh, as they are the two activation functions used for both the proofs and the experiments. We start by considering the activation function tanh, i.e.,
    \begin{align*}
        \phi(x) = \frac{\exp^{2x} - 1}{\exp^{2x} + 1}.
    \end{align*}
    Consider an arbitrary weight
    \(w\), where we drop the subscripts for simplicity. Setting \(s_1=2\cdot s_2\), the input of the corresponding activation function, at the mid-point \(x = x^{(1)} + \frac{x^{(2)} - x^{(1)}}{2} = \frac{x^{(2)} + x^{(1)}}{2}\), is
    \begin{align*}
        \langle w, x \rangle - b &= \left\langle w, \frac{x^{(2)}+
        x^{(1)}}{2}\right\rangle - \langle w, x^{(1)} \rangle - s_2 \\
                                 &= \left\langle w, \frac{x^{(2)} -
                                 x^{(1)}}{2} \right\rangle - s_2\\
                                 &= \frac{2s_2}{\left\lVert x^{(2)} - x^{(1)}
                                 \right\rVert^2} \cdot \frac{1}{2} \left\langle
                                 x^{(2)} - x^{(1)}, x^{(2)} - x^{(1)}
                                 \right\rangle - s_2\\
                                 &= s_2-s_2 = 0.
    \end{align*}
    The output of the neuron corresponding to the input above is then
    zero, regardless of what the constant \(s_2\) is. Another aspect of the constants are
    that we can decide activation values for the two sampled points. This can be seen with a similar calculation as above,
    \begin{align*}
        \langle w, x^{(1)}  \rangle - b &= -s_2.
    \end{align*}
    Letting \(s_2=\frac{\ln 3}{2}\), we see that the output of the corresponding
    neuron at point \(x^{(1)}\) is 
    \begin{align*}
        \phi(-s_2) = \frac{\exp^{\ln(3)}
        -1}{\exp^{\ln(3)} + 1} =
        -\frac{1}{2},
    \end{align*}
    and for \(x^{(2)}\) we get \(\frac{1}{2}\). We can also consider
    the ReLU activation function where we choose to set \(s_1=1\) and \(s_2=0\).
    The center of the real line is then placed at \(x^{(1)}\), and \(x^{(2)}\) is mapped to one.
\end{example}

\subsection{Universality of sampled networks}\label{sec:universality_proofs}
\begin{table}[htbp]\caption{Notation used through the theory section of this appendix.}
    \begin{center}
    \begin{tabular}{r p{10cm} }
    \toprule
    \(\X'\) & Pre-noise input space. Subset of \(\R^D\) and compact for \Cref{sec:universality_proofs} and \Cref{sec:barron}.\\
    \(\X\) & Input space. Subset of \(\R^D\), \(\X' \subset \X\), and  compact for \Cref{sec:universality_proofs} and \Cref{sec:barron}.\\
    \(\epsilon_I\) & Noise level for \(\X'\). \Cref{def:input_space}.\\
    \(\phi\) & Activation function ReLU, \(\phi(x) =\max\{x,0\}\).\\ 
    \(\Phi\) & Neural network with activation function ReLU.\\
    \(\psi\) & Activation function tanh.\\
    \(\Psi\) & Neural network with activation function tanh.\\
    \(\Phi_c\) & Constant block. 5 neurons that combined add a constant value \(c\) to parts of the input space \(\X\), while leaving the rest untouched. \Cref{def:constant_block}.\\
    \(N_l\) & Number of neurons in layer \(l\) of a neural network. \(N_{0}=D\) and \(N_{L+1}\) is the output dimension of network.\\ 
    \(w_{l,i}\) & The weight at the \(i\)th neuron, of the \(l\)th layer.\\
    \(b_{l,i}\) & The bias at the \(i\)th neuron, of the \(l\)th layer.\\
    \(\Phi^{(l)}\) & Function that maps \(x \in \X\) to the output of the \(l\)th hidden layer.\\
    \(\Phi^{(l,i)}\) & The \(i\)th neuron for the \(l\)th layer.\\
    \(\X_l\) & The image of \(\X\) after \(l\) layers, i.e., \(\X_l = \Phi(\X)\). We set \(\X_0 = \X\).\\
    \(x_l\) & \(x_l = \Phi^{(l)}(x)\).\\
    \(\F_{L,[N_1, N_2, \dots, N_L]}\) & Space of neural networks with L hidden layers and \(N_l\) neurons in layer \(l\).\\
    \(\F_{1,\infty}\) & Space of all neural networks with one hidden layer, i.e., networks with arbitrary width.\\
    \(\F_{1, \infty}^S\) & Space of all sampled networks with one hidden layer and arbitrary width.\\
    \(\zeta\) & Function that maps points from \(\X\) to \(\X'\), mapping to the closest point in \(\X'\), which is unique.\\
    \(\eta\) & Maps \(x\in\X \times [0,1]\) to \(\X\), by \(\eta(x,\delta) = x+\delta(\zeta(x)-x)\).\\
    \(\lVert \cdot\rVert\) & Canonical norm of \(\R^D\), with inner product \(\langle \cdot \rangle\).\\
    \(\mathbb{B}_{\epsilon}(x)\) & The ball \(\mathbb{B}_{\epsilon}(x) = \{x' \in \R^D \colon \lVert x-x'\rVert < \epsilon\}\).\\
    \(\overline{\mathbb{B}}_{\epsilon}(x)\) & The closed ball \(\overline{\mathbb{B}}_{\epsilon}(x) = \{x' \in \R^D \colon \lVert x-x'\rVert \leq \epsilon\}\).\\
    \(\tau\) & Reach of the set \(\X'\).\\
    \(\lambda\) & Lebesgue measure.\\
    \bottomrule
    \end{tabular}
    \end{center}
    \label{tab:notation}
\end{table}
In this section we show that universal approximation results also holds for our sampled networks. We start by considering the ReLU function. To separate it from tanh, which is the second activation function we consider, we denote \(\phi\) to be ReLU and \(\psi\) to be tanh. Networks with ReLU as activation function are then denoted by \(\Phi\), and \(\Psi\) are networks with tanh activation function. When we use ReLU, we set \(s_1=1\) and \(s_2 = 0\), as already discussed. We provide the notation that is used throughout \Cref{sec:theory} in \Cref{tab:notation}.

The rest of this section is structured as follows:
\begin{enumerate}
    \item We introduce the type of input spaces we are working with.
    \item We then aim to rewrite an arbitrary fully connected neural network with one hidden layer into a sampled network. 
    \item This is done by first constructing neurons that add a constant to parts of the input space while leaving the rest untouched.
    \item We then show that we can translate between an arbitrary neural network and a sampled network, if the weights of the former are given. This gives us the first universal approximator result, by combining this step with the former.
    \item We go on to construct deep sampled networks with arbitrary width, showing that we can contain the information needed through the first \(L-1\) layers, and then apply the result for sampled networks with one hidden layer, to the last hidden layer.
    \item We conclude the section by showing that sampled networks with tanh activation functions are also universal approximators. Concretely, we show that it is dense in the space of sampled networks with ReLU activation function. The reason we need a different proof for the tanh case is that it is not positive homogeneous, a property of ReLU that we heavily depend upon for the proof of sampled networks with ReLU as activation function.
\end{enumerate}

Before we can start with the proof that \(\F_{1,\infty}\) is dense in the space of continuous functions, we start by specifying the domain/input space for the functions. 

\subsubsection{Input space}
We again assume that the input space \(\X\) is a subset of \(\R^D\). We also assume that \(\X\) contains some additional additive noise. Before we can define this noisy input space, we recall two concepts. 

Let \((\Z, d)\) be a metric space, and the distance between a subset \(A \subseteq \Z\) and a point \(z \in \Z\) be defined as
\begin{align*}
    d_{\Z}(z, A) = \inf\{d(z,a) \colon a \in A\}.
\end{align*}
If \(\Z\) is also a normed space, the medial axis is then defined as
\begin{align*}
    \text{Med}(A) = \{ z \in \mathcal{Z} \colon \exists p\neq q \in A, \lVert p - z \rVert = \lVert q - z \rVert = d_{\Z}(z, A)\}
\end{align*}
The reach of \(A\) is defined as
\begin{align*}
    \tau_A = \inf_{a\in A} d_{\mathcal Z} (a, \text{Med}(A)).
\end{align*}
Informally, the reach is the smallest distance from a point in the subset \(A\) to a non-unique projection of it in the complement \(A^c\). This means that the reach of convex subsets is infinite (all projections are unique), while other sets can have zero reach, which means \(0 \leq \tau_A \leq \infty\).

Let \(\mathcal{Z} = \R^D\), and \(d\) be the canonical Euclidean distance.

\begin{definition}\label{def:input_space}
    Let \(\X'\) be a nonempty compact subset of \(\R^D\) with reach \(\tau_{\X'} > 0\). The input space \(\X\) is defined as 
    \begin{align*}
        \X = \{x \in \R^D \colon d_\Z(x,\X') \leq \epsilon_I \},
    \end{align*}
    where \(0 < \epsilon_I < \min\{\tau_{\X'},1\}\). We refer to \(\X'\) as the pre-noise input space.
\end{definition}
\begin{remark}
    As \(d_\Z(x, \X') = 0\) for all \(x\in \X'\), we have that \(\X' \subset \X\). Due to \(d_\Z(x, X') \leq \epsilon_I\) we preserve that \(\X\) is closed, and as \(\epsilon_I < \infty\) means it is bounded, and hence compact. Informally, we have enlarged the input space with new points at most \(\epsilon_I\) distance away from \(\X'\). This preserves many properties of the pre-noise input space \(\X'\), while also being helpful in the proofs to come. 
    We also argue that in practice, we are often faced with ``noisy'' datasets \(\X\) anyway, rather than non-perturbed data \(\X'\).
\end{remark}
We also define a function that maps elements in \(\X\setminus \X'\) down to \(\X'\), using the uniqueness property given by the medial axis. That is, \(\zeta \colon \X \rightarrow \X'\) is the mapping defined as \(\zeta(x) = x'\), for \(x \in \X\) and \(x'\in \X'\), such that \(d(x,x') = d_\Z (x, \X')\). As we also want to work along the line between these two points, we set \(\eta \colon \X \times [0,1] \rightarrow \X\), where \(\eta(x, \delta) = x + \delta (\zeta(x)-x)\). We conclude this part with an elementary result.
\begin{lemma}\label{lemma:eta_in_x}
    Let \(x\in \X\), \(\delta \in [0,1]\), and \(x' = \eta(x, \delta) \in \X\), then  \(\lVert x'-x\rVert \leq \delta\), with strict inequality when \(\delta > 0\). Furthermore, when \(\delta>0\), we have \(x' \notin \partial \X\).
\end{lemma}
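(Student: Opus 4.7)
The plan is to unpack the three definitions involved (the noisy input space $\X$, the projection $\zeta$, and the interpolator $\eta$) and verify each claim by direct computation. Since $\zeta(x)$ is the unique nearest point in $\X'$ to $x$ (uniqueness guaranteed by $\epsilon_I < \tau_{\X'}$), we have the single identity $\|\zeta(x) - x\| = d_\Z(x,\X') \leq \epsilon_I$, and essentially everything hinges on this together with $\epsilon_I < 1$.

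For the norm bound, I would write $x' - x = \delta(\zeta(x) - x)$ directly from the definition of $\eta$, so that $\|x' - x\| = \delta\|\zeta(x) - x\| \leq \delta \epsilon_I$. Because $\epsilon_I < 1$ by the choice in \Cref{def:input_space}, this gives the weak bound $\|x' - x\| \leq \delta$ in general, and the strict bound $\|x' - x\| < \delta$ whenever $\delta > 0$, which is the first claim.

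For the boundary claim, I would first characterise $\partial \X$ as $\{y \in \R^D : d_\Z(y,\X') = \epsilon_I\}$: the defining inequality of $\X$ is saturated exactly on $\partial \X$, while every point of $\X'$ itself sits in the interior since a ball of radius $\epsilon_I$ around any such point is contained in $\X$. It then suffices to bound $d_\Z(x',\X')$ strictly below $\epsilon_I$. Using $\zeta(x) \in \X'$ as a witness and the algebraic identity $x' - \zeta(x) = (1-\delta)(x - \zeta(x))$, I get $d_\Z(x',\X') \leq \|x' - \zeta(x)\| = (1-\delta)\|x - \zeta(x)\| \leq (1-\delta)\epsilon_I$, which is strictly less than $\epsilon_I$ whenever $\delta > 0$. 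Hence $x' \notin \partial \X$.

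The lemma is foundational rather than deep, so there is no real obstacle, just careful bookkeeping of the definitions. The only subtlety worth flagging is the degenerate case $x \in \X'$, where $\|x - \zeta(x)\| = 0$ and both inequalities still go through harmlessly (giving $\|x'-x\|=0<\delta$ and $d_\Z(x',\X')=0<\epsilon_I$), so no case split is actually required.
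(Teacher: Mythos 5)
Your proof is correct and follows essentially the same route as the paper's: both arguments reduce to the two identities $\lVert x'-x\rVert=\delta\lVert\zeta(x)-x\rVert\leq\delta\epsilon_I$ and $\lVert x'-\zeta(x)\rVert=(1-\delta)\lVert x-\zeta(x)\rVert\leq(1-\delta)\epsilon_I$, with strictness coming from $\epsilon_I<1$ and $\delta>0$ respectively. The only cosmetic difference is that you make the interior/boundary characterisation of $\X$ explicit, whereas the paper passes directly from the strict inequality to $x'\in\operatorname{int}\X$.
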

\begin{proof}
    We start by noticing 
    \begin{align*}
        d(\zeta(x),x') = \lVert \zeta(x) - x'\rVert = (1-\delta)\lVert \zeta(x) - x \rVert \leq (1-\delta)\epsilon_I \leq \epsilon_I,
    \end{align*}
    which means \(x'\in \X\). When \(\delta > 0\), we have strict inequality, which implies \(x' \in \text{int} \, \X=\X\setminus\partial \X\). Furthermore,
    \begin{align*}
        d(x, x') = \lVert x-x'\rVert = \delta \lVert \zeta(x)-x\rVert \leq \delta \cdot \epsilon_I \leq \delta,
    \end{align*}
    where the last inequality holds due to \(\epsilon_I < 1\), and is equal only when \(\delta = 0\). 
\end{proof}

\subsubsection{Networks with a single hidden layer}
We can now proceed to the proof that sampled networks with one hidden layer are indeed universal approximators. The main idea is to start off with an arbitrary network with a single hidden layer, and show that we can approximate this network arbitrarily well. Then we can rely on previous universal approximation theorems~\cite{cybenko-1989,hornik-1989,pinkus-1999} to finalize the proof. We start by showing some results for a different type of neural networks, but very similar in form. We consider networks where the bias is of the same form as a sampled network. However, the weight is normalized to be a unit weight, that is, divided by the norm, and not the square of the norm. We show in \Cref{lemma:pos_const_weights} that the results also hold for any positive scalar multiple of the unit weight, and therefore our sampled network, where we divide by the norm of the weight squared. To be more precise, the weights are of the form
\begin{align*}
    w_{l,i} = \frac{x^{(2)}_{l-1,i}-x^{(1)}_{l-1,i}}{\lVert x^{(2)}_{l-1,i} - x^{(1)}_{l-1,i}\rVert},
\end{align*}
and biases \(b_{l,i} = \langle w_{l,i}, x^{(1)}_{l-1,i}\rangle\). Networks with weights/biases in the hidden layers of this form is referred to as \emph{unit sampled network}. In addition, when the output dimension is \(N_{L+1}=1\), we split the bias of the last layer, \(b_{L+1}\) into \(N_L\) parts, to make the proof easier to follow. This is of no consequence for the final result, as we can always sum up the parts to form the original bias. We write the different parts of the split as \(b_{L+1,i}\) for \(i=1,2,\dots,N_L\).

We start by defining a constant block. This is crucial for handling the bias, as we can add constants to the output of certain parts of the input space, while leaving the rest untouched. This is important when proving \Cref{lemma:bias_with_w_fixed}. 
\begin{definition}\label{def:constant_block}
    Let \(c_1 < c_2 < c_3\), and \(c \in \R^+\). A constant block \(\Phi_c\) is defined as five neurons summed together as follows. For \(x\in\R\),
    \begin{align*}
        \Phi_c(x) = \sum_{i=1}^5 f_i(x),
    \end{align*}
    where 
    \begin{align*}
        f_1(x)= a_1\,\phi(x-c_2), \quad &f_2(x) = a_1\,\phi(-(x-c_3)),\\ 
        \quad f_3(x) = -a_1\,\phi(x-c_3), \quad &f_4(x) = -a_2\,\phi(-(x-c_1))\\ 
        f_5(x) = a_3\, &\phi(x-c_1),
    \end{align*}
    and \(a_1=\frac{c}{c_3-c_2}, a_2 = a_1 \frac{c_1-c_3}{c_1-c_2}\), and \(a_3 = a_2-a_1\). 
\end{definition}
\begin{remark}
    The function \(\Phi_c\) are constructed using neurons, but can also be written as the continuous function,
    \begin{align*}
        \Phi_c(x) = 
        \begin{cases}
            0, & x \leq c_1\\
            a_3 \cdot x + d, & c_1 < x \leq c_2\\
            c, & x > c_2, 
        \end{cases}
    \end{align*}
    where \(d = a_1\, c_3 + a_2 \, c_2\). Obviously, if \(c\) needs to be negative, one can simply swap the sign on each of the three parameters, \(a_i\).
\end{remark}
We can see by the construction of \(\Phi_c\) that we might need the negative of some original weight. That is, the input to \(\Phi_c\) is of the form \(\langle w_{1,i}, x\rangle\), and for \(f_2\) and \(f_4\), we require \(\langle -w_{1,i},x\rangle\). 
In \Cref{lemma:weights_w_fixed_bias}, we shall see that this is not an issue and that we can construct neurons such that they approximate constant blocks arbitrarily well, as long as \(c_1, c_2, c_3\) can be produced by the inner product between a weight and points in \(\X\), that is, if we can produce the biases equal to the constants \(c_1, c_2, c_3\). 

Let \(\Hat \Phi \in \F_{1, K}\), with parameters \(\{\Hat w_{l,i}, \Hat b_{l,i}\}_{l=1,i=1}^{2,K}\), be an arbitrary neural network. Unless otherwise stated, the weights in this arbitrary network \(\Hat \Phi\) are always nonzero. Sampled networks cannot have zero weights, as the point pairs used to construct weights, both for unit and regular sampled networks, are distinct. However, one can always construct the same output as neurons with zero weights by setting certain weights in \(W_{L+1}\) to zero. 

We start by showing that, given all weights of a network, we can construct all biases in a unit sampled network so that the function values agree. More precisely, we want to construct a network with weights \(\Hat w_{l,i}\), and show that we can find points in \(\X\) to construct the biases of the form in unit sampled networks, such that the resulting neural network output on \(\X\) equals exactly the values of the arbitrary network \(\Hat \Phi\). 
\begin{lemma}\label{lemma:bias_with_w_fixed}
    Let \(\Hat \Phi \in \F_{1, K} \colon \X \rightarrow \R^{N_2}\). There exists a set of at most \(6\cdot N_1\) points \(x_i \in \X\) and biases \(b_{2,i}\in \R\), such that a network \(\Phi\) with weights \(w_{1,i} = \Hat w_{1,i}\), \(w_{2,i}=\Hat w_{2,i}\), and biases \(b_{2,i} \in \R\),
    \begin{align*}
        b_{1,i} = \langle w_{1,i}, x_i\rangle,
    \end{align*}
    for \(i=1,2,\dots, N_{1}\), satisfies \(\Phi(x) - \Hat \Phi(x) = 0\) for all \(x\in\X\). 
\end{lemma}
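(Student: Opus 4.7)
The plan is to argue neuron-by-neuron: for each $i\in\{1,\dots,K\}$ in $\hat\Phi$'s first hidden layer, I build at most six sampled neurons whose combined contribution to $\Phi$ reproduces the original neuron $\phi(\langle\hat w_{1,i},\cdot\rangle-\hat b_{1,i})$ exactly on $\X$. Set $m_i=\min_{x\in\X}\langle\hat w_{1,i},x\rangle$ and $M_i=\max_{x\in\X}\langle\hat w_{1,i},x\rangle$; by compactness of $\X$ these are attained, and the set of biases of the form $\langle\hat w_{1,i},x_i\rangle$ with $x_i\in\X$ is exactly $[m_i,M_i]$. The construction then splits into three cases according to the position of $\hat b_{1,i}$ relative to this interval.

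When $\hat b_{1,i}\in[m_i,M_i]$, pick any $x_i\in\X$ realising $\hat b_{1,i}=\langle\hat w_{1,i},x_i\rangle$; a single sampled neuron matches the original exactly. When $\hat b_{1,i}>M_i$, the original neuron is identically zero on $\X$, and a single sampled neuron with $x_i$ attaining $M_i$ is likewise identically zero on $\X$, so the contribution matches. The substantive case is $\hat b_{1,i}<m_i$, in which the original neuron coincides on $\X$ with the affine function $x\mapsto\langle\hat w_{1,i},x\rangle-\hat b_{1,i}$; I split it as $\phi(\langle\hat w_{1,i},x\rangle-m_i)+(m_i-\hat b_{1,i})$ on $\X$. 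The first term is produced by one sampled neuron with bias $m_i$, and the positive constant $m_i-\hat b_{1,i}$ is supplied by appending a constant block $\Phi_c$ from \Cref{def:constant_block}, contributing five further sampled neurons for a total of six in this case. Summed over cases this gives at most $6K$ first-layer neurons, with any $x$-independent residual absorbed into the free output-layer biases $b_{2,i}$.

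The main obstacle is realising the constant block's five component neurons as sampled neurons with admissible biases: each one needs a weight of the sampled form (possibly negated by swapping the point pair) together with a bias of the form $\langle w',x'\rangle$ for some $x'\in\X$, while the block's scalar input $\langle w',x\rangle$ must sit at or above the threshold $c_2$ for every $x\in\X$. I plan to handle this by choosing the auxiliary direction $w'$ independently of $\hat w_{1,i}$, leveraging the $\epsilon_I$-thickening of $\X$ from \Cref{def:input_space} to gain enough freedom in the attainable bias range, and using both $w'$ and $-w'$ across the five components so that the lower thresholds are realised with one sign and the upper thresholds with the other. Combining this with the case analysis above yields $\Phi(x)-\hat\Phi(x)=0$ for all $x\in\X$ with weights $w_{1,i}=\hat w_{1,i}$, $w_{2,i}=\hat w_{2,i}$, and biases of the claimed sampled form, using at most $6K$ points $x_i\in\X$.
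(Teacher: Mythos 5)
Your case analysis rests on the claim that the attainable bias set \(\{\langle \Hat w_{1,i}, x\rangle : x \in \X\}\) is the full interval \([m_i, M_i]\). That is false in general: the input space of \Cref{def:input_space} is only assumed compact with positive reach (e.g.\ a union of two disjoint closed balls), so the image of \(\X\) under \(\langle \Hat w_{1,i},\cdot\rangle\) is compact but possibly disconnected. You therefore miss exactly the case that makes the lemma nontrivial and forces the factor \(6\): when \(\Hat b_{1,i}\) lies strictly between \(m_i\) and \(M_i\) but inside a gap of the attainable set. In that case the original neuron is identically zero on the part of \(\X\) below the gap and strictly affine on the part above it; replacing \(\Hat b_{1,i}\) by the nearest attainable bias \(b_{1,i}\) at the top of the gap reproduces the neuron on the lower part but introduces the error \(w_{2,i}(b_{1,i}-\Hat b_{1,i})\) \emph{only} on the upper part. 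This piecewise-constant discrepancy cannot be absorbed into the free output bias \(b_{2,i}\); it is precisely what the constant block \(\Phi_c(\langle w_{1,i},\cdot\rangle)\) of \Cref{def:constant_block} is for, with its thresholds placed inside the gap (where the relevant suprema and infima of the two pieces of the attainable set are themselves attained, so the block's biases are realizable as inner products with points of \(\X\)).

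Conversely, where you do deploy the constant block --- the case \(\Hat b_{1,i}<m_i\) --- it is not needed: there the discrepancy \(w_{2,i}(m_i-\Hat b_{1,i})\) is constant on \emph{all} of \(\X\) and is absorbed by setting \(b_{2,i}=\Hat b_{2,i}+w_{2,i}(\Hat b_{1,i}-m_i)\), exactly as you yourself observe when you say that any \(x\)-independent residual goes into the free output-layer biases. This also dissolves the obstacle you flag about realizing the block's thresholds below \(m_i\) via an auxiliary direction \(w'\) and the \(\epsilon_I\)-thickening: that machinery is unnecessary, and in any case the present lemma only fixes biases given weights --- realizing the weights (including the negated copies inside a constant block) in sampled form is deferred to the subsequent lemma. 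As written, your argument is correct only for connected \(\X\), where indeed no constant blocks are needed and \(K\) neurons suffice; to prove the stated lemma you must add the ``bias in a gap'' case.
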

\begin{proof}
    W.l.o.g., we assume \(N_2 = 1\). For any weight/bias pair \(\Hat w_{1,i}, \Hat b_{1,i}\), we let \(w_{1,i} = \Hat w_{1,i}\), \(w_{2,i} = \Hat w_{2,i}\), \(B_i = \{\langle w_{1,i}, x\rangle \colon x \in \X\}\), \(b^{(i)}_\wedge = \inf B_i\), and \(b^{(i)}_\vee = \sup B_i\). As \(\langle w_{1,i}, \cdot\rangle\) is continuous, means \(B_i\) is compact and \(b^{(i)}_\wedge, b^{(i)}_\vee \in B_i\). 
    
    We have four different cases, depending on \(\Hat b_{1,i}\). 
    \begin{enumerate}[ {(}1{)} ]
        \item If \(\Hat b_{1,i} \in B_i\), then we simply choose a corresponding \(x_i\in \X\) such that \(b_{1,i} = \langle w_{1,i}, x_i\rangle = \Hat b_{1,i}\). Letting \(b_{2,i} = \Hat b_{2,i}\), we have
    \begin{align*}
        w_{2,i}\phi(\langle w_{1,i}, x\rangle - b_{1,i}) - b_{2,i} = \Hat w_{2,i}\phi(\langle \Hat w_{1,i}, x\rangle - \Hat b_{1,i}) - \Hat b_{2,i}.
    \end{align*}

     \item If \(\Hat b_{1,i} > b^{(i)}_\vee\), we choose \(x_i\) such that \(b_{1,i} = \langle w_{1,i}, x_i \rangle = b^{(i)}_\vee\) and \(b_{2,i} = \Hat b_{2,i}\). As \(\phi(\langle w_{1,i}, x\rangle - b_{1,i}) = \phi(\langle \Hat w_{1,i}, x \rangle - \Hat b_{1,i}) =0\), for all \(x \in \X\), we are done. 
     
     \item If \(\Hat b_{1,i} < b^{(i)}_\wedge\), we choose corresponding \(x_i\) such that \(b_{1,i} = \langle w_{1,i}, x_i \rangle = b_\wedge^{(i)}\), and set \(b_{2,i} = \Hat b_{2,i} + w_{2,i} \left(\Hat b_{1,i}- b^{(i)}_\wedge\right)\). We then have 
    \begin{align*}
        w_{2,i}\phi(\langle w_{1,i}, x \rangle - b_{1,i}) - b_{2,i} &= w_{2,i}\langle w_{1,i}, x\rangle - w_{2,i}\, b_{1,i} - b_{2,i}\\
        &= w_{2,i}\langle w_{1,i}, x \rangle - \Hat b_{2,i} -  w_{2,i}\,\Hat b_{1,i} \pm  w_{2,i} b^{(i)}_{\wedge}\\
        &= \Hat w_{2,i}\langle \Hat w_{1,i}, x \rangle - \Hat b_{2,i} - \Hat w_{2,i}\,\Hat b_{1,i}\\
        &= \Hat w_{2,i}\phi(\langle \Hat w_{1,i}, x \rangle -\Hat b_{1,i})- \Hat b_{2,i},
    \end{align*}
    where first and last equality holds due to \(\langle w_{1,i}, x\rangle > b_{1,i} > \Hat b_{1,i}\) for all \(x \in \X\). 

    \item If \(b^{(i)}_\wedge < \Hat b_{1,i} < b^{(i)}_\vee\), and \(\Hat b_{1,i} \notin B_i\), things are a bit more involved. First notice that \(B^{(1)}_i = B_i \cap [b_\wedge^{(i)}, \Hat b_{1,i}]\) and \(B^{(2)}_i = B_i \cap [\Hat b_{1,i}, b^{(i)}_\vee]\) are both non-empty compact sets. We therefore have that supremum and infimum of both sets are members of their respective set, and thus also part of \(B_i\). We therefore choose \(x_i\) such that \(b_{1,i} = \langle w_{1,i}, x_i\rangle = \inf B^{(2)}_i\). To make up for the difference between \(\Hat b_{1,i} < b_{1,i}\), we add a constant to all \(x \in \X\) where \(\langle w_{1,i}, x\rangle > b_{1,i}\). To do this we add some additional neurons, using our constant block \(\Phi^{(i)}_c(\langle w_{1,i}, \cdot\rangle)\). Letting \(c = w_{2,i}\left(b_{1,i} - \Hat b_{1,i}\right)\), \(c_1 = \sup B^{(1)}_i\), \(c_2 = b_{1,i}\), and \(c_3 = b^{(i)}_\vee\). We have now added five more neurons, and the weights and bias in second layer corresponding to the neurons is set to be \(\pm a\) and \(0\), respectively, where both \(a\) and the sign of \(a\) depends on \Cref{def:constant_block}. In case we require a negative sign in front of \(\langle w_{1,i}, \cdot\rangle\), we simply set it as we are only concerned with finding biases given weights. We then have that for all \(x\in \X\), 
    \begin{align*}
        \Phi^{(i)}_c(\langle w_{1,i}, x \rangle) = 
        \begin{cases}
            c, & \langle w_{1,i}, x \rangle > b_{1,i}\\
            0, & \text{otherwise.}
        \end{cases}
    \end{align*}
    Finally, by letting \(b_{2,i} = \Hat b_{2,i}\), we have
    \begin{align*}
        &w_{2,i}\phi(\langle w_{1,i}, x\rangle - b_{1,i}) - b_{2,i} + \Phi^{(i)}_c(\langle w_{1,i}, x \rangle)\\
        = &w_{2,i}\langle w_{1,i}, x\rangle - w_{2,i}\, b_{1,i} - \Hat b_{2,i} +  w_{2,i}\, b_{1,i} - \Hat w_{2,i}\, \Hat b_{1,i}\\
        = &\langle \Hat w_{1,i}, x\rangle - \Hat w_{2,i}\, \Hat b_{1,i} - \Hat b_{2,i}\\
        = &\Hat w_{2,i}\phi(\langle \Hat w_{1,i}, x \rangle \Hat b_{1,i}) - \Hat b_{2,i},
    \end{align*}
    when \(\langle w_{1,i}, x \rangle > b_{1,i}\), and 
    \begin{align*}
        w_{2,i}\phi(\langle w_{1,i}, x\rangle - b_{1,i}) - b_{2,i} &+ \Phi^{(i)}_c(\langle w_{1,i}, x \rangle) = 
        b_{2,i} =\Hat b_{2,i}\\
        &= \Hat w_{2,i}\phi(\langle w_{1,i}, x \rangle \Hat b_{1,i}) - \Hat b_{2,i},
    \end{align*}
    otherwise. And thus, \(\Phi(x) = \Hat \Phi(x)\) for all \(x\in \X\). As we add five additional neurons for each constant block, and we may need one for each neuron, means we need to construct our network with at most \(6\cdot N_1\) neurons.
    \end{enumerate}
\end{proof}
Now that we know we can construct suitable biases in our unit sampled networks for all types of biases \(\Hat b_{l,i}\) in the arbitrary network \(\Hat \Phi\), we show how to construct the weights. 
\begin{lemma}\label{lemma:weights_w_fixed_bias}
    Let \(\Hat \Phi \in \F_{1, K} \colon \X \rightarrow \R^{N_2}\), with biases of the form \(\Hat b_{1,i} = \langle \Hat w_{1,i}, x_i\rangle\), where \(x_i \in \X\) for \(i=1,2,\dots, N_1\). For any \(\epsilon >0\), there exist unit sampled network \(\Phi\), such that \(\lVert \Phi - \Hat \Phi \rVert_\infty < \epsilon\). 
\end{lemma}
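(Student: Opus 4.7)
The plan is to handle one neuron at a time and then sum up the contributions. Write $v_i = \hat{w}_{1,i}/\lVert \hat{w}_{1,i}\rVert$. For each $i=1,\dots,N_1$, I would like to choose points $x_i^{(1)}, x_i^{(2)} \in \X$ with $x_i^{(1)} \ne x_i^{(2)}$ so that the resulting unit-sampled weight and bias,
\begin{align*}
  w_{1,i} = \frac{x_i^{(2)}-x_i^{(1)}}{\lVert x_i^{(2)}-x_i^{(1)}\rVert}, \qquad b_{1,i} = \langle w_{1,i}, x_i^{(1)}\rangle,
\end{align*}
satisfy $w_{1,i}=v_i$ and $b_{1,i}\approx \langle v_i, x_i\rangle = \hat{b}_{1,i}/\lVert \hat{w}_{1,i}\rVert$. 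I would then set the second-layer weight to $w_{2,i} = \hat{w}_{2,i}\lVert \hat{w}_{1,i}\rVert$ and $b_{2,i}=\hat{b}_{2,i}/N_1$, so that positive homogeneity of $\phi$ gives
\begin{align*}
  w_{2,i}\,\phi(\langle w_{1,i},x\rangle - b_{1,i}) = \hat{w}_{2,i}\,\phi(\langle \hat{w}_{1,i},x\rangle - \lVert \hat{w}_{1,i}\rVert\,b_{1,i}),
\end{align*}
which reduces neuron-level agreement to the single scalar condition $\lVert \hat{w}_{1,i}\rVert b_{1,i} \approx \hat{b}_{1,i}$.

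The construction of $x_i^{(1)}, x_i^{(2)}$ splits into two cases. If $x_i \in \mathrm{int}\,\X$, pick $x_i^{(1)} = x_i$ and $x_i^{(2)} = x_i + \delta_i v_i$ for some $\delta_i>0$ small enough that the open ball around $x_i$ contained in $\X$ covers $x_i^{(2)}$; then $w_{1,i}=v_i$ and $b_{1,i} = \langle v_i, x_i\rangle$ exactly, giving exact neuron-level equality with $\hat{\Phi}$. If $x_i \in \partial\X$, I would replace $x_i$ by $x_i' = \eta(x_i, \delta)$ for a small $\delta>0$; by \Cref{lemma:eta_in_x}, $x_i' \in \mathrm{int}\,\X$ and $\lVert x_i' - x_i\rVert \le \delta$. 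Then apply the interior construction with $x_i'$ in place of $x_i$.

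The approximation error coming from the boundary case is then controlled by
\begin{align*}
  \bigl| \hat{w}_{2,i}\phi(\langle \hat{w}_{1,i},x\rangle - \langle \hat{w}_{1,i}, x_i'\rangle) - \hat{w}_{2,i}\phi(\langle \hat{w}_{1,i},x\rangle - \langle \hat{w}_{1,i}, x_i\rangle)\bigr| \le |\hat{w}_{2,i}|\,\lVert \hat{w}_{1,i}\rVert\,\delta,
\end{align*}
using that ReLU is $1$-Lipschitz. Summing over neurons and using that $\hat{\Phi}$ has finitely many fixed parameters yields a uniform bound $\delta \cdot C$ with $C = \sum_i |\hat{w}_{2,i}|\lVert \hat{w}_{1,i}\rVert < \infty$. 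Choosing $\delta < \epsilon/C$ (and shrinking each $\delta_i$ independently so that $x_i^{(2)} \in \X$) completes the proof.

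\textbf{Main obstacle.} The only nontrivial point is the boundary case: when $x_i \in \partial\X$ we cannot pick $x_i^{(1)}=x_i$ and slide in the direction $v_i$, because $v_i$ may point outward and $b_{1,i}$ is forced to be an inner product with an element of $\X$. The fattening of $\X'$ inside $\X$ (\Cref{def:input_space}) together with \Cref{lemma:eta_in_x} is precisely what allows the inward perturbation $x_i \mapsto \eta(x_i,\delta)$, and the fact that the perturbation cost $\delta$ is chosen \emph{after} the (finite) network $\hat{\Phi}$ is fixed makes the Lipschitz bound above sufficient for any prescribed $\epsilon$.
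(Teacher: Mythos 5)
Your proposal is correct and follows essentially the same route as the paper's proof: split into the cases $x_i \in \mathrm{int}\,\X$ (exact neuron-level equality via positive homogeneity, sliding from $x_i$ in the direction $\hat w_{1,i}$) and $x_i \in \partial\X$ (perturb inward via $\eta$ from \Cref{lemma:eta_in_x} and control the resulting error summed over the finitely many neurons). The only difference is cosmetic: you make the error bound quantitative using the $1$-Lipschitz property of ReLU and Cauchy--Schwarz, where the paper invokes continuity to choose $\delta$ abstractly.
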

\begin{proof}
    W.l.o.g., we assume \(N_2 = 1\). For all \(i=1,2,\dots,N_1\), we construct the weights and biases as follows: If \(x_i\notin\partial \X \), then we set \(\epsilon' >0\) such that \(\mathbb{B}_{\epsilon'}(x_i) \subset \X\). Let \(x^{(1)}_{0,i} = x_i\) and \(x^{(2)}_{0,i} = x^{(1)}_{0,i} + \frac{\epsilon'}{2} w_{1,i} \in \mathbb{B}_{\epsilon'}(x_i) \subset \X\). Setting \(w_{2,i} = \lVert \Hat w_{1,i} \rVert  \Hat w_{2,i}\), \(b_{2,i} = \Hat b_{2,i}\), and \(w_{1,i} = \frac{x^{(2)}_{0,i}-x^{(1)}_{0,i}}{\lVert x^{(2)}_{0,i}- x^{(1)}_{0,i} \rVert} = \frac{\Hat w_{1,i}}{\lVert \Hat w_{1,i}\rVert}\), implies
    \begin{align*}
       w_{2,i} \phi(\langle w_{1,i}, x-x^{(1)}_{0,i}\rangle) - b_{2,i} &= \lVert \Hat w_{1,i}\rVert \Hat w_{2,i} \phi\left(\left\langle \frac{\Hat w_{1,i}}{\lVert \Hat w_{1,i}\rVert}, x-x_i\right\rangle\right)  - \Hat b_{2,i}\\
       &=  \Hat w_{2,i}\phi(\langle \Hat w_i, x-x_i \rangle) - \Hat b_{2,i},
    \end{align*}  
    where last equality follows by \(\phi\) being positive homogeneous. 
    
    If \(x_i\in \partial \X\), by continuity we find \(\delta>0\) such that for all \(i=1,2,\dots,N_1\) and \(x', x\in \X\), where \(\lVert x'-x_i\rVert < \delta\), we have
    \begin{align*}
        \lvert \phi(\langle \Hat w_{1,i}, x-x'\rangle)-\phi(\langle \Hat w_{1,i}, x-x_i\rangle)\rvert < \frac{\epsilon}{N_1\, w_2},
    \end{align*}
    where \(w_2 = \max\{\lvert \Hat w_{2,i}\rvert\}_{i=1}^{N_1}\). We set \(x^{(1)}_{0,i} = \eta(x_i, \min \{\delta, 1\})\), with \(x^{(1)}_{0,i} \in \text{int}\,  \X\) and \(\lVert x_i - x^{(1)}_{0,i}\rVert < \delta\), due to \(\delta >0\) and \Cref{lemma:eta_in_x}. We may now proceed by constructing \(x^{(2)}_{0,i}\) as above, and similarly setting \(w_{2,i} = \lVert \Hat w_{1,i} \rVert  \Hat w_{2,i}\), \(b_{2,i} = \Hat b_{2,i}\), we have
    \begin{align*}
        &\left\lvert \left( \sum_{i=1}^{N_1} w_{2,i}\phi(\langle w_{1,i}, x - x^{(1)}_{0,i}\rangle) - b_{2,i}\right)-\left(\sum_{i=1}^{N_1} \Hat w_{2,i}\phi(\langle\Hat w_{1,i}, x- x_i\rangle) - \Hat b_{2,i}\right)\right\rvert\\
        =& \left \lvert \sum_{i=1}^{N_1} \Hat w_{2,i} \left(\phi(\langle \Hat w_{1,i},x-x^{(1)}_{0,i}\rangle) - \phi(\langle \Hat w_{1,i}, x-x_i\rangle)\right)\right\rvert\\
        \leq &  \sum_{i=1}^{N_1}\left\lvert w_{2}  \left(\phi(\langle \Hat w_{1,i},x-x^{(1)}_{0,i}\rangle) - \phi(\langle \Hat w_{1,i}, x-x_i\rangle)\right)\right\rvert \\
        < & \left\lvert \sum_{i=1}^{N_1} w_{2}  \frac{\epsilon}{N_1\, w_2} \right\rvert = \epsilon,
    \end{align*}
    and thus \(\lVert \Phi - \Hat \Phi \rVert_\infty < \epsilon\). 
\end{proof}
Until now, we have worked with weights of the form \(w = \frac{x^{(2)}- x^{(1)}}{\lVert x^{(2)} - x^{(1)} \rVert}\), however, the weights in a sampled network are divided by the norm squared, not just the norm. We now show that for all the results so far, and also for any other results later on, differing by a positive scalar (such as this norm) is irrelevant when ReLU is the activation function.
\begin{lemma}\label{lemma:pos_const_weights}
    Let \(\Phi\) be a network with one hidden layer, with weights and biases of the form
    \begin{align*}
        w_{1,i} = \frac{x_{0,i}^{(2)}- x_{0,i}^{(1)}}{\lVert x_{0,i}^{(2)}- x_{0,i}^{(1)} \rVert}, \quad b_{1,i} = \langle w_{1,i}, x_{0,i}^{(1)} \rangle,
    \end{align*}
    for \(i=1,2,\dots,N_1\). For any weights and biases in the last layer, \(\{w_{2,i}, b_{2,i}\}^{N_2}_{i=1}\), and set of strictly positive scalars \(\{\omega_i\}_{i=1}^{N_1}\), there exist sampled networks \(\Phi_{\omega}\) where weights and biases in the hidden layer \(\{w_{1,i}', b_{1,i}'\}_{i=1}^{N_1}\) are of the form 
    \begin{align*}
        w_{1,i}' = \omega_i w_{1,i}, \quad b_{1,i}' = \langle w_{1,i}', x_{0,i}^{(1)} \rangle,
    \end{align*}
    such that \(\Phi_{\omega}(x) = \Phi(x)\) for all \(x\in\X\). 
\end{lemma}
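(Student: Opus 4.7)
The plan is to construct the sampled network $\Phi_{\omega}$ directly by relocating each hidden-layer sampling pair so that the prescribed weight form is realised, and then to use positive homogeneity of ReLU to absorb each scalar $\omega_i$ into the outer layer. First I would set $\tilde x_{0,i}^{(1)} = x_{0,i}^{(1)}$ and $\tilde x_{0,i}^{(2)} = x_{0,i}^{(1)} + w_{1,i}/\omega_i$ for each $i = 1,\dots,N_1$, choosing $\omega_i$ large enough (or tweaking $\tilde x_{0,i}^{(1)}$ on the hyperplane $\{y \in \X : \langle w_{1,i}, y \rangle = \langle w_{1,i}, x_{0,i}^{(1)} \rangle\}$) so that both points lie in $\X$. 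A direct substitution into the sampled-network formula then yields
\begin{align*}
    \frac{\tilde x_{0,i}^{(2)} - \tilde x_{0,i}^{(1)}}{\lVert \tilde x_{0,i}^{(2)} - \tilde x_{0,i}^{(1)} \rVert^2} = \frac{w_{1,i}/\omega_i}{1/\omega_i^2} = \omega_i w_{1,i} = w_{1,i}',
\end{align*}
and the induced bias $\langle w_{1,i}', \tilde x_{0,i}^{(1)}\rangle = \langle w_{1,i}', x_{0,i}^{(1)}\rangle$ matches the required form $b_{1,i}'$. This guarantees that $\Phi_{\omega}$ is a bona fide sampled network with the hidden-layer parameters specified in the lemma.

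Next I would fix the outer-layer parameters as $w_{2,i}' = w_{2,i}/\omega_i$ and $b_{2,i}' = b_{2,i}$. Because $\phi(\omega_i t) = \omega_i \phi(t)$ for all $\omega_i > 0$ and $t \in \mathbb{R}$, the contribution of each neuron is unchanged:
\begin{align*}
    w_{2,i}' \phi\bigl(\langle w_{1,i}', x\rangle - b_{1,i}'\bigr) - b_{2,i}' = \frac{w_{2,i}}{\omega_i}\phi\bigl(\omega_i(\langle w_{1,i}, x\rangle - b_{1,i})\bigr) - b_{2,i} = w_{2,i}\phi\bigl(\langle w_{1,i}, x\rangle - b_{1,i}\bigr) - b_{2,i}.
\end{align*}
Summing over $i = 1,\dots,N_1$ then gives $\Phi_{\omega}(x) = \Phi(x)$ for every $x \in \X$, which is the desired conclusion.

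The only real obstacle is the geometric step of verifying $\tilde x_{0,i}^{(2)} \in \X$ for arbitrary $\omega_i>0$; the algebra is routine once that holds. When $1/\omega_i$ exceeds the extent of $\X$ in the direction $w_{1,i}$ the naive choice fails, but in the way this lemma is used to pass from unit sampled networks to sampled networks one takes $\omega_i = 1/\lVert x_{0,i}^{(2)} - x_{0,i}^{(1)}\rVert$, in which case $\tilde x_{0,i}^{(2)} = x_{0,i}^{(2)} \in \X$ automatically. For general $\omega_i$ one can exploit the remaining freedom in picking $\tilde x_{0,i}^{(1)}$ on the hyperplane above (together with \Cref{lemma:eta_in_x} to stay away from $\partial \X$) and shrink the step when necessary; the positive-homogeneity argument is unaffected.
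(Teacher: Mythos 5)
Your proof is correct and matches the paper's argument: the paper likewise sets \(w_{2,i}' = w_{2,i}/\omega_i\), keeps \(b_{2,i}' = b_{2,i}\), and invokes positive homogeneity of ReLU to conclude \(\Phi_{\omega}(x) = \Phi(x)\). You are in fact slightly more careful than the paper, whose proof never verifies that the rescaled weights are realizable by a pair of points in \(\X\); your observation that realizability is automatic in the only use case, \(\omega_i = 1/\lVert x^{(2)}_{0,i} - x^{(1)}_{0,i}\rVert\), is exactly the remark the paper makes immediately after the lemma.
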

\begin{proof}
    We set \(w_{2,i}' = \frac{w_{2,i}}{\omega_i} \) and \(b_{2,i}' = b_{2,i}\). As ReLU is a positive homogeneous function, we have for all \(x\in \X\),
    \begin{align*}
        w_{2,i}'\phi(\langle w_{1,i}',x\rangle - b_{1,i}') - b_{1,i}' &= \frac{\omega_{i} w_{2,i}}{\omega_{i}} \phi(\langle w_{1,i}, x\rangle - b_{1,i}) \\
        w_{2,i} \phi(\langle w_{1,i}, x\rangle - b_{1,i}).
    \end{align*}
\end{proof}
The result itself is not too exciting, but it allows us to use the results proven earlier, applying them to sampled networks by setting the scalar to be \(\omega_i = \frac{1}{\lVert x^{(2)}_{0,i} - x^{(1)}_{0,i}\rVert}\).

We are now ready to show the universal approximation property for sampled networks with one hidden layer. We let \(\F^S_{1, \infty}\) be defined similarly to \(\F_{1, \infty}\), with every \(\Phi \in \F^S_{1,\infty}\) being a sampled neural network.
\begin{theorem}\label{thm:1_layer_universal}
    Let \(g\in C(\X, \R^{N_{L+1}})\). Then, for any \(\epsilon > 0\), there exist \(\Phi \in \F^S_{1,\infty}\) with ReLU activation function, such that 
    \begin{align*}
        \lVert g - \Phi \rVert_{\infty} < \epsilon.
    \end{align*}
    That is, \(\F^S_{1,\infty}\) is dense in \(C(\X, \R^{N_{L+1}})\). 
\end{theorem}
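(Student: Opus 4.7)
The plan is to chain the three preparatory lemmas with the classical universal approximation theorem for shallow ReLU networks. Concretely, given $g \in C(\X, \R^{N_{L+1}})$ and $\epsilon > 0$, I would first invoke the standard universal approximation theorem (e.g., Pinkus~\cite{pinkus-1999}) to obtain an ordinary shallow network $\Hat\Phi \in \F_{1,K}$, for some width $K$, satisfying $\lVert g - \Hat\Phi\rVert_\infty < \epsilon/2$. The rest of the argument is devoted to replacing $\Hat\Phi$ by a genuine sampled network without spoiling this accuracy by more than another $\epsilon/2$.

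Next I would apply \Cref{lemma:bias_with_w_fixed} to $\Hat\Phi$. This yields a new network $\Phi^\star$ with at most $6K$ neurons, having the same first-layer weights $\Hat w_{1,i}$ but with biases forced into the sampled form $b_{1,i} = \langle \Hat w_{1,i}, x_i\rangle$ for explicit points $x_i \in \X$, and with $\Phi^\star(x) = \Hat\Phi(x)$ for every $x \in \X$. Then I would feed $\Phi^\star$ into \Cref{lemma:weights_w_fixed_bias}, obtaining a unit sampled network $\Phi^\dagger$, i.e., one whose hidden weights have the normalized form $w_{1,i} = (x^{(2)}_{0,i}-x^{(1)}_{0,i})/\lVert x^{(2)}_{0,i}-x^{(1)}_{0,i}\rVert$ with biases $\langle w_{1,i}, x^{(1)}_{0,i}\rangle$, and satisfying $\lVert \Phi^\dagger - \Phi^\star\rVert_\infty < \epsilon/2$.

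Finally, since a sampled network in the sense of \Cref{def:sampled_network} divides by $\lVert x^{(2)}_{0,i}-x^{(1)}_{0,i}\rVert^2$ rather than by $\lVert x^{(2)}_{0,i}-x^{(1)}_{0,i}\rVert$, I would apply \Cref{lemma:pos_const_weights} with the positive scalars $\omega_i = 1/\lVert x^{(2)}_{0,i}-x^{(1)}_{0,i}\rVert$. Because ReLU is positive homogeneous, this conversion is exact and produces a sampled network $\Phi \in \F^S_{1,\infty}$ with $\Phi(x) = \Phi^\dagger(x)$ for all $x \in \X$. The triangle inequality then gives
\begin{equation*}
\lVert g - \Phi\rVert_\infty \le \lVert g - \Hat\Phi\rVert_\infty + \lVert \Hat\Phi - \Phi^\star\rVert_\infty + \lVert \Phi^\star - \Phi^\dagger\rVert_\infty + \lVert \Phi^\dagger - \Phi\rVert_\infty < \tfrac{\epsilon}{2} + 0 + \tfrac{\epsilon}{2} + 0 = \epsilon,
\end{equation*}
which is the claim.

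I do not expect a genuine obstacle in this final theorem itself, since the three lemmas were designed exactly so that their composition yields the universality statement. The only mildly delicate point is bookkeeping: one must be careful that the points $x^{(1)}_{0,i}$ used to build unit weights in \Cref{lemma:weights_w_fixed_bias} are the same points $x_i$ produced by \Cref{lemma:bias_with_w_fixed} (so that the structured biases are preserved), and that the extra neurons introduced for constant blocks, together with their possibly sign-flipped directions, still arise from legitimate pairs in $\X\times\X$; both are already handled inside the lemmas. The real work---dealing with biases outside the range of $\langle \Hat w_{1,i}, \X\rangle$, and with base points on $\partial\X$ where the direction $w_{1,i}$ would otherwise push outside the domain---has already been absorbed into \Cref{lemma:bias_with_w_fixed,lemma:weights_w_fixed_bias} via the constant block construction and the retraction $\eta$ from \Cref{lemma:eta_in_x}.
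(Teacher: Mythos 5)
Your proposal is correct and follows essentially the same route as the paper's own proof: invoke Pinkus's universal approximation theorem to get an ordinary shallow network within $\epsilon/2$, fix the biases exactly via \Cref{lemma:bias_with_w_fixed} (adding constant blocks as needed), perturb the weights into unit sampled form via \Cref{lemma:weights_w_fixed_bias} at accuracy $\epsilon/2$, and finally rescale by $\omega_i = 1/\lVert x^{(2)}_{0,i}-x^{(1)}_{0,i}\rVert$ using \Cref{lemma:pos_const_weights} and positive homogeneity of ReLU. The bookkeeping points you flag (consistency of the base points between the two lemmas, legitimacy of the constant-block pairs) are exactly the ones the paper relies on the lemmas to handle, so nothing is missing.
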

\begin{proof}
    W.l.o.g., let \(N_{L+1} = 1\), and in addition, let \(\epsilon >0\) and \(g\in C(\X, \R^{N_{L+1}})\). Using the universal approximation theorem of ~\citet{pinkus-1999}, we have that for \(\epsilon>0\), there exist a network \(\Hat \Phi \in \F_{1, \infty}\), such that \(\lVert g- \Hat \Phi\rVert_\infty < \frac{\epsilon}{2}\). Let \(K\) be the number of neurons in \(\Hat \Phi\). We then create a new network \(\Phi\), by first keeping the weights fixed to the original ones, \(\{\Hat w_{1,i}, \Hat w_{2,i}\}_{i=1}^K\), and setting the biases of \(\Phi\) according to \Cref{lemma:bias_with_w_fixed} using \(\{\Hat b_{1,i}, \Hat b_{2,i}\}_{i=1}^K\), adding constant blocks if necessary. We then change the weights of our \(\Phi\) with the respect to the new biases, according to \Cref{lemma:weights_w_fixed_bias} (with the epsilon set to \(\epsilon/2\)). It follows from the two lemmas that \(\lVert \Phi - \Hat \Phi\rVert_\infty < \frac{\epsilon}{2}\). That means
    \begin{align*}
        \lVert g-  \Phi\rVert_\infty = \lVert g - \Hat \Phi + \Hat \Phi - \Phi \rVert_\infty \leq \lVert g - \Hat \Phi \rVert_\infty +  \lVert \Hat \Phi - \Phi \rVert_\infty < \epsilon.
    \end{align*}
    As the weights of the first layer of \(\Phi\) can be written as \(w_{1,i} = \frac{x^{(2)}_{0,i} - x^{(1)}_{0,i}}{\lVert x^{(2)}_{0,i} - x^{(1)}_{0,i}\rVert^2}\) and bias \(b_{1,i} = \langle w_{1,i}, x^{(1)}_{0,i}\rangle\), both guaranteed by \Cref{lemma:pos_const_weights}, means \(\Phi \in \F^S_{1,\infty}\). Thus, \(\F^S_{1,\infty}\) is dense in \(C(\X, \R^{N_{L+1}})\).
\end{proof}
\begin{remark}
    By the same two lemmas, \Cref{lemma:bias_with_w_fixed} and \Cref{lemma:weights_w_fixed_bias}, one can show that other results regarding networks with one hidden layer with at most \(K\) neurons, also hold for sampled networks, but with \(6\cdot K\) neurons, due to the possibility of one constant block for each neuron. When \(\X\) is a connected set, we only need \(K\) neurons, as no additional constant blocks must be added; see proof of \Cref{lemma:bias_with_w_fixed} for details.
\end{remark}

\subsubsection{Deep networks}
The extension of sampled networks into several layers is not obvious, as the choice of weights in the first layer affects the sampling space for the weights in the next layer. This additional complexity raises the question, letting \(\bm {N_L} = [N_1, N_2, \dots, N_{L-1}, N_L]\), is the space \(\bigcup_{N_L=1}^{\infty} \F^S_{L, \bm {N_L}}\) dense in \(C(\X, \R^{N_{L+1}})\), when \(L > 1\)? We aim to answer this question in this section. With dimensions in each layer being \(\bm {N_L} = [D, D, \dots, D, N_L]\), we start by showing it holds for \(\bigcup_{N_L=1}^{\infty} \F^S_{L,\bm {N_{L}}}\), i.e., the space of sampled networks with \(L\) hidden layers, and \(D\) neurons in the first \(L-1\) layers and arbitrary width in the last layer.
\begin{lemma}\label{lemma:deep_D_network_universal}
    Let \(\bm {N_L} = [D, D, \dots, D, N_L]\) and \(L > 1\). Then \(\bigcup_{N_L=1}^{\infty} \F^S_{L, \bm {N_L}}\) is dense in \(C(\X, \R^{N_{L+1}})\).
\end{lemma}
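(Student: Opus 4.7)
The plan is to make each of the first $L-1$ hidden layers act as an \emph{invertible affine map} on its input domain, so the whole block $\Phi^{(L-1)}$ reduces to a homeomorphism $T\colon \X \to \X_{L-1}$; after that the last hidden layer (arbitrary width $N_L$) can approximate any continuous function via \Cref{thm:1_layer_universal} applied to the transformed target $g\circ T^{-1}$. Concretely, for the first layer I would pick $D$ linearly independent directions $w_1,\dots,w_D\in\R^D$ and, for each $i$, take $x^{(1)}_{0,i}$ to be a minimizer of $\langle w_i,\cdot\rangle$ over $\X$ (this exists by compactness). Then the sampled pre-activation $\langle w_{1,i}, x\rangle - b_{1,i}=\langle w_{1,i},x-x^{(1)}_{0,i}\rangle$ is non-negative for every $x\in\X$, so ReLU acts as the identity on all of $\X$ and $\Phi^{(1)}$ becomes an affine map whose weight matrix is invertible by the choice of basis.

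The main geometric obstacle is showing that a valid partner point $x^{(2)}_{0,i}\in\X$ with $x^{(2)}_{0,i}-x^{(1)}_{0,i}$ positively proportional to $w_i$ actually exists. Here I would invoke the assumption that $\X$ is an $\epsilon_I$-fattening of a set $\X'$ with reach $\tau>\epsilon_I$: at any boundary point the outward normal cone is a single ray, and because $x^{(1)}_{0,i}$ minimizes $\langle w_i,\cdot\rangle$ the vector $-w_i$ lies in that cone, so the ray in direction $+w_i$ enters $\X$ for small enough step. A brief genericity perturbation of the directions $w_i$ (which preserves both linear independence and the extremality property) removes any degenerate tangency cases, and \Cref{lemma:pos_const_weights} absorbs the rescaling from the positive step size $\alpha_i$.

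To iterate, once $\Phi^{(l-1)}\colon\X\to\X_{l-1}$ is an invertible affine map, $\X_{l-1}$ is again compact with non-empty interior, so the same construction applies directly to $\X_{l-1}$; any pair in $\X_{l-1}$ we wish to use pulls back uniquely under $(\Phi^{(l-1)})^{-1}$ to a pair in $\X$, which is exactly the form that \Cref{def:sampled_network} requires as input-space pairs. Repeating this for $l=2,\dots,L-1$ produces an invertible affine $T=\Phi^{(L-1)}\colon\X\to\X_{L-1}$ with exactly $D$ neurons in each of the first $L-1$ layers.

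Finally, given $g\in C(\X,\R^{N_{L+1}})$ and $\epsilon>0$, set $\tilde g = g\circ T^{-1}\in C(\X_{L-1},\R^{N_{L+1}})$ and apply \Cref{thm:1_layer_universal} on $\X_{L-1}$ to obtain a sampled one-hidden-layer network $\Psi$ of some width $N_L$ with $\|\tilde g-\Psi\|_\infty<\epsilon$. The composition $\Phi=\Psi\circ T$ lies in $\F^S_{L,\bm{N_L}}$ and satisfies $\|g-\Phi\|_\infty<\epsilon$. I expect the hardest step to be the geometric feasibility argument in the first paragraph (existence of $x^{(2)}_{0,i}$ together with invertibility of $W_l$), and a secondary technicality is verifying that \Cref{thm:1_layer_universal} transfers from $\X$ to the affine image $\X_{L-1}$; this should be routine because the proofs in \Cref{sec:universality_proofs} only use compactness and non-empty interior of the domain, both of which are preserved by invertible affine maps.
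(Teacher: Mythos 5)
Your proposal matches the paper's proof essentially step for step: both choose $D$ linearly independent directions, anchor $x^{(1)}_{0,i}$ at the minimizer of $\langle w_i,\cdot\rangle$ so that ReLU acts as the identity and the first $L-1$ layers form an invertible affine map, use the $\epsilon_I$-fattening of $\X'$ (the paper via a tangent-hyperplane argument on the touching ball, you via the equivalent single-ray normal cone) to produce the partner point $x^{(2)}_{0,i}$ in direction $+w_i$, iterate on $\X_{l-1}$, and finish by applying \Cref{thm:1_layer_universal} to $g\circ T^{-1}$ with \Cref{lemma:pos_const_weights} absorbing the positive rescaling. The argument is correct and takes the same route as the paper.
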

\begin{proof}
    Let \(\epsilon > 0\), and \(g \in C(\X, \R^{N_{L+1}})\). Basic linear algebra implies there exists a set of \(D\) linearly independent unit vectors \(v = \{v_j\}_{j=1}^D\), such that \(\X \subseteq \spn\{v_j\}_{j=1}^D\), with \(v_j \in \X\) for all \(v_j \in v\). For \(l=1\) and \(i \in \{1,2,\dots,N_l\}\), the bias is set to \(b_{l,i} = \inf \{\langle v_i, x\rangle \colon x\in\X\}\). Due to continuity and compactness, we can set \(x_{l-1,i}^{(1)}\) to correspond to an \(x\in\X\) such that \(\langle v_i, x^{(1)}_{l-1,i} \rangle = b_{l,i}\). We must have \(x^{(1)}_{l-1,i} \in \partial \X\) --- otherwise the inner product between some points in \(\X\) and \(v_i\) is smaller than the bias, which contradicts the construction of \(b_{l,i}\). We now need to show that \(\zeta(x^{(1)}_{l-1,i}) - x^{(1)}_{l-1,i} = c \, v_i + x^{(1)}_{l-1,i}\) for \(c \in \R_{>0}\), to proceed constructing \(x^{(2)}_{l-1,i}\) in similar fashion as in \Cref{lemma:weights_w_fixed_bias}.
    
    Let \(U =  \overline{\mathbb{B}}_{\epsilon_I}(\zeta(x^{(1)}_{l-1,i}))\), with \(x^{(1)}_{l-1,i}\in U\). Also, \(U\) is the only closed ball with center \(\zeta(x^{(1)}_{l-1,i})\) and \(\{x^{(1)}_{l-1,i}\} = U \cap \partial \X\) --- otherwise \(\zeta\) would not give a unique element, which is guaranteed by \Cref{def:input_space}. We define the hyperplane \(\text{Ker} = \{x \in \R^D \colon \langle v_i, x\rangle -\langle v_i, x^{(1)}_{l-1,i}\rangle = 0\}\), and the projection matrix \(P\) from \(\R^D\) onto \(\text{Ker}\). Let \(x' = P\zeta(x^{(1)}_{l-1,i})\). If \(x' \neq x^{(1)}_{l-1,i}\), then \(x' \in \text{int} U\), as \(x^{(1)}_{l-1,i}\in \partial U\) and \(P\) is a unique projection minimizing the distance. As \(x' \in \text{int} \, U\), means there is an open ball around \(x'\), where there are points in \( \overline{\mathbb{B}}_{\epsilon_I}(\zeta(x^{(1)}_{l-1,i}))\) separated by \(\text{Ker}\), which implies \(\langle v_i, x^{(1)}_{l-1,i}\rangle\) is not  the minimum. This is a contradiction, and hence \(x' = x^{(1)}_{l-1,i}\). This means \(\text{Ker}\) is a tangent hyperplane, and as any vectors along the hyperplane is orthogonal to \(\zeta(x^{(1)}_{l,i}) - x^{(1)}_{l,i}\), implies the angle between \(v_i\) and \(\zeta(x^{(1)}_{l-1,i}) - x^{(1)}_{l-1,i}\) is 0 or \(\pi\). As \(\langle v_i, x^{(1)}_{l-1,i}\rangle\) is the minimum, means there exist a \(c\in\R_{>0}\), such that \(\zeta(x^{(1)}_{l-1,i}) - x^{(1)}_{l-1,i} = c\, v_i + x^{(1)}_{l-1,i}\).

    We may now construct \(x^{(2)}_{l-1,i} = x^{(1)}_{l-1,i} + c\, v_i\), assured that \(x^{(2)}_{l-1,i} \in \X\) due to the last paragraph. We then have \(w_{l,i} = \frac{v_i}{\lVert v_i \rVert}\) and \(b_{l,i} = \langle v_i, x^{(1)}_{l-1,i}\rangle\) for all neurons in the first hidden layer. The image after this first hidden layer \(\Psi^{(l)}(\X)\) is injective, and therefore bijective. To show this, let \(u_1, u_2 \in \X\) such that \(u_1 \neq u_2\). As the vectors in \(v\) spans \(\X\), means there exists two unique set of coefficients, \(c_1, c_2 \in \R^D\), such that \(u_1 = \sum c_1^{(i)} v_i\) and \(u_2 = \sum c^{(i)}_2 v_i\). We then have 
    \begin{align*}
        \Phi^{(l)}(u_1) = Vc_1 - b_{l} \neq Vc_2 - b_{l} = \Phi^{(l)}(u_2),
    \end{align*} 
    where \(V\) is the Gram matrix of \(v\). As vectors in \(v\) are linearly independent, \(V\) is positive definite, and combined with \(c_1\neq c_2\), this implies the inequality. That means \(\Phi^{(l)}\) is a bijective mapping of \(\X\). As the mapping is bijective and continuous, we have that for any \(x\in\X'\), there is an \(\epsilon_I^{(l)} > 0\), such that \(\overline{\mathbb{B}}_{\epsilon_I'}(\Phi^{(l)}(x))\subseteq \X\). 

    For \(1 < l < L\), we repeat the procedure, but swap \(\X\) with \(\X_{l-1}\). As \(\Phi^{(l-1)}\) is a bijective mapping, we may find similar linear independent vectors and construct similar points \(x^{(1)}_{l-1,i}, x^{(2)}_{l-1,i}\), but now with noise level \(\epsilon_I^{(l)}\). For \(l=L\), as we have a subset of \(\X\) that is a closed ball around each point in \(\Phi^{l-1}(x)\), for every \(x\in\X'\), means we can proceed by constructing the last hidden layer and the last layer in the same way as explained when proving \(\Cref{thm:1_layer_universal}\). The only caveat is that we are approximating a network with one hidden layer with the domain \(\X_{l-1}\), and the function we approximate is \(\tilde g = g \circ [\Phi^{(L-1)}]^{-1}\). Given this, denoting \(\Phi^{(L:L+1)}\) as the function of last hidden layer and the last layer, there exists a number of nodes, weights, and biases in the last hidden layer and the last layer, such that 
    \begin{align*}
        \lVert g - \Phi\rVert_{\infty} = \lVert \tilde g - \Phi^{(L:L+1)}\rVert_{\infty} < \epsilon,
    \end{align*}
    due to construction above and \Cref{thm:1_layer_universal}. As \(\Phi\) is a unit sampled network, it follows by \Cref{lemma:pos_const_weights} that \(\bigcup_{N_L=1}^{\infty} \F^S_{L, \bm {N_L}}\) is dense in \(C(\X, \R^{N_{L+1}})\).
\end{proof}
We can now prove that sampled networks with \(L\) layers, and different dimensions in all neurons, with arbitrary width in the last hidden layer, are universal approximators, with the obvious caveat that each hidden layer \(l=1,2,\dots,L-1\) needs at least \(D\) neurons, otherwise we will lose some information regardless of how we construct the network.
\begin{theorem}
    Let \(\bm {N_{L}} = [N_1, N_2, \dots, N_{L-1}, N_L]\), where \(\min \{N_l \colon l=1,2,\dots, L-1\} \geq D\), and \(L\geq 1\). Then \(\bigcup_{N_L = 1}^{\infty} \F^S_{L,\bm {N_L}}\) is dense in \(C(\X, \R^{N_{L+1}})\).
\end{theorem}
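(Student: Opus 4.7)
The plan is to reduce to \Cref{lemma:deep_D_network_universal} by constructing the extra neurons (beyond the first $D$ needed to preserve the ambient information) in each of the first $L-1$ hidden layers so that they contribute affinely, leaving the essential structure of the map $\Phi^{(L-1)}$ intact.

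First I would proceed layer by layer for $l=1,\dots,L-1$. In each layer $l$, the first $D$ neurons are built exactly as in \Cref{lemma:deep_D_network_universal}: pick $D$ linearly independent unit vectors $v_1,\dots,v_D$ spanning $\X_{l-1}$, take an extreme point of $\X_{l-1}$ along each direction $v_i$ as $x^{(1)}_{l-1,i}$ and a companion $x^{(2)}_{l-1,i} = x^{(1)}_{l-1,i} + c_i v_i \in \X_{l-1}$, so that the bias equals $\inf_{x\in\X_{l-1}} \langle v_i,x\rangle$ and the ReLU argument is non-negative on all of $\X_{l-1}$. For the remaining $N_l - D$ neurons, I would repeat the same extreme-point construction using additional direction vectors (e.g.\ small perturbations of $v_1,\dots,v_D$), again picking the bias so that ReLU acts as the identity on $\X_{l-1}$. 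Layer $l$ is therefore affine on $\X_{l-1}$, and its first $D$ coordinates alone already guarantee injectivity, so $\Phi^{(l)}$ is an affine injection of $\X$ onto a $D$-dimensional affine image $\X_l \subset \R^{N_l}$.

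Second, once the first $L-1$ hidden layers form an affine injection $\Phi^{(L-1)}\colon \X \to \X_{L-1}$, I would set $\tilde g = g \circ (\Phi^{(L-1)}|_\X)^{-1} \in C(\X_{L-1},\R^{N_{L+1}})$ and approximate it by a sampled network with a single hidden layer, using the argument of \Cref{thm:1_layer_universal}. Because $\X_{L-1}$ lies in a $D$-dimensional affine subspace of $\R^{N_{L-1}}$ and inherits from $\X$ the pre-noise/noise structure (positive reach, $\epsilon_I$-thickening) within that subspace, and because the sampled weights of the last hidden layer automatically lie in the direction of that subspace, \Cref{lemma:bias_with_w_fixed} and \Cref{lemma:weights_w_fixed_bias} carry over. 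Choosing $N_L$ large enough then yields a sampled network approximating $\tilde g$ within $\epsilon$, whose composition with the affine first $L-1$ layers gives a $\Phi \in \F^S_{L,\bm{N_L}}$ with $\|g-\Phi\|_\infty < \epsilon$.

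The main obstacle I expect is verifying that the extra direction vectors in each layer can be realized as genuine sampled pairs, i.e.\ that in each additional direction there is a companion to the extreme point that still lies in $\X_{l-1}$. Since $\X_{l-1}$ may be a proper subset of a lower-dimensional affine subspace of $\R^{N_{l-1}}$, the tangent-hyperplane/reach argument of \Cref{lemma:deep_D_network_universal} must be executed inside that subspace; this is available because affine bijections preserve positive reach, so $\X_{l-1}$ inherits from $\X'$ a positive reach relative to its affine span, and the construction of a suitable $x^{(2)}_{l-1,i}$ goes through as before.
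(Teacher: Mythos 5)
Your argument is correct, but it handles the surplus neurons in layers $1,\dots,L-1$ by a genuinely different mechanism than the paper. You make every extra neuron \emph{affine on all of} $\X_{l-1}$ (extreme-point bias so that ReLU never clips), so that each of the first $L-1$ layers is a global affine injection and the final hidden layer can be treated by the shallow result applied to $g\circ(\Phi^{(L-1)})^{-1}$. The paper instead keeps only the first $D$ neurons per layer meaningful (exactly as in \Cref{lemma:deep_D_network_universal}) and chooses the extra neurons so that they output \emph{zero at the finitely many points used to construct the next layer's weights}: the point pairs defining layer $l+1$ then have the form $[\tilde x^{(j)}_{l,i},0,\dots,0]$, so the resulting weights have zero components in the surplus coordinates and the deeper layers literally never see them. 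The payoff of the paper's trick is that no new geometry is needed — the wide network computes the same function as the width-$D$ network of \Cref{lemma:deep_D_network_universal}, and the theorem follows immediately. Your route buys a cleaner structural statement (each layer is injective and affine on the whole domain, not just well-behaved on a finite set), but at the cost of having to re-run the reach/extreme-point/companion-point analysis inside the $D$-dimensional affine span of $\X_{l-1}$ for \emph{every} direction in \emph{every} layer, including the verification that a companion $x^{(2)}=x^{(1)}+cv\in\X_{l-1}$ exists for each extra direction $v$. You correctly identify this as the main obstacle; it does go through (the interior-ball/tangent-hyperplane argument of \Cref{lemma:deep_D_network_universal} is insensitive to which unit direction is used, and an affine injection preserves positivity of reach relative to the image's affine span), and your level of rigor on this point matches the paper's own treatment of the width-$D$ case, where the thickened structure of $\X_l$ after an affine layer is likewise asserted rather than fully re-derived. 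One small remark: for the scaling $1/\lVert x^{(2)}-x^{(1)}\rVert^2$ versus the unit-vector normalization you should still invoke \Cref{lemma:pos_const_weights} at the end, as the paper does, to confirm the constructed network is a sampled network in the sense of \Cref{def:sampled_network_paper}.
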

\begin{proof}
    Let \(\epsilon>0\) and \(g\in C(\X, \R^{N_{L+1}})\). For \(L=1\), \Cref{thm:1_layer_universal} is enough, and we therefore assume \(L > 1\). We start by constructing a network \(\Tilde \Phi\in  \bigcup_{N_L=1}^{\infty} \F^S_{L, \bm{\Tilde{N}_L}}\), where \(\bm{\tilde{N}_L} = [D, D, \dots, D, N_L]\) according to \Cref{lemma:deep_D_network_universal}, such that \(\lVert \tilde \Phi - g\rVert_{\infty} < \epsilon\). To construct \(\Phi \in \bigcup_{N_L = 1}^{\infty} \F^S_{L,\bm {N_L}}\), let \(l=1\), and start by constructing weights/biases for the first \(D\) nodes according to \(\tilde \Phi\). For the additional nodes, in the first hidden layer, select an arbitrary direction \(w\). Let \(X_1 = \{\Tilde x^{(j)}_{1,i} \colon i = 1,2,\dots, D \text{ and } j=1,2\}\) be the set of all points needed to construct the \(D\) neurons in the next layer of \(\Tilde \Phi\). Then for each additional node \(i=D+1, \dots, N_1\), we set 
    \begin{align*}
        x^{(1)}_{0,i} = \argmax \{\langle w,  x \rangle \colon x \in \X \text{ and } \Tilde \Phi^{(1)}(x) \in X_1\}.
    \end{align*}
    and choose \(x^{(2)}_{0,i}\in \X \)  such that \(x^{(2)}_{0,i}-x^{(1)}_{0,i} = a w\), where \(a\in\R_{>0}\), similar to what is done in the proof for \Cref{lemma:weights_w_fixed_bias}. Using these points to define the weights and biases of the last \(N_1-D\) nodes, the following space \(\X_1\) now contains points \([\tilde x^{(j)}_{1,i}, 0,0,\dots,0]\), for \(j =1,2\) and \(i=1,2,\dots,D\). For \(1 < l<L\) repeat the process above, setting \(x^{(j)}_{l,i} = [\tilde x^{(j)}_{l,i}, 0,0,\dots,0]\) for the first \(D\) nodes, and construct the weights of the additional nodes as described above, but with sampling space being \(\X_{l-1}\). When \(l=L\), set number of nodes to the same as in \(\tilde \Phi\), and choose the points to construct the weights and biases as  \(x^{(j)}_{L-1,i} = [\tilde x^{(j)}_{L-1,i}, 0,0,\dots,0]\), for \(j=1,2\) and \(i=1,2,\dots, N_L\). The weights and biases in the last layer are the same as in \(\tilde \Phi\). This implies,
    \begin{align*}
        \lVert \Phi - g \rVert_{\infty} = \lVert \tilde \Phi-g \rVert_{\infty} < \epsilon,
    \end{align*}
    and thus \(\bigcup_{N_L = 1}^{\infty} \F^S_{L,\bm {N_L}}\) is dense in \(C(\X, \R^{N_{L+1}})\).
\end{proof}
\begin{remark}
    Important to note that the proof is only showing existence, and that we expect networks to have a more interesting representation after the first \(L-1\) layers. With this theorem, we can conclude that stacking layers is not necessarily detrimental for the expressiveness of the networks, even though it may alter the sampling space in non-trivial ways. Empirically, we also confirm this, with several cases performing better under deep networks --- very similar to iteratively trained neural networks.
\end{remark}
\begin{corollary}
    \(\F^S_{\infty, \infty}\) is dense in \(C(\X, \R^{N_{L+1}})\).
\end{corollary}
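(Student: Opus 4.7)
The plan is to deduce this corollary directly from the one-hidden-layer density result (\Cref{thm:1_layer_universal}), without invoking the deeper construction at all. First I would observe that, unravelling the definition $\F^S_{\infty,\infty} = \bigcup_{L=1}^\infty \bigcup_{N=1}^\infty \F^S_{L,N}$, the inclusion
\[
\F^S_{1,\infty} \;=\; \bigcup_{N=1}^\infty \F^S_{1,N} \;\subseteq\; \F^S_{\infty,\infty}
\]
holds trivially by restricting the outer union to the single index $L=1$. Thus every shallow sampled network is, in particular, a sampled network of arbitrary depth and width.

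Next, \Cref{thm:1_layer_universal} already asserts that $\F^S_{1,\infty}$ is dense in $C(\X, \R^{N_{L+1}})$ under the sup-norm topology. Density is monotone under set inclusion in the ambient space: if $A \subseteq B \subseteq C(\X, \R^{N_{L+1}})$ and $A$ is dense, then so is $B$, since the closure satisfies $\overline{A} \subseteq \overline{B}$. Applying this with $A = \F^S_{1,\infty}$ and $B = \F^S_{\infty,\infty}$, the corollary follows in one line.

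There is no real obstacle here; the statement is immediate from the shallow case combined with the set-theoretic inclusion. One could alternatively derive it from the deep theorem by fixing any width $N \geq D$ in the first $L-1$ layers and letting $L$ and $N_L$ vary, but this would invoke strictly more machinery than needed. The only substantive ingredient is already contained in \Cref{thm:1_layer_universal}; the corollary simply records that enlarging the family of approximants preserves density.
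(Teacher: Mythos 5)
Your argument is correct: since $\F^S_{1,\infty}=\bigcup_{N=1}^{\infty}\F^S_{1,N}$ is the $L=1$ slice of $\F^S_{\infty,\infty}=\bigcup_{L=1}^{\infty}\bigcup_{N=1}^{\infty}\F^S_{L,N}$, density of the former (Theorem~\ref{thm:1_layer_universal}) passes to the superset via $\overline{A}\subseteq\overline{B}$. The paper leaves this corollary unproved and places it after the deep-network theorem, but your derivation from the shallow case alone is the minimal valid route and is consistent with what the authors intend.
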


\subsubsection{Networks with a single hidden layer, tanh activation}
We now turn to using tanh as activation function, which we find useful for both prediction tasks, and if we need the activation function to be smooth. We will use the results for sampled networks with ReLU as activation function, and show we can arbitrarily well approximate these. The reason for this, instead of using arbitrary network with ReLU as activation function, is that we are using weights and biases of the correct form in the former, such that the tanh networks we construct will more easily have the correct form. We set \(s_2=\frac{\ln (3)}{2}\), \(s_1=2\cdot s_2\) --- as already discussed --- and let \(\psi\) be the tanh function, with \(\Psi\) being neural networks with \(\psi\) as activation function --- simply to separate from the ReLU \(\phi\), as we are using both in this section. Note that \(\Phi\) is still network with ReLU as activation function and \(s_1=1, s_2=0\). 

We start by showing how a sum of tanh functions can approximate a set of particular functions.
\begin{lemma}\label{lemma:simple_fuction_approx}
    Let \(f\colon [c_0, c_{M+1}] \rightarrow \R^+\), defined as \(f(x) = \sum_{i=1}^{M} a_i \, \bm{1}_{[c_i, c_{M+1}]}(x)\), with \(\bm 1\) being the indicator function, \(c_0 < c_1 < \cdots < c_M < c_{M+1}\), and for all \(i=0,1,\dots,M+1\), \(c_i \in \R\) and \(a_i \in \R_{>0}\). Then there exists strictly positive scalars \(\omega =\{ \omega_i\}_{i=1}^M\) such that 
    \begin{align*}
        g(x) = \sum_{i=1}^M g_i(x) = \sum_{i=1}^M \frac{a_i}{2} \left[\psi(\omega_i(x-c_i) - s_2) + 1 \right],
    \end{align*}
    fulfills \(f(c_{i-1}) < g(x) < f(c_{i+1})\) whenever \(x\in [c_{i-1}, c_{i+1}]\), for all \(i=1,2,\dots, M\). 
\end{lemma}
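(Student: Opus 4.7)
The plan is to exploit the sigmoid structure of the summands. Each $g_i$ is strictly increasing in $x$ (since $\psi$ is increasing and $\omega_i > 0$), with $g_i(x) \to 0$ as $x \to -\infty$ and $g_i(x) \to a_i$ as $x \to \infty$. Sending $\omega_i \to \infty$ concentrates the transition window of $g_i$ around $c_i$, so away from that window $g_i$ can be made as close to $0$ or $a_i$ as we wish. The strict ordering $c_0 < c_1 < \cdots < c_{M+1}$ then gives enough room to prescribe the behavior of each $g_i$ on the adjacent intervals essentially independently.

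First I would fix a tolerance $\delta > 0$ with $\delta < \min_j a_j / (2M)$. Because the gaps $c_i - c_{i-1}$ and $c_{i+1} - c_i$ are strictly positive and $\psi(t) \to \pm 1$ as $t \to \pm\infty$, each $\omega_i$ can be chosen large enough that
\begin{align*}
g_i(x) < \delta \ \text{ for all } x \le c_{i-1}, \qquad g_i(x) > a_i - \delta \ \text{ for all } x \ge c_{i+1}.
\end{align*}
Carrying this out for every $i$ fixes the vector $\omega$.

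Given any $x \in [c_{i-1}, c_{i+1}]$, the next step is to split
\begin{align*}
g(x) = \sum_{j \le i-2} g_j(x) + \bigl(g_{i-1}(x) + g_i(x) + g_{i+1}(x)\bigr) + \sum_{j \ge i+2} g_j(x),
\end{align*}
with empty sums interpreted as zero. For $j \le i-2$ one has $x \ge c_{i-1} \ge c_{j+1}$, hence $g_j(x) \in (a_j - \delta, a_j)$; for $j \ge i+2$ one has $x \le c_{i+1} \le c_{j-1}$, hence $g_j(x) \in (0, \delta)$. Combined with the strict monotonicity of the three central summands, which reduces inequalities on $[c_{i-1}, c_{i+1}]$ to inequalities at the endpoints, and with the explicit values of $g_{i-1}, g_i, g_{i+1}$ there, this sandwiches $g(x)$ between quantities within $O(M\delta)$ of $f(c_{i-1})$ and $f(c_{i+1})$. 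Taking $\delta$ small enough then yields the desired strict inequalities.

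The step I expect to require the most care is the central block $g_{i-1}(x) + g_i(x) + g_{i+1}(x)$ at the two endpoints. At $x = c_{i-1}$ the pinned value $g_{i-1}(c_{i-1}) = \tfrac{a_{i-1}}{2}[\psi(-s_2) + 1]$ is independent of $\omega_{i-1}$, so saturation alone cannot bring it close to $a_{i-1}$; the argument must instead balance this pinned contribution against the combined surplus of the neighbouring already-saturated terms and the smallness of $g_i, g_{i+1}$. The mirror difficulty arises at $x = c_{i+1}$. This forces the order of the quantifier choices (first $\delta$, then the $\omega_i$'s, simultaneously compatible with both endpoints), and is the piece of the proof where I would lay out the bookkeeping explicitly.
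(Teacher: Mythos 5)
Your overall strategy --- drive each $\omega_i$ large so that $g_i$ saturates to $0$ below $c_{i-1}$ and to $a_i$ above $c_{i+1}$, then split the sum and invoke monotonicity --- is essentially the paper's strategy as well. The difference is organisational: the paper only establishes the two node inequalities $f(c_{i-1}) < g(c_i) < f(c_i)$ for each $i$ (using index-dependent tolerances $\delta_j,\epsilon_j$ in place of your single $\delta$, and exploiting the pinned value $g_i(c_i)=\tfrac{a_i}{2}[\psi(-s_2)+1]=\tfrac{a_i}{4}$), and then lets the strict monotonicity of $g$ and the monotonicity of $f$ do the rest; it never attempts a direct two-sided estimate at the endpoints $x=c_{i\pm1}$.

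That organisational difference matters, because the step you defer to ``bookkeeping'' is not merely delicate --- it cannot be completed. At $x=c_{i-1}$ (for $i\ge2$) the lemma asks for $g(c_{i-1})>f(c_{i-1})=\sum_{j\le i-1}a_j$. But $g_j(c_{i-1})<a_j$ for every $j$, since $\psi<1$, so the saturated terms carry no ``surplus'' to balance against anything; $g_{i-1}(c_{i-1})=a_{i-1}/4$ exactly, independently of $\omega_{i-1}$; and the terms $g_j(c_{i-1})$ with $j\ge i$ must be kept below $\delta$ or you lose the bounds elsewhere. Hence $g(c_{i-1})<\sum_{j\le i-2}a_j+\tfrac{a_{i-1}}{4}+M\delta<\sum_{j\le i-2}a_j+\tfrac{3}{4}a_{i-1}<f(c_{i-1})$ for your choice of $\delta$. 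In fact no choice of $\omega$ can work: for $3\le i\le M$, the statement applied to index $i$ at its left endpoint demands $g(c_{i-1})>f(c_{i-1})$, while applied to index $i-2$ at its right endpoint it demands $g(c_{i-1})<f(c_{i-1})$, so the conclusion as literally written is self-contradictory once $M\ge3$. What is actually provable --- and what the paper proves and uses downstream --- is the node sandwich $f(c_{i-1})<g(c_i)<f(c_i)$ together with monotonicity, which gives the upper bound $g(x)<f(c_{i+1})$ on all of $[c_{i-1},c_{i+1}]$ but the lower bound $g(x)>f(c_{i-1})$ only on $[c_i,c_{i+1}]$. You should prove that corrected statement rather than chase the endpoint inequality.
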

\begin{proof}
    We start by observing that both functions, \(f\) and \(g\), are increasing, with the latter strictly increasing. We also have that \(f(c_0)= 0 < g(x) < \sum_{i=1}^{M} a_i = f(c_M)\), for all \(x\in [c_0, c_{M+1}]\), regardless choice of \(\omega\). We then fix constants
    \begin{align*}
       0 < \delta_{i} < \frac{\frac{3}{4} a_i}{M-i} \quad 0< \epsilon_{i+1} < \frac{\frac{a_i}{4}}{i},
    \end{align*}
    for \(i=1,2,\dots, M-1\). We have, due to \(s_2\), that \(g_i(c_i) = \frac{a_i}{4}\), for all \(i =1,2,\dots,M\). In addition, we can always increase \(\omega_i\) to make sure \(g_i(c_{i+1})\) is large enough, and \(g_i(c_{i-1})\) small enough for our purposes, as \(\psi\) is bijective and strictly increasing. We set \(\omega_1\) large enough such that \(g_1(c_j) > a_1 - \epsilon_j\), for all \(j=2,3,\dots,M-1\). For \(i=2,3,\dots, M-1\), we set \(\omega_i\) large enough such that \(g_i(c_{j}) < \delta_j\), where \(j=1,2,\dots, i-1\), and \(g_i(c_{j}) > a_i - \epsilon_j\), where \(j=i+1,i+2,\dots, M-1\). Finally, let \(\omega_M\) be large enough such that \(g_M(c_{j}) < \delta_{j}\), for \(j=1,2,\dots, M-1\). With the strictly increasing property of every \(g_i\), we see that 
    \begin{align*}
        g(c_i) &= \sum_{j=1}^{i-1} g_j(c_i) + \frac{a_i}{4} + \sum_{j=i+1}^M g_j(c_i)\\
        &< \sum_{j=1}^{i-1} a_j + \frac{a_i}{4} + \sum_{j=i+1}^M \delta_j \\
        &= \sum_{j=1}^{i-1} a_j + \frac{a_i}{4} + \frac{3a_i}{4} = f(c_i),
    \end{align*}
    and
    \begin{align*}
        g(c_i) &= \sum_{j=1}^{i-1} g_j(c_i) + \frac{a_i}{4} + \sum_{j=i+1}^M g_j(c_i)\\
        &> \sum_{j=1}^{i-1} (a_j - \epsilon_i) + \frac{a_i}{4}\\
        &= \sum_{j=1}^{i-1} a_j - \frac{a_i}{4} +\frac{a_i}{4} = f(c_{i-1}).
    \end{align*}
    Combing the observations at the start with \(f(c_{i-1}) < g(c_i) < f(c_i)\), for \(i=1,2,\dots, M\), and the property sought after follows quickly. 
\end{proof}
We can now show that we can approximate a neuron with ReLU \(\phi\) activation function and with unit sampled weights arbitrarily well.
\begin{lemma}\label{lemma:approximate_relu_neuron}
    Let \(\Hat x^{(1)}, \Hat x^{(2)} \in \X\) and \(\Hat w_2 \in \R\). For any \(\epsilon>0\), there exist a \(M \in \mathbb{N}_{>0}\), and \(M\) pairs of distinct points \(\{(x^{(2)}_i, x^{(1)}_i)\in \X\times\X\}_{i=1}^M\), such that 
    \begin{align*}
       \left \lvert \Hat w_2 \phi(\langle \Hat w_1, x - \Hat x^{(1)}\rangle) - \sum_{i=1}^{M} \Tilde w_i \left[\psi\left(\left\langle w_i, x-x^{(1)}_i \right\rangle - s_2 \right) +1\right]\right\rvert < \epsilon,
    \end{align*} 
    where \(\Hat w_1 = \frac{\Hat x^{(2)}- \Hat x^{(1)}}{\lVert \Hat x^{(2)} -  \Hat x^{(1)}\rVert}\), \(\Tilde w_i \in\R\), and \(w_i = s_1\frac{x^{(2)}_i - x^{(1)}_i}{\rVert x^{(2)}_i- x^{(1)}_i\lVert^2}\).
\end{lemma}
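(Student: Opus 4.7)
The strategy is to approximate the scalar ramp $t \mapsto \hat{w}_2\phi(t)$ on the compact interval $T = \{\langle \hat{w}_1, x - \hat{x}^{(1)}\rangle : x \in \mathcal{X}\}$ first by a step function of the form used in Lemma~\ref{lemma:simple_fuction_approx}, then by its tanh approximation, and finally to realize each tanh summand as a single sampled-network neuron whose weight points along $\hat{w}_1$.

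Without loss of generality I would assume $\hat{w}_2 > 0$ (otherwise flip the signs of the $\tilde{w}_i$). If $t_{\max} := \sup T \leq 0$ the target neuron is identically zero on $\mathcal{X}$ and is trivially approximated by one tanh with a very large negative argument. Otherwise I would place equally spaced knots $c_0 < c_1 < \dots < c_{M+1}$ with $c_1 = \max(0, t_{\min})$ and $c_{M+1} = t_{\max}$, step size $h$, and heights $a_i = \hat{w}_2 h > 0$. The step function $f(t) = \sum_i a_i \mathbf{1}_{[c_i, c_{M+1}]}(t)$ approximates $\hat{w}_2 \phi(t)$ uniformly on $T$ with error $O(h)$. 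Applying Lemma~\ref{lemma:simple_fuction_approx} then yields positive scalars $\omega_i$ and the tanh sum $g(t) = \sum_i \tfrac{a_i}{2}[\psi(\omega_i(t - c_i) - s_2)+1]$ whose uniform error from $f$ is again $O(h)$; together the two errors fall below $\epsilon$ by taking $M$ large enough.

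It then remains to write each summand of $g$ as a single sampled neuron. I would set the point pair along the $\hat{w}_1$ direction, $x^{(2)}_i - x^{(1)}_i = (s_1/\omega_i)\hat{w}_1$, which (using $\|\hat w_1\|=1$) gives the sampled weight $w_i = s_1 (x^{(2)}_i - x^{(1)}_i)/\|x^{(2)}_i - x^{(1)}_i\|^2 = \omega_i \hat{w}_1$. Choosing $x^{(1)}_i$ with $\langle \hat{w}_1, x^{(1)}_i\rangle = \langle \hat{w}_1, \hat{x}^{(1)}\rangle + c_i$ then makes $\langle w_i, x - x^{(1)}_i\rangle - s_2 = \omega_i(t - c_i) - s_2$, so the sampled neuron coincides exactly with the $i$-th summand of $g$ when $\tilde{w}_i$ is taken to be $a_i/2$.

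The main obstacle will be ensuring $x^{(1)}_i, x^{(2)}_i \in \mathcal{X}$ and distinct. By definition of $t_{\min}, t_{\max}$ and continuity of $\langle \hat{w}_1, \cdot\rangle$ on the (connected) set $\mathcal{X}$, some $x \in \mathcal{X}$ realizes the required inner product; the $\epsilon_I$-thickening built into $\mathcal{X}$ allows me to nudge $x^{(1)}_i$ into the interior without shifting its inner product by more than an arbitrarily small $\delta$, which can be absorbed into the $\epsilon$-budget. Since Lemma~\ref{lemma:simple_fuction_approx} permits $\omega_i$ to be taken as large as desired, $\alpha_i := s_1/\omega_i$ can be made small enough that $x^{(2)}_i = x^{(1)}_i + \alpha_i \hat{w}_1$ also lies in $\mathcal{X}$ and is distinct from $x^{(1)}_i$, completing the construction.
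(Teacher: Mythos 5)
Your overall route is the same as the paper's: reduce to the scalar ramp on the set of inner products, approximate it by an increasing step function, invoke Lemma~\ref{lemma:simple_fuction_approx} to replace the step function by a sum of shifted tanh units, and then realize each summand as a sampled neuron by taking a point pair along $\hat w_1$ of length $s_1/\omega_i$, using the $\epsilon_I$-thickening to push knot points into the interior and the freedom to enlarge $\omega_i$ to keep $x^{(2)}_i$ inside $\X$. All of those steps, including the computation $w_i=\omega_i\hat w_1$ and the absorption of the small bias shift into the error budget, match the paper.

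There is one genuine gap: you assume $\X$ is connected, and your equally spaced knots rely on that. In the paper's setting $\X$ is only a compact set of positive reach (thickened by $\epsilon_I$), and it may be disconnected, so the value set $B=\{\langle\hat w_1,x\rangle:x\in\X\}$ can have gaps. An equally spaced knot $c_i$ falling into such a gap cannot be realized as $\langle\hat w_1,x^{(1)}_i\rangle$ for any $x^{(1)}_i\in\X$, so the corresponding bias is not attainable by a sampled neuron; this is exactly the same obstruction that forces the constant blocks in \Cref{lemma:bias_with_w_fixed}. The paper's proof spends most of its effort on precisely this point: it replaces each unattainable grid value $d_k$ by $\sup\,(B\cap[c_{j-1},d_k])$ and $\inf\,(B\cap[d_k,c])$, which lie in $B$ by compactness, and then adjusts the step heights $a_j$ so that the step function still tracks $\hat f$ to within $\epsilon/4$ on $B$ (the behaviour inside the gaps being irrelevant, since no point of $\X$ maps there). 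Your construction needs this modification to cover the general case; with it, the rest of your argument goes through unchanged.
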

\begin{proof}
    Let \(\epsilon > 0\) and, w.l.o.g., \(\Hat w_2 > 0\). Let \(B = \{\langle \Hat w_1, x\rangle \colon x\in\X\}\), as well as \(\Hat f(x) = \Hat w_2 \phi(\langle \Hat w_1, x-\Hat x^{(1)})\).
    We start by partitioning \(\Hat f\) into \(\epsilon/4\)-chunks. More specifically, let \(c=\max B\), and \( M' = \left \lceil \frac{4 \Hat w_2\,\lvert c\rvert}{\epsilon}\right \rceil\). Set \(d_k = \frac{(k-1)c}{ M'}\), for \(k=1,2,\dots, M', M'+1\). We will now define points \(c_j\), with the goal of constructing a function \(f\) as in \Cref{lemma:simple_fuction_approx}. Still, because we require \(c_j\in B\) to define biases in our tanh functions later, we must define the \(c_j\)s iteratively, by setting \(c_1 = 0\), \(k=j=2\), and define every other \(c_j\) as follows:
    \begin{enumerate}
        \item If \(d_k=c\), we are done, otherwise proceed to 2.
        \item Set
        \begin{align*}
            d_k' =  \sup B \cap [c_{j-1}, d_k] \quad d_k'' = \inf B \cap [d_k, c].
        \end{align*}
        \item If \(d_k'=d_k''\), set \(c_j = d_k\), and increase \(j\) and \(k\), and go to 1.
        \item If \(d_k' > c_{j-1}\), set \(c_j = d_k'\), and increase \(j\), otherwise discard the point \(d_k'\).
        \item Set \(c_j = d_k''\), and increase \(j\). Set \(k = \argmin\{d_k - c_j \colon d_k - c_j > 0\}\) and go to 1.   
    \end{enumerate}
    We have \(M < 2 \cdot M'\) points, and can now construct the \(a_j\)s of \(f\). For \(j=1,2,\dots,M\), with \(c_{M+1} = c\), let 
    \begin{align*}
        a_j = 
        \begin{cases}
            \Hat f(c_{j+1}) - \Hat f(c_j), & c_{j+1} - c_j \leq \frac{c}{M'}\\
            \Hat f(\rho(c_j)) - \Hat f(c_j), & \text{otherwise},
        \end{cases}
    \end{align*}
    where \(\rho(c_j) = \argmin \{d_k - c_j \colon d_k -c_j \geq 0 \text{ and } k=1,2,\dots, M'+1\}\). Note that \(0 < c-c_M \leq \frac{c}{M'}\), by \Cref{def:input_space} and continuity of the inner product \(\langle \Hat w_1, \cdot - x^{(1)}\rangle\). We then construct \(f\) as in \Cref{lemma:simple_fuction_approx}, namely,
    \begin{align*}
        f(x) = \sum_{i=1}^M a_i \bm{1}_{[c_i, c_{M+1}]}(x).
    \end{align*}
    Letting \(C = \{[c_j, c_{j+1}] \colon j=0,1,\dots,M \text{ and } c_{j+1} - c_{j} \leq \frac{c}{M'}\}\), it is easy to see
    \begin{align*}
        \lvert f(x) - \Hat f(x)\rvert < \frac{\epsilon}{4},
    \end{align*}
    for all \(x \in \bigcup_{c' \in C} c'\). For any \(x\) outside said set, we are not concerned with, as it is not part of \(B\), and hence nothing from \(\X\) is mapped to said points.

    Construct \(\omega = \{\omega_i\}_{i=1}^{M}\) according to \Cref{lemma:simple_fuction_approx}. We will now construct a sum of tanh functions, using only weights/biases allowed in sampled networks. For all \(i = 1,2,\dots, M\), define \(\tilde w_i = \frac{a_i}{2}\) and  set \(x^{(1)}_i = \eta(x, \delta_i)\), with \(\delta_i \geq 0\) and \(x\in\X\), such that \(\langle \Hat w_1, x\rangle = c_i\) --- where \(\delta_i = 0\) iff \(x\notin \partial \X\). We specify \(\delta_i\) and \(\epsilon_i' >0\) such that \(\frac{s_1}{\epsilon_i'} \geq \omega_i\), \(\mathbb{B}_{2\epsilon_i'}(x^{(1)}_i) \subseteq \X\), and 
    \begin{align*}
        \lvert \psi(\langle w_i, x - x^{(1)}_i\rangle - s_2 ) - \psi(\langle w_i, x \rangle - c_i - s_2)\rvert < \frac{\epsilon}{4\lvert \tilde w_i \rvert M},
    \end{align*}
    with \(w_i = s_1 \frac{x^{(2)}_i - x^{(1)}_i}{\lVert x^{(2)} - x^{(1)}\rVert^2}\), \(x^{(2)}_i = x^{(1)}_i  + \epsilon_i' \Hat w_1\), and for all \(x\in\X\). It is clear that \(x^{(1)}_i, x^{(2)}_i \in \X\). We may now rewrite the sum of tanh functions as
    \begin{align*}
        g(x) &= \sum_{i=1}^{M} \tilde w_i \psi\left(\langle w_i, x - x^{(1)}_i \rangle - s_2\right) + \tilde w_i \\
        &= \sum_{i=1}^M \tilde w_i \psi\left(\frac{s_1}{\lVert \epsilon_i' \Hat w_1 \rVert}\left\langle \frac{\epsilon_i' \Hat w_1}{\lVert \epsilon_i' \Hat w_1\rVert}, x - x^{(1)}_i \right\rangle - s_2 \right) + \tilde w_i\\
        &=  \sum_{i=1}^{M} \tilde w_i \psi\left(\frac{s_1}{\epsilon_i'}\langle \Hat w_1, x \rangle - \tilde c_i - s_2\right) + \tilde w_i\\
        & = \sum_{i=1}^{M} \frac{a_i}{2} [\psi\left(\tilde \omega_i \langle \Hat w_1, x \rangle - \tilde c_i - s_2\right)+1],
    \end{align*}
    where \(\tilde c_i = \langle \Hat w_1, x^{(1)}_i\rangle\). As \(\omega_i \leq \tilde \omega_i\) for all \(i=1,2,\dots,M\), it follows from \Cref{lemma:simple_fuction_approx} and the construction above that
    \begin{align*}
        \lvert g(x) - \Hat f(x)\rvert &\leq \lvert g(x) - \sum_{i=1}^{M} \frac{a_i}{2} [\psi\left(\tilde \omega_i \langle \Hat w_1, x \rangle - c_i - s_2\right)+1]\rvert\\
        &+ \lvert \sum_{i=1}^{M} \frac{a_i}{2} [\psi\left(\tilde \omega_i \langle \Hat w_1, x \rangle - c_i - s_2\right)+1] - f(x) \rvert\\
        &+ \lvert f(x) - \Hat f(x)\rvert\\
        &< \frac{\epsilon}{4} + \frac{\epsilon}{2} + \frac{\epsilon}{4} = \epsilon.
    \end{align*}
\end{proof}
We are now ready to prove that sampled networks with one hidden layer with tanh as activation function are universal approximators.
\begin{theorem}\label{thm:tanh_universal}
    Let \(\F^S_{1, \infty}\) be the set of all sampled networks with one hidden layer of arbitrary width and activation function \(\psi\). \(F^S_{1,\infty}\) is dense in \(C(\X, \R^{N_{2}})\), with respect to the uniform norm.
\end{theorem}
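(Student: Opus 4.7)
\textit{Proposed proof.} The plan is to chain the ReLU universal approximation result for sampled networks (Theorem \ref{thm:1_layer_universal}) with the neuron-level tanh replacement result (Lemma \ref{lemma:approximate_relu_neuron}). Concretely, given $g \in C(\X, \R^{N_2})$ and $\epsilon > 0$, I first use Theorem \ref{thm:1_layer_universal} to obtain a sampled ReLU network $\Phi \in \F^S_{1,\infty}$ with $N_1$ hidden neurons and $\lVert g - \Phi\rVert_\infty < \epsilon/2$. Without loss of generality I work with $N_2 = 1$; the multidimensional case follows component-wise.

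Next, I rewrite $\Phi$ in unit-weight form. By Lemma \ref{lemma:pos_const_weights} (applied with $\omega_i = \lVert x^{(2)}_{0,i}-x^{(1)}_{0,i}\rVert$), I may assume that the hidden weights of $\Phi$ are exactly the unit sampled weights $\hat w_{1,i} = (\hat x^{(2)}_i - \hat x^{(1)}_i)/\lVert \hat x^{(2)}_i - \hat x^{(1)}_i\rVert$, biases $\hat b_{1,i} = \langle \hat w_{1,i}, \hat x^{(1)}_i\rangle$, and some second-layer coefficients $\hat w_{2,i}$ and output bias $\hat b_2$. This is exactly the input format required by Lemma \ref{lemma:approximate_relu_neuron}.

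I then apply Lemma \ref{lemma:approximate_relu_neuron} independently to each neuron $i=1,\dots,N_1$ with tolerance $\epsilon/(2 N_1)$. This produces, for each $i$, a finite collection of point pairs $(x^{(1)}_{i,j}, x^{(2)}_{i,j}) \in \X \times \X$ and scalars $\tilde w_{i,j}$ such that the tanh sum
\begin{equation*}
\Psi_i(x) \;=\; \sum_{j=1}^{M_i} \tilde w_{i,j}\bigl[\psi(\langle w_{i,j}, x - x^{(1)}_{i,j}\rangle - s_2) + 1\bigr]
\end{equation*}
with $w_{i,j} = s_1 (x^{(2)}_{i,j}-x^{(1)}_{i,j})/\lVert x^{(2)}_{i,j}-x^{(1)}_{i,j}\rVert^2$ approximates the $i$-th ReLU neuron uniformly within $\epsilon/(2N_1)$. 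Summing over $i$ and setting $\Psi(x) = \sum_{i=1}^{N_1} \Psi_i(x) - \hat b_2 - \sum_{i,j} \tilde w_{i,j}$, the constant terms $\tilde w_{i,j}$ coming from the $+1$ in the tanh identity get absorbed into the output bias of the last layer (which, by Definition \ref{def:sampled_network}, is the free argmin parameter and therefore unconstrained). By construction $\Psi$ lies in $\F^S_{1,\infty}$ for the tanh activation, and the triangle inequality yields $\lVert g - \Psi\rVert_\infty \le \lVert g - \Phi\rVert_\infty + \lVert \Phi - \Psi\rVert_\infty < \epsilon/2 + N_1 \cdot \epsilon/(2 N_1) = \epsilon$.

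The only subtlety, which I expect to be the main bookkeeping obstacle, is verifying that the weights and biases of $\Psi$ really conform to the sampled-network template with $s_1 = 2 s_2$ and $s_2 = \tfrac{1}{2}\ln 3$ (this is already guaranteed by the construction inside Lemma \ref{lemma:approximate_relu_neuron}), and that the additive constant left over from the $+1$ terms can be legitimately folded into $b_2$ (which it can, since the last-layer bias in Definition \ref{def:sampled_network} is not sampled but determined by minimization of $\mathcal L$, so any constant offset is admissible). Handling multiple outputs $N_2 > 1$ requires only applying the argument to each output coordinate, after which the collection of first-layer hidden neurons can be unioned, since each output shares the same hidden layer structure.
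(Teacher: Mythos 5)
Your proof follows essentially the same route as the paper: approximate \(g\) by a sampled ReLU network via Theorem~\ref{thm:1_layer_universal}, replace each ReLU neuron by a block of \(M_i\) tanh neurons via Lemma~\ref{lemma:approximate_relu_neuron} with tolerance \(\epsilon/(2N_1)\), and conclude with the triangle inequality. The only differences are bookkeeping: the paper distributes each \(\hat b_{2,i}\) over the \(M_i\) new output biases rather than folding the \(+1\) constants into a single output bias, and your displayed formula for \(\Psi\) subtracts \(\sum_{i,j}\tilde w_{i,j}\) once too often (each \(\Psi_i\) already contains the \(+1\) terms, so the correct choice is \(\Psi = \sum_i \Psi_i - \hat b_2\), equivalently output bias \(b_2 = \hat b_2 - \sum_{i,j}\tilde w_{i,j}\)) --- a harmless slip that does not affect the argument.
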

\begin{proof}
    Let \(g \in C(X,  \R^{N_{2}})\), \(\epsilon > 0\), and w.l.o.g., \(N_{2} = 1\). By \Cref{thm:1_layer_universal}, we know there exists a network \(\Phi\), with \(\Hat N_1\) neurons and parameters \(\{\Hat w_{1,i}, \Hat w_{2,i}, \Hat b_{1,i}, \Hat b_{2,i}\}_{i=1}^{\Hat N_1}\), and ReLU as activation function, such that \(\lVert \Phi - g\rVert_{\infty} < \frac{\epsilon}{2}\). We can then construct a new network \(\Psi\), with \(\psi\) as activation function, where for each neuron \(\Phi^{(1,n)}\), we construct \(M_i\) neurons in \(\Psi\), according to \Cref{lemma:approximate_relu_neuron}, with \(\frac{\epsilon}{2\Hat N_1}\). Setting the biases in last layer of \(\Psi\) based on \(\Phi\), i.e., for every \(i=1,2,\dots, \Hat N_1\), \(b_{2,j} = \frac{\Hat b_{2,i}}{M_i}\), where \(j=1,2,\dots, M_i\). We then have, letting \(N_1\) be the number of neurons in \(\Psi\), 
    \begin{align*}
        \lvert \Psi(x) - \Phi(x)\rvert &= \left\lvert \left(\sum_{i=1}^{N_1} w_{2,i} \Psi^{(1,i)}(x) - b_{2,i}\right) -\left(\sum_{i=1}^{\Hat N_1} \Hat w_{2,i} \Phi^{(1,i)}(x) - \Hat b_{2,i}\right)\right\rvert \\
        &= \left \lvert \sum_{i=1}^{\Hat N_1} \left(\sum_{j=1}^{M_i} w_{2,j} \psi(\langle w_{1,j}, x\rangle - b_{1,j})\right) - \Hat w_{2,i} \phi(\langle \Hat w_{1,i}, x\rangle - \Hat b_{1,i})\right \rvert\\
        &\leq \sum_{i=1}^{\Hat N_1} \left \lvert \left(\sum_{j=1}^{M_i} w_{2,j} \psi(\langle w_{1,j}, x\rangle - b_{1,j})\right) - \Hat w_{2,i} \phi(\langle \Hat w_{1,i}, x\rangle - \Hat b_{1,i}) \right \rvert\\
        &< \sum_{i=1}^{\Hat N_1} \frac{\epsilon}{2\Hat N_1} = \frac{\epsilon}{2}, 
    \end{align*} 
    for all \(x \in \X\). The last inequality follows from \Cref{lemma:approximate_relu_neuron}. This implies that
    \begin{align*}
        \lVert \Psi - g\rVert_{\infty} \leq \lVert \Psi - \Phi \rVert_{\infty} + \lVert \Phi - g\rVert_{\infty} < \frac{\epsilon}{2} + \frac{\epsilon}{2} = \epsilon,
    \end{align*}
    and \(F^S_{1,\infty}\) is dense in \(C(\X, \R^{N_{2}})\).
\end{proof}

\subsection{Barron spaces}\label{sec:barron}
Working with neural networks and sampling makes it very natural to connect our theory to Barron spaces~\cite{barron-1993,e-2022}. This space of functions can be considered a continuum analog of neural networks with one hidden layer of arbitrary width. We start by considering all functions \(f \colon \X \rightarrow \R\) that can be written as 
\begin{align*}
    f(x) = \int_{\Omega} w_{2} \phi(\langle w_1, x\rangle - b)d\mu(b, w_1, w_2),
\end{align*}
where \(\mu\) is a probability measure over \((\Omega, \Sigma_{\Omega})\), with \(\Omega = \R\times\R^D\times \R\). A Barron space \(\mathcal{B}_p\) is equipped with a norm of the form, 
\begin{align*}
    \lVert f \rVert_{\mathcal{B}_p} = \inf_{\mu}\{\mathbb{E}_{\mu}[\lvert w_2\rvert^p(\lVert w_1\rVert_{1} + \lvert b\rvert)^p]^{1/p}\}, \quad 1\leq p \leq \infty,
\end{align*}
taken over the space of probability measure \(\mu\) over \((\Omega, \Sigma_{\Omega})\). When \(p=\infty\), we have
\begin{align*}
    \lVert f \rVert_{\mathcal{B}_\infty} = \inf_{\mu}\max_{(b, w_1, w_2) \in \text{supp}(\mu)}\{\lvert w_2\rvert(\lVert w_1\rVert_{1} + \lvert b\rvert)\}.
\end{align*}
The Barron space can then be defined as 
\begin{align*}
    \mathcal{B}_p = \{f \colon f(x) = \int_{\Omega} w_{2} \phi(\langle w_1, x\rangle - b)d\mu(b, w_1, w_2) \text{ and } \lVert f\rVert_{\mathcal{B}_p} < \infty\}.
\end{align*}
As for any \(1 \leq p \leq \infty\), we have \(\mathcal{B}_p = \mathcal{B}_\infty\), and so we may drop the subscript \(p\)~\cite{e-2022}. Given our previous results, we can easily show approximation bounds between our sampled networks and Barron functions. 
\begin{theorem}
    Let \(f \in \mathcal{B}\) and \(\X = [0,1]^D\). For any \(N_1\in \N_{>0}\), \(\epsilon>0\), and an arbitrary probability measure \(\pi\), there exist sampled networks \(\Phi\) with one hidden layer, \(N_1\) neurons, and ReLU activation function, such that 
    \begin{align*}
        \lVert f - \Phi\rVert_2^2 = \int_{\X} \lvert f(x) - \Phi(x)\rvert^2d\pi(x) < \frac{(3+\epsilon)\lVert f \rVert_{\mathcal B}^2}{N_1}.
    \end{align*}
  \end{theorem}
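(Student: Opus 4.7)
\textit{Proof proposal.} The plan is to reduce to a known Barron approximation result for standard ReLU networks and then transfer the estimate to sampled networks using the machinery developed for \Cref{thm:universality_proof}. First, I would invoke the Barron-type approximation theorem of \citet{e-2022}: for any $f \in \mathcal{B}$ and any $N_1 \in \N_{>0}$, there exists a standard one-hidden-layer ReLU network $\hat{\Phi}$ with exactly $N_1$ neurons such that
\begin{equation*}
  \lVert f - \hat{\Phi} \rVert_{L^2(\pi)}^2 \leq \frac{3\,\lVert f \rVert_{\mathcal{B}}^2}{N_1}.
\end{equation*}
This accounts for the ``$3$'' in the target bound, and the extra $\epsilon$ is the slack that the sampled-network construction will consume.

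Next, I would approximate $\hat{\Phi}$ by a sampled network $\Phi$ of the \emph{same width} $N_1$. The main ingredients are \Cref{lemma:bias_with_w_fixed} (bias construction) and \Cref{lemma:weights_w_fixed_bias} (weight construction), which together show that any one-hidden-layer ReLU network can be matched to arbitrary uniform accuracy by a unit sampled network; \Cref{lemma:pos_const_weights} then converts the unit sampled network into a genuine sampled network. Crucially, because $\X = [0,1]^D$ is connected and each linear functional $x \mapsto \langle w_{1,i}, x\rangle$ is continuous, the image $\{\langle w_{1,i}, x\rangle : x \in \X\}$ is an interval. Hence the pathological case (4) in the proof of \Cref{lemma:bias_with_w_fixed} (bias in the interior of the range but outside the image) cannot occur, and no constant blocks are needed. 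As noted in the remark after \Cref{thm:1_layer_universal}, this lets us preserve the neuron count exactly: for any $\delta > 0$, I get a sampled $\Phi$ with $N_1$ neurons and $\lVert \hat{\Phi} - \Phi \rVert_\infty < \delta$.

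Finally, I combine the two estimates using the triangle inequality in $L^2(\pi)$: since $\pi$ is a probability measure on the bounded domain $\X$,
\begin{equation*}
  \lVert f - \Phi \rVert_2 \leq \lVert f - \hat{\Phi} \rVert_2 + \lVert \hat{\Phi} - \Phi \rVert_2 \leq \sqrt{\tfrac{3\,\lVert f\rVert_\mathcal{B}^2}{N_1}} + \delta,
\end{equation*}
because $\lVert \hat{\Phi}-\Phi\rVert_2 \leq \lVert \hat{\Phi}-\Phi\rVert_\infty < \delta$. Squaring and choosing $\delta$ sufficiently small (depending on $\epsilon$, $N_1$, and $\lVert f \rVert_\mathcal{B}$) makes the cross term $2\sqrt{3}\,\lVert f\rVert_\mathcal{B}\,\delta / \sqrt{N_1}$ together with $\delta^2$ bounded by $\epsilon\,\lVert f\rVert_\mathcal{B}^2 / N_1$, yielding the stated bound.

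The only genuinely subtle step is the second one: ensuring the sampled network uses exactly $N_1$ neurons rather than the $6N_1$ worst-case bound appearing in \Cref{lemma:bias_with_w_fixed}. I expect this to be the main obstacle to verify cleanly. The argument above hinges on connectedness of $[0,1]^D$, which collapses case (4) of that lemma; I would make this explicit in the write-up to justify that the neuron count is preserved, after which the rest is a routine triangle-inequality calculation.
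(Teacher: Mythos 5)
Your proposal is correct and follows essentially the same route as the paper's proof: invoke the bound of \citet{e-2022} for a standard $N_1$-neuron ReLU network, convert it to a sampled network of the same width via \Cref{lemma:bias_with_w_fixed}, \Cref{lemma:weights_w_fixed_bias}, and \Cref{lemma:pos_const_weights}, using connectedness of $[0,1]^D$ to avoid constant blocks, and combine the two errors. If anything, your treatment of the final step is slightly more careful than the paper's, which bounds $\lVert f-\Phi\rVert_2^2$ by the sum of the two squared errors without the cross term, whereas your explicit choice of $\delta$ to absorb the cross term makes that step airtight.
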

\begin{proof}
    Let \(N_1 \in \mathbb{N}_{>0}\) and \(\epsilon>0\). By~\citet{e-2022}, we know there exists a network \(\Hat \Phi \in \F_{1,N_1}\), where \(\Hat \Phi(\cdot) =  \sum_{i=1}^{N_1} \Hat w_{2,i} \phi(\langle \Hat w_{1,i}, \cdot\rangle - \Hat b_{1,i})\), such that \(\lVert f-\Hat \Phi\rVert_2^2 \leq \frac{3\lVert f \rVert_{\mathcal B}^2}{N_1}\). By \Cref{thm:1_layer_universal}, letting \(\X' = [\epsilon_I,1-\epsilon_I]^D\), we know there exists a network \(\Phi \in \F^S_{1, N_1}\), such that \(\lVert \Phi - \Hat \Phi\rVert_\infty < \sqrt{\frac{\epsilon\lVert f\rVert_{\mathcal{B}}^2}{N_1}}\). We do not need constant blocks, because \(\X\) is connected. We then have
    \begin{align*}
        \lVert \Phi - f\rVert_2^2 &\leq \int_{\X} \lvert \Phi(x)-\Hat \Phi(x) \rvert^2d\pi(x) + \int_{\X}\lvert \Hat \Phi(x) - f(x) \rvert^2  d\pi(x) \\
        &< \frac{\epsilon\lVert f\rVert_{\mathcal{B}}^2}{N_1} \int_{\X}d\pi(x) + \frac{3\lVert f\rVert_{\mathcal{B}}^2}{N_1} = \frac{(3+\epsilon)\lVert f\rVert_{\mathcal{B}}^2}{N_1}.
    \end{align*} 
\end{proof}
\subsection{Distribution of sampled networks}\label{sec:distribution}
In this section we prove certain invariance properties for sampled networks and our proposed distribution. First, we define the distribution to sample the pair of points, or equivalently, the parameters. Note that \(\X\) is now any compact subset of \(\R^D\), as long as \(\lambda_D(\X)>0\), where \(\lambda_D\) the \(D\)-dimensional Lebesgue measure.

As we are in a supervised setting, we assume access to values of the true function \(f\), and define \(\Y = f(\X)\). We also choose a norm \(\lVert \cdot \rVert_\Y\) over \(\R^{N_{L+1}}\), and norms \(\lVert \cdot \rVert_{\X_{l-1}}\) over \(\R^{N_{l-1}}\), for each \(l=1,2,\dots,L\). For the experimental part of the paper, we choose the \(L^\infty\) norm for \(\lVert \cdot \rVert_{\Y}\) and for \(l=1,2,\dots,L\), we choose the \(L^2\) norm for \(\lVert \cdot \rVert_{\X_{l-1}}\). We also denote \(\Bar N = \sum_{l=1}^{L} N_l\) as the total number of neurons in a given network, and \(\Bar N_l = \sum_{k=1}^l N_k\). Due to the nature of sampled networks and because we sample each layer sequentially, we start by giving a more precise definition of the  conditional density given in \Cref{def:sampling distribution}. As a pair of points from \(\X\) identifies a weight and bias, we need a distribution over \(\X^{2\Bar N}\), and for each layer \(l\) condition on sets of \(2\Bar N_{l-1}\) points, which then parameterize the network \(\Phi^{(l-1)}\) by constructing weights and biases according to \Cref{def:sampled_network}.

\begin{definition}\label{def:sampling_distribution}
   Let \(\X\) be compact, \(\lambda_D(\X) > 0\), and \(f \colon \R^D \rightarrow \R^{N_{L+1}}\) be Lipschitz-continuous w.r.t. the metric spaces induced by \(\lVert \cdot \rVert_\Y\) and \(\lVert \cdot \rVert_\X\). For any \(l \in \{1,2,\dots,L\}\),  setting \(\epsilon=0\) when \(l=1\) and otherwise \(\epsilon>0\), we define 
  \begin{align*}
      q_l^\epsilon\left(x^{(1)}_0, x^{(2)}_0\mid X_{l-1} \right) = 
      \begin{cases}
          \cfrac{\lVert f(x^{(2)}_0) - f(x^{(1)}_0)\rVert_{\Y}}{\max\{\lVert  x^{(2)}_{l-1}-\ x^{(1)}_{l-1} \rVert_{\X_{l-1}},\epsilon\}}, & x^{(1)}_{l-1} \neq  x^{(2)}_{l-1}\\
          0, & \text{otherwise},
      \end{cases}
  \end{align*}
  where  \(x^{(1)}_{0}, x^{(2)}_{0}\in\X \), \( x^{(1)}_{l-1} = \Phi^{(l-1)}(x^{(1)}_0)\), and \(x^{(2)}_{l-1} = \Phi^{(l-1)}(x^{(2)}_0)\), with the network \(\Phi^{(l-1)}\) parameterized by pairs of points in \(\X^{{\Bar N}_{l-1}}\). 
  Then, we define the integration constant $C_l=\int_{\X\times\X} q_l^\epsilon d\lambda$.  The l-layered density \(p_l^\epsilon\) is defined as
  \begin{align}\label{eq:density_appendix}
      p_l^\epsilon = 
      \begin{cases}
          \frac{q_l^\epsilon}{C_l}, & \text{if } C_l>0\\
          \frac{1}{\lambda_{2D}(\X\times\X)}, & \text{otherwise.}
      \end{cases}
  \end{align}
\end{definition}
\begin{remark}
    The added \(\epsilon\) is there to ensure the density is bounded, but is not needed when considering the first layer, due to the Lipschitz assumption. Adding \(\epsilon>0\) for \(l=1\) is both unnecessary and affects equivariant/invariant results in \Cref{thm:equivariant_append}. We drop the \(\epsilon\) superscript wherever it is unambiguously included.
\end{remark}
We can now use this definition to define the probability of the whole parameter space \(\X^{2\Bar N}\), i.e., given an architecture provide a distribution over all weights and biases the network require. Let \(\Bar \X = \X^{2\Bar N}\), i.e., the sampling space of \(P\) with the product topology, and \(\Bar D = 2\Bar ND\) as the dimension of the space. Since all the parameters is defined through points of \(\X\times\X\), we may choose an arbitrary ordering of the points, which means one set of weights and biases for the whole network can be written as \(\{x^{(1)}_i, x^{(2)}_i\}_{i=1}^{\Bar N}\).
\begin{definition}
    The probability distribution \(P\) over \(\Bar \X\) have density \(p\), 
     \begin{align*}
         p\left(x^{(1)}_1, x^{(2)}_1, x^{(1)}_2, x^{(2)}_{2},\dots, x^{(1)}_{\Bar N}, x^{(2)}_{\Bar N}\right) = \prod_{l=1}^{L}\prod_{i=1}^{N_l} p_l\left(x^{(1)}_{\Bar N_{l-1} + i}, x^{(2)}_{\Bar N_{l-1} + i}\mid X_{l-1}\right),
     \end{align*}
     with \(X_{l-1} = \bigcup_{k=1}^{l-1} \bigcup_{j=1}^{N_k}\{x^{(1)}_{\Bar N_{k} + j}, x^{(2)}_{\Bar N_{k} + j}\}\). 
\end{definition}
 It is not immediately clear from above that \(P\) is valid distribution, and in particular, that the density is integrable. This is what we show next.
\begin{proposition}
     Let \(\X \subseteq \R^D\) be compact, \(\lambda_D(\X)>0\), and \(f\) be Lipschitz continuous w.r.t. the metric spaces induced by \(\lVert \cdot \rVert_\Y\) and \(\lVert \cdot \rVert_\X\). For fixed architecture \(\{N_l\}_{l=1}^L\), the proposed function \(p\) is integrable and 
     \begin{align*}
         \int_{\Bar \X} p \, d\lambda_{\Bar D} = 1.
     \end{align*}
     It therefore follows that \(P\) is a valid probability distribution.
\end{proposition}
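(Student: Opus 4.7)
The plan is to establish integrability and normalization by a layer-wise application of Tonelli's theorem, working from the outermost (last) layer inward. The central observation is that for each layer \(l\) and each valid conditioning configuration \(X_{l-1}\), the conditional density \(p_l(\cdot \mid X_{l-1})\) integrates to \(1\) on \(\X \times \X\) by construction, so the iterated integral collapses one layer at a time.

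First I would verify that every conditional \(p_l(\cdot \mid X_{l-1})\) is a genuine probability density on \(\X \times \X\). The case \(C_l = 0\) is handled directly by the uniform fallback in \Cref{def:sampling_distribution}, and \(\lambda_D(\X) > 0\) makes this fallback well-defined. When \(C_l > 0\) one must only check \(C_l < \infty\). For \(l = 1\), Lipschitz continuity of \(f\) (with constant \(K\) relative to the chosen norms) yields \(q_1^0(x_0^{(1)}, x_0^{(2)}) \leq K\) pointwise on \(\{x_0^{(1)} \neq x_0^{(2)}\}\), so \(C_1 \leq K\, \lambda_{2D}(\X \times \X) < \infty\) by compactness of \(\X\). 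For \(l > 1\), the regularizer gives \(q_l^\epsilon \leq \|f(x_0^{(2)}) - f(x_0^{(1)})\|_\Y / \epsilon \leq K \cdot \mathrm{diam}(\X) / \epsilon\), again finite.

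Next I would apply Tonelli. Since \(p \geq 0\), the full integral over \(\Bar \X\) equals any iterated integral, and I would peel off layers from \(l = L\) down to \(l = 1\). With the parameters of layers \(1, \dots, L-1\) held fixed, the \(N_L\) inner integrations each yield
\begin{equation*}
\int_{\X \times \X} p_L\bigl(x_0^{(1)}, x_0^{(2)} \mid X_{L-1}\bigr)\, d\lambda_{2D} = 1,
\end{equation*}
collapsing the last-layer factors entirely. The same reasoning, applied successively at layers \(L-1, L-2, \dots, 1\), kills each group of \(N_l\) factors and leaves the total equal to \(1\).

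The main obstacle, and the step most deserving of care, is joint measurability of \(p\) on \(\Bar \X\). The map \(\Phi^{(l-1)}\) is jointly continuous in the input \(x\) and in the parameter configuration \(X_{l-1}\) wherever the weight formula of \Cref{def:sampled_network_paper} is well-defined, i.e.\ on the open set where successive point-pairs do not collapse after propagation through earlier layers; the exceptional set has Lebesgue measure zero (it is a finite union of lower-dimensional real-algebraic subvarieties defined by \(\Phi^{(k-1)}(x^{(1)}_i) = \Phi^{(k-1)}(x^{(2)}_i)\)) and carries the convention \(q_l^\epsilon = 0\), together with the uniform fallback when \(C_l = 0\). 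Because the norms \(\|\cdot\|_\Y\) and \(\|\cdot\|_{\X_{l-1}}\) are continuous and the \(\max\{\cdot, \epsilon\}\) in the denominator keeps \(q_l^\epsilon\) bounded, each \(q_l^\epsilon\) is Borel jointly in \((x_0^{(1)}, x_0^{(2)}, X_{l-1})\). Measurability of the normalizing map \(X_{l-1} \mapsto C_l\) then follows from Tonelli, and the piecewise definition in \Cref{eq:density_appendix} preserves measurability. With joint measurability secured, the layer-wise collapse above completes the proof that \(\int_{\Bar \X} p\, d\lambda_{\Bar D} = 1\) and hence that \(P\) is a valid probability distribution.
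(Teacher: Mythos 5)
Your proposal is correct and follows essentially the same route as the paper's proof: bound each conditional density (using the Lipschitz constant of \(f\) for the first layer and the \(\epsilon\)-regularizer together with continuity and compactness for deeper layers), then use the normalization constants and iterated integration over \(\Bar\X\) to conclude the total mass is one. You are somewhat more explicit than the paper about the Tonelli step and the joint measurability of \(p\), which the paper's argument leaves implicit, but this is a matter of filling in details rather than a genuinely different approach.
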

\begin{proof}
    We will show for each \(l=1, 2, \dots, L\) that \(p_l\) is bounded a.e. and is nonzero for at least one subset with nonzero measure. There exist \(A \subseteq \X\times \X\) such that \(p_l(A)\neq 0\) and \(\lambda_{2D}(A) >0\), as either \(C_l >0\) or \(p_l\) is the uniform density by \Cref{def:sampling_distribution} and \(\lambda_{2D}(\X\times \X) >0\) by assumption and the product topology.

    For \(l=1\), let \(K_l >0\) be the Lipschitz constant. Then \(q_l(x^{(1)}, x^{(2)})\) by assumption of \(f\) for all \(x^{(1)}, x^{(2)}\in\X\). When \(l > 1\), for all \(X_{l-1}\in \X^{2\Bar N_{l-1}}\), we have that there exist a constant \(K_l >0\) due to continuity and compactness, such that 
    \begin{align*}
        p_l(x^{(1)}, x^{(2)} \mid X_l) \leq \max \left\{\frac{K_l}{\epsilon}, \frac{1}{\lambda_{2D}(\X\times\X)}\right\}.
    \end{align*}
    As \(p\) is a multiplication of a finite set of elements, we end up with
    \begin{align*}
        0 < \int_{\Bar \X} p \, d\lambda &< \int_{\Bar \X} \prod_{l=1}^{L} N_l \, \left(\frac{K_l}{\epsilon} \,+K_l \, +\frac{1}{\lambda_{2D}(\X\times\X)}\right)d\lambda_{\Bar D}\\
        &< L \,\max\left\{N_l\,\left(\frac{K_l}{\epsilon} \,+K_l \, +\frac{1}{\lambda_{2D}(\X\times\X)}\right)\right\}_{l=1}^{L} \lambda_{\Bar D}(\Bar \X) < \infty.
    \end{align*}
    using the fact that \(0<\lambda_D(\X) < \infty\) and \(\lambda_{\Bar D}\) being the product measure, implies \(0<\lambda_{\Bar D}(\Bar \X) <\infty\). Due to the normalization constants added to \(q_l\), we see \(p_l\) integrates to one. This means \(P\) is a valid distribution of \(\Bar \X\), with implied independence between the neurons in the same layer.
\end{proof}
One special property of sampled networks, and in particular of the distribution \(P\), is their invariance under both linear isometries and translation (together forming the Euclidean group, i.e., rigid body transformations), as well as scaling. We denote the set of possible transformations as \(\mathcal{H}(D) = \R\setminus \{0\} \times O(D) \times \R\), with \(O(D)\) being the orthogonal group of dimension \(D\). We then denote \((a,A,c) = H \in \mathcal{H}(D)\) as \(H(x) = a A x + c\), where \(x \in \X\). The space \(\mathcal{H}_f(D) \subseteq \mathcal{H}\) are all transformations such that \(f \colon  \X \rightarrow \R^{N_{L+1}}\)  is equivariant with respect to the transformations, with the underlying metric space given by \(\lVert \cdot \rVert_{\Y}\). That is, for any \(H \in \mathcal{H}_f(D)\), there exists a \(H' \in \mathcal{H}(N_{L+1})\), such that \(f(H(x)) = H'(f(x))\), where \(x\in \X\), and the orthogonal matrix part of \(H'\) is isometric w.r.t. \(\lVert \cdot \rVert_{\Y}\). Note that often the \(H'\) will be the identity transformation, for example by having the same labels for the transformed data. When \(H'\) is the identity function, we say \(f\) is invariant with respect to \(H\). In the next result, we assume we choose norms \(\lVert \cdot \rVert_{\X_{0}}\) and \(\lVert \cdot \rVert_{\Y}\), such that the orthogonal matrix part of \(H\) is isometric w.r.t. those norms and the canonical norm \(\lVert \cdot \rVert\), as well as continue the assumption of Lipschitz-continuous \(f\).
\begin{theorem}\label{thm:equivariant_append}
    Let \(H \in \mathcal{H}_f(D)\), \(\Phi, \Hat \Phi\) be two sampled networks with the same number of layers \(L\) and neurons \(N_1,\dots, N_L\), where \(\Phi \colon \X \rightarrow \R^{N_{L+1}}\) and \(\Hat \Phi\colon H(\X) \rightarrow \R^{N_{L+1}}\), and \(f \colon \X \rightarrow \R^{N_{L+1}}\) is the true function. Then the following statements hold: 
    \begin{enumerate}[(1)]
        \item If \(\Hat x^{(1)}_{0,i} = H(x^{(1)}_{0,i})\) and \(\Hat x^{(2)}_{0,i} = H(x^{(2)}_{0,i})\), for all \(i=1,2,\dots, N_1\), then \\ \({\Phi^{(1)}(x) = \Hat \Phi^{(1)}(H(x))}\), for all \(x\in\X\). 
        \item If \(f\) is invariant w.r.t. \(H\): \(\Phi \in \F^S_{L, [N_1, \dots N_L]}(\X)\) if and only if \(\Hat \Phi \in \F^S_{L, [N_1, \dots N_L]}(H(\X))\), such that \(\Phi(x) = \Hat \Phi(x)\) for all \(x\in\X\). 
        \item The probability measure \(P\) over the parameters is invariant under \(H\). 
    \end{enumerate}
\end{theorem}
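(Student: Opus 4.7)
The unifying observation is that any neuron in a sampled network can be rewritten in the shift-invariant, scale-equivariant form
\[
\phi\bigl(\langle w_{l,i}, x_{l-1}\rangle - b_{l,i}\bigr) \;=\; \phi\!\left(\Bigl\langle s_1\tfrac{x^{(2)}_{l-1,i}-x^{(1)}_{l-1,i}}{\lVert x^{(2)}_{l-1,i}-x^{(1)}_{l-1,i}\rVert^{2}},\; x_{l-1}-x^{(1)}_{l-1,i}\Bigr\rangle - s_2\right).
\]
Writing $H(x)=aAx+c$ and substituting $\hat x^{(j)}_{0,i}=H(x^{(j)}_{0,i})$, the numerator of the new first-layer weight becomes $aA(x^{(2)}-x^{(1)})$, its denominator becomes $a^{2}\lVert x^{(2)}-x^{(1)}\rVert^{2}$ since $A$ is Euclidean-isometric, and the translation $c$ cancels in $H(x)-H(x^{(1)})=aA(x-x^{(1)})$. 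A direct computation then gives $\langle \tfrac{1}{a}Aw,\,aA(x-x^{(1)})\rangle=\langle w,\,x-x^{(1)}\rangle$, so the pre-activation is invariant and part~(1) follows by applying $\phi$ componentwise.

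For part~(2), I would iterate (1). Build $\hat\Phi$ from $\Phi$ by setting $\hat x^{(j)}_{0,i}=H(x^{(j)}_{0,i})$ in the first layer and simply copying the remaining hidden-layer point pairs (which, by (1), live in the identical post-first-layer space $\X_1$); this gives $\hat\Phi^{(L)}(H(x))=\Phi^{(L)}(x)$ for all $x\in\X$. Because $f$ is invariant under $H$ (so $H'=\mathrm{id}$), the training targets on $\X$ and $H(\X)$ coincide, so the linear least-squares problem for $(W_{L+1},b_{L+1})$ is identical in both cases and the same last-layer weights minimize both losses. The converse direction is symmetric via $H^{-1}\in\mathcal H_f(D)$.

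For part~(3), I would show that the pushforward of $P$ under the pairwise action of $H$ coincides with the sampling distribution $\hat P$ constructed directly on $H(\X)$. In layer $l=1$, equivariance of $f$ and isometry of the orthogonal part of $H'$ give $\lVert f(H(x^{(2)}))-f(H(x^{(1)}))\rVert_\Y=|a'|\,\lVert f(x^{(2)})-f(x^{(1)})\rVert_\Y$, while the isometry assumption on $\lVert\cdot\rVert_{\X_0}$ gives $\lVert H(x^{(2)})-H(x^{(1)})\rVert_{\X_0}=|a|\,\lVert x^{(2)}-x^{(1)}\rVert_{\X_0}$; hence the unnormalized density on $H(\X)\times H(\X)$ equals $(|a'|/|a|)\,q_1\circ(H,H)^{-1}$. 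Changing variables in the normalization integral contributes a Lebesgue Jacobian $|a|^{2D}$, yielding $\hat C_1=|a'|\,|a|^{2D-1}C_1$, and after dividing one obtains $\hat p_1=p_1\circ(H,H)^{-1}/|a|^{2D}$, exactly the pushforward density. For $l\ge 2$, part~(1) together with the inductive construction of $\hat\Phi^{(l-1)}$ shows that $\lVert \hat x^{(2)}_{l-1}-\hat x^{(1)}_{l-1}\rVert_{\X_{l-1}}=\lVert x^{(2)}_{l-1}-x^{(1)}_{l-1}\rVert_{\X_{l-1}}$, so only the $|a'|$ factor from the numerator survives and again cancels in $\hat C_l=|a'|\,|a|^{2D}C_l$. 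Independence across neurons within a layer and conditional independence across layers then assembles the full product density, giving $\bar H_\ast P=\hat P$.

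The main technical hurdle is bookkeeping the three distinct scalar factors consistently: the $1/a$ picked up by the weight through the squared-norm denominator (which is what makes (1) work), the Lebesgue Jacobian $|a|^{2D}$ per pair arising in the change of variables, and the equivariance factor $|a'|$ contributed by $f$ through $H'$. Once isolated, every dependence on $a$ and $a'$ cancels between density numerator, normalization constant, and pushforward Jacobian, delivering all three claims in a unified way.
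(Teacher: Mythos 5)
Your proposal is correct and follows essentially the same route as the paper's proof: part (1) by direct cancellation of the translation, the scalar $a$ (via the squared norm), and the orthogonal part by isometry; part (2) by transporting the point pairs layer by layer and noting the last-layer least-squares problem is unchanged under invariance of $f$; part (3) by tracking the factors $\lvert a\rvert$ and $\lvert a'\rvert$ through the unnormalized density and the normalization constant. Your handling of the Lebesgue Jacobian $\lvert a\rvert^{2D}$ in the change of variables is in fact slightly more careful than the paper's computation, which identifies densities at corresponding points without making the pushforward explicit, but the substance of the argument is the same.
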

\begin{proof}
    Let \(H = (a,A,c)\). Assume we have sampled \(\Hat x^{(1)}_{0,i} = H(x^{(1)}_{0,i})\) and \(\Hat x^{(2)}_{0,i} = H(x^{(2)}_{0,i})\), for all \(i=1,2,\dots,N_1\). The points sampled determines the weights and biases in the usual way, giving 
    \begin{align*}
        \phi\left(\left\langle w_{1,i}, x - x^{(1)}_{0,i}\right\rangle\right) &= \phi\left(\left\langle \frac{x^{(2)}_{0,i} - x^{(1)}_{0,i}}{\lVert x^{(2)}_{0,i} - x^{(1)}_{0,i} \rVert^2}, x-x^{(1)}_{0,i}\right\rangle\right)\\
        &= \phi\left(\left\langle A \frac{x^{(2)}_{0,i} - x^{(1)}_{0,i}}{\lVert A (x^{(2)}_{0,i} - x^{(1)}_{0,i}) \rVert^2}, A(x-x^{(1)}_{0,i})\right\rangle\right)\\
        &= \frac{a\cdot a}{a^2}\phi\left(\left\langle A \frac{x^{(2)}_{0,i} - x^{(1)}_{0,i}}{\lVert A (x^{(2)}_{0,i} - x^{(1)}_{0,i}) \rVert^2}, A(x-x^{(1)}_{0,i})\right\rangle\right)\\
        &= \phi\left(\left\langle aA \frac{x^{(2)}_{0,i} - x^{(1)}_{0,i}}{\lVert aA (x^{(2)}_{0,i} - x^{(1)}_{0,i}) \rVert^2}, aA(x-x^{(1)}_{0,i})\right\rangle\right)\\
        &=\phi\left(\left\langle aA \frac{x^{(2)}_{0,i} + c - x^{(1)}_{0,i} - c}{\lVert aA (x^{(2)}_{0,i} + c - x^{(1)}_{0,i} - c) \rVert^2}, aA(x + c -x^{(1)}_{0,i}- c)\right\rangle\right)\\
        &= \phi\left(\left\langle \frac{H(x^{(2)}_{0,i}) - H(x^{(1)}_{0,i} )}{\lVert H(x^{(2)}_{0,i}) - H(x^{(1)}_{0,i}) \rVert^2}, H(x) - H(x^{(1)}_{0,i})\right\rangle\right)\\
        &= \phi\left(\left\langle \Hat w_{1,i}, \Hat x - \Hat x^{(1)}_{0,i}\right\rangle\right)
    \end{align*}
    for all \(x \in \X\), \(\Hat x \in H(\X)\), and \(i=1,2,\dots,N_1\). Which implies that \((1)\) holds. 

    Assuming \(f\) is invariant w.r.t. \(H\), then for any \(\Phi \in \F^S_{L, [N_1, \dots N_L]}(\X)\), let \(X = \{x^{(1)}_{1,i}, x^{(2)}_{1,i}\}_{i=1}^{N_1}\), we can then choose \(H(X)\) as points to construct weights and biases in the first layer of \(\Hat \Phi\), and by \((1)\), we have \(\X_1 = \Phi^{(1)}(\X) = \Hat \Phi^{(1)}(H(\X)) = \Hat \X_1\). As the sampling space is the same for the next layer, we see that we can choose the points for the weights and biases of \(\Hat \Phi\), such that \(\X_l = \Hat \X_l\), where \(l=1,2,\dots,L\). As the input after the final hidden layer is also the same, by the same argument, means the weights in the last layer must be the same, due to the loss function \(\mathcal{L}\) in \Cref{def:sampled_network} is the same due to the invariance assumption. Thus, \(\Phi(x) = \Hat \Phi(H(x))\) for all \(x \in\X\). As \(H\) is bijective, means the same must hold true starting with \(\Hat \Phi\), and constructing \(\Phi\), and we conclude that \((2)\) holds.
    
    For any distinct points \(x^{(1)}, x^{(2)} \in \X\), letting \((a', A', c')\) be the set such that \(g(aAx +c )=a'A'g(x)+c'\), and \(\Hat C_l, C_l\) be the normalization constants over the conditional density \(p_l\), for \(H(\X)\) and \(\X\) resp. We have for the conditional density when \(l=1\) is,
    \begin{align*}
        p_1(H(x^{(1)}), H(x^{(2)})) &= \frac{1}{\Hat C_1} \frac{\lVert f(aA x^{(2)}+c) - f(a
        A x^{(1)}+c)\rVert_{\Y}}{\lVert aA x^{(2)} + c - aAx^{(1)}-c\rVert_{\X}}\\
        &=\frac{1}{\Hat C_1} \frac{\lVert a'A'f(x^{(2)}) + c' - a'A'f(x^{(1)}) -c'\rVert_{\Y}}{\lVert aAx^{(2)} - aAx^{(1)}\rVert_{\X}}\\
        &= \frac{\lvert a'\rvert}{\lvert a\rvert \Hat C_1} \frac{\lVert f(x^{(2)}) - f(x^{(1)})\rVert_{\Y}}{\lVert x^{(2)}- x^{(1)}\rVert_{\X}} =  \frac{\lvert a'\rvert}{\lvert a\rvert \Hat C_1} p_1(x^{(1)}, x^{(2)}).
    \end{align*}
    With similar calculations, we have
    \begin{align*}
        \frac{\lvert a \rvert}{\lvert a' \rvert} \Hat C_1 = \frac{\lvert a \rvert}{\lvert a' \rvert}\int_{\X\times \X}  p_1(H(x), H(z)) dxdz = \frac{\lvert a \rvert}{\lvert a'\rvert}\frac{\lvert a'\rvert}{\lvert a \rvert} \int_{\X \times \X} p_1(x, z)dxdz = C_1.
    \end{align*}
    Hence, the conditional probability distribution for the first layer is invariant under \(H\), and then by \((1)\) and \((2)\), the possible sampling spaces are equal for the following layers, and therefore the conditional distributions for each layer is the same, and therefore \(P\) is invariant under \(H\). 
\end{proof}
\begin{remark}
    Point \((2)\) in the theorem requires \(f\) to be invariant w.r.t. \(H\). This is due to the fact that the parameters in the last layer minimizes the implicitly given loss function \(\mathcal{L}\), seen in \Cref{def:sampled_network}. We have rarely discussed the loss function, as it depends on what function we are approximating. Technically, \((2)\) also holds as long as the loss function is not altered by \(H\) in a way that the final layer alters, but we simplified it to be invariant, as this is also the most likely case to stumble upon in practice. The loss function also appears in the proofs given in \Cref{sec:universality_proofs} and \Cref{sec:barron}. Here both uses, implicitly, the loss function based on the uniform norm. 
\end{remark}

\section{Illustrative example: approximating a Barron function} \label{appendix:barron_illustration}
All computations for this experiment were performed on the Intel Core i7-7700 CPU @ 3.60GHz × 8 with 32GB RAM.

We compare random Fourier features and our sampling procedure on a test function for neural networks~\cite{wojtowytsch-2020}: $f(x)=\sqrt{3/2}(\|x-a\|_{\ell^2}-\|x+a\|_{\ell^2}),$ with the constant vector $a\in\mathbb{R}^d$ defined by $a_j=2j/d-1$.
For all dimensions \(d\), we sample 10,000 points uniformly at random in the cube \([-1,1]^d\) for training (i.e., sampling and solving the last, linear layer problem), and another 10,000 points for evaluating the error.
We re-run the same experiment with five different random seeds, but with the same train/test datasets. This means the random seed only influences the weight distributions in sampled networks and random feature models, not the data.
\Cref{fig:barron_illustration_l2} shows the relative $L^2$ error in the full hyperparameter study, i.e. in dimensions \(d\in \lbrace 1, 2, 3, 4, 5, 10 \rbrace\) (rows) and for tanh (left column) and sine activation functions (right column). For the random feature method, we always use sine activation, because we did not find any data-agnostic probability distributions for tanh.
\Cref{fig:barron_illustration_time} shows the fit times for the same experiments, demonstrating that sampling networks are not slower than random feature models. This is obvious from the complexity analysis in \Cref{appendix:code_repository}, and confirmed experimentally here.
Interesting observations regarding the full hyperparameter study are that for one dimension, \(d=1\), random feature models outperform sampled networks and can even use up to two hidden layers. In higher dimensions, and with larger widths / more layers, sampled networks consistently outperform random features. In \(d=10\), a sampled network with even a single hidden layer is about one order of magnitude more accurate than the same network with normally distributed weights.
The convergence rate of the error of sampled networks with respect to increasing layer widths is consistent over all dimensions and layers, even sometimes outperforming the theoretical bound of \(O(N^{-1/2})\).
\begin{figure}[ht!]
    \centering
    \includegraphics[width=.45\textwidth]{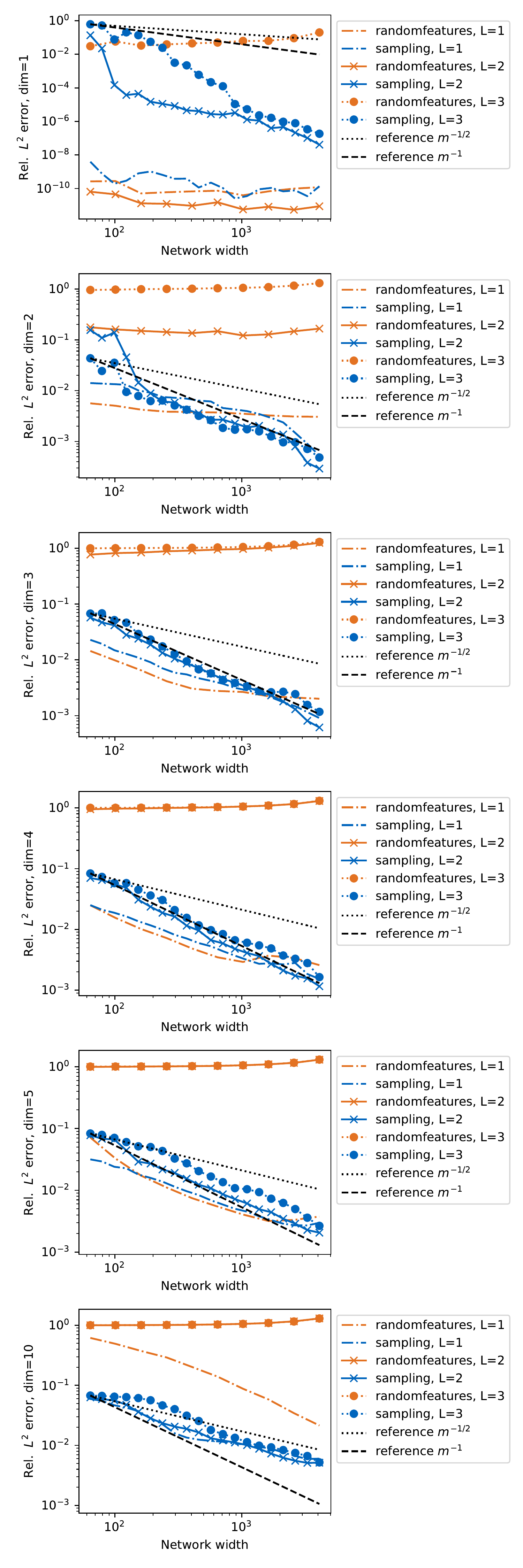}
    \includegraphics[width=.45\textwidth]{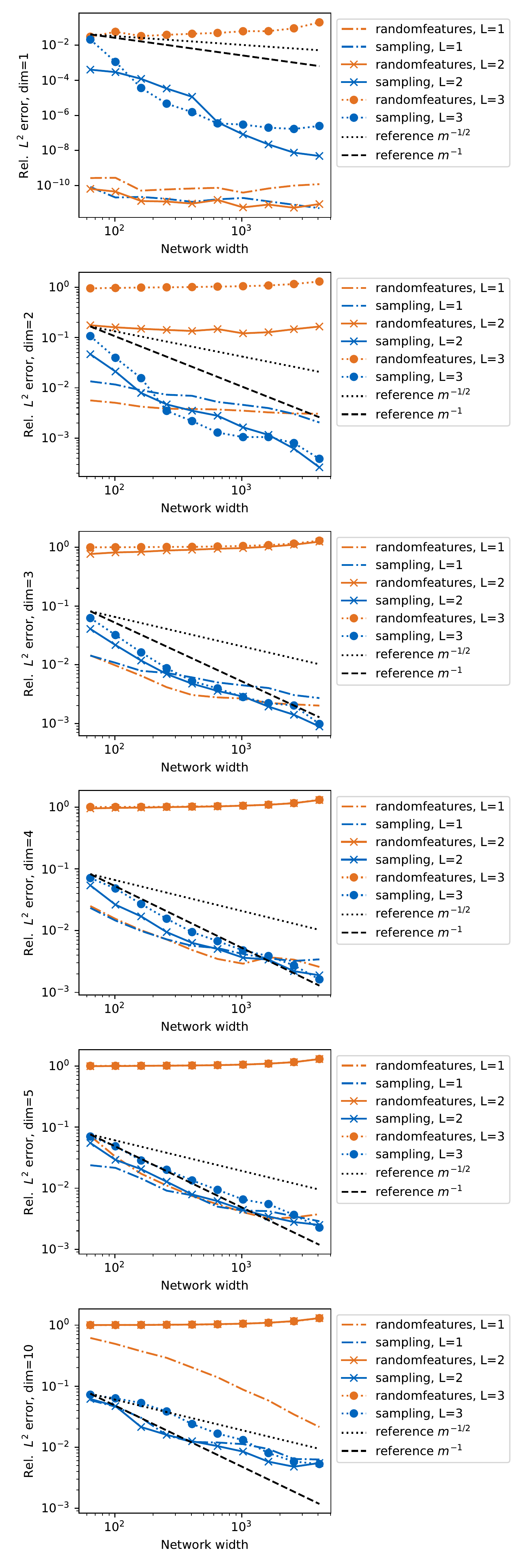}
    \caption{Relative \(L^2\) approximation error for random features (using sine activation) and sampling (left: using tanh activation, right: using sine activation), with input dimensions \(d=1,2,3,4,5\) and \(10\). Results are averaged over five runs with different random seeds.}
    \label{fig:barron_illustration_l2}
\end{figure}
\begin{figure}[ht!]
    \centering
    \includegraphics[width=.45\textwidth]{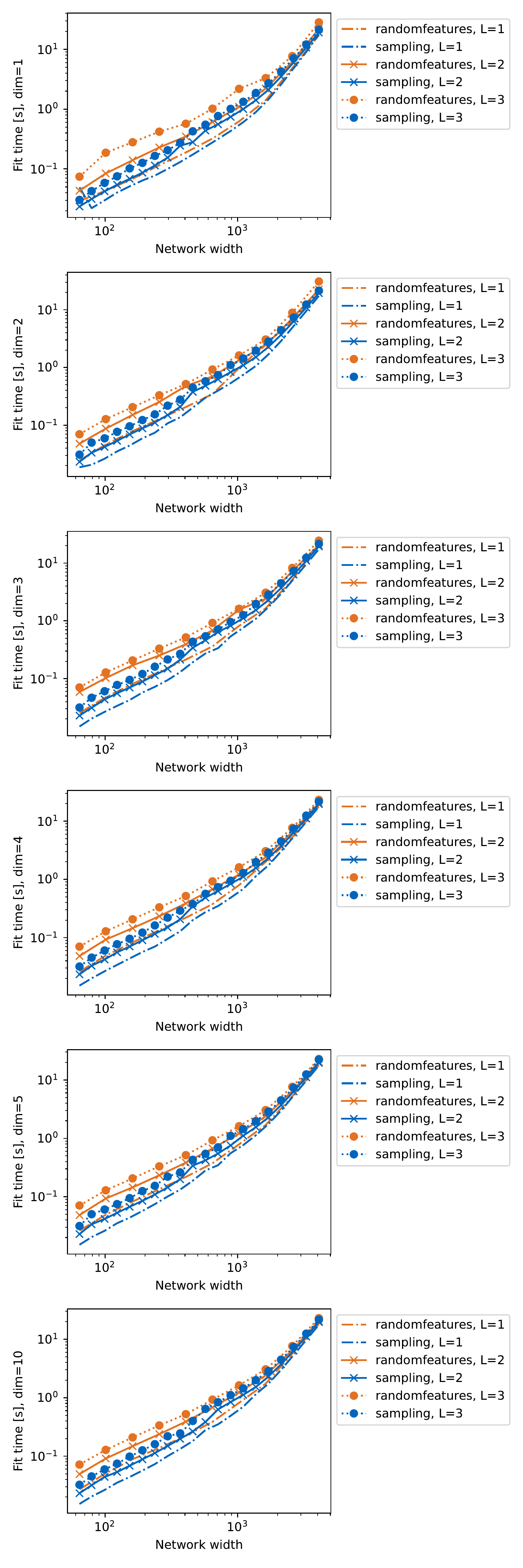}
    \includegraphics[width=.45\textwidth]{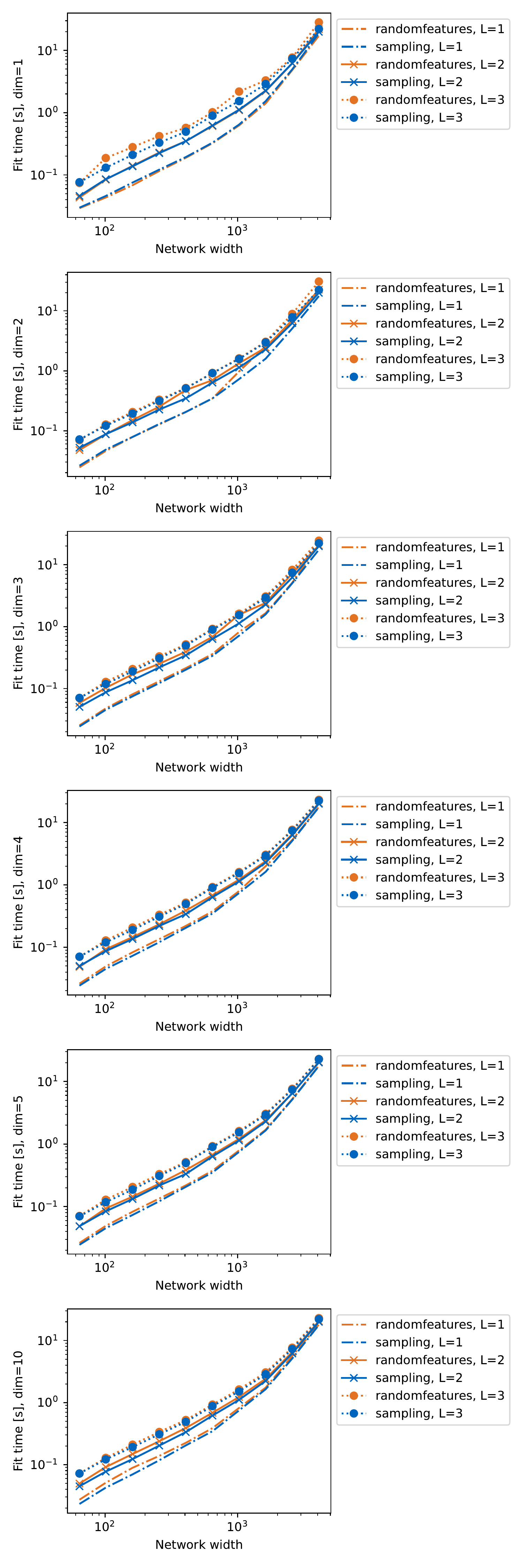}
    \caption{Fit times in seconds for random features (using sine activation) and sampling (left: using tanh activation, right: using sine activation), with input dimensions \(d=1,2,3,4,5\) and \(10\). Results are averaged over five runs with different random seeds.}
    \label{fig:barron_illustration_time}
\end{figure}
\section{Classification benchmark from OpenML} \label{appendix:openml}
The Adam training was done on the GeForce 4x RTX 3080 Turbo GPU with 10 GB RAM, while sampling was performed on the Intel Core i7-7700 CPU @ 3.60GHz × 8 with 32GB RAM.

We use all 72 datasets in the ``OpenML-CC18 Curated Classification benchmark'' from the ``OpenML Benchmarking Suites''~(CC-BY)~\cite{bischl-2021}, which is available on openml.org:
\url{https://openml.org/search?type=benchmark&sort=tasks_included&study_type=task&id=99}.

We use the OpenML Python API (BSD-3 Clause)~\cite{feurer-2021} to download the benchmark datasets.
The hyperparameters used in the study are listed in~\Cref{tab:openml_hyperparameters}.
From all datasets, we use at most 5000 points. This was done for all datasets before applying any sampled or gradient-based approach, because we wanted to reduce the training time when using the Adam optimizer. We pre-process the input features using one-hot encoding for categorical variables, and robust whitening (removal of mean, division by standard deviation using the \texttt{RobustScaler} class from \texttt{scikit-learn}). Missing features are imputed with the median of the feature.
All layers of all networks always have \(500\) neurons. For Adam, we employ early stopping with \texttt{patience=3}, monitoring the loss value.
The least squares regularization constant for sampling is $10^{-10}$.
We split the data sets for 10-fold cross-validation using stratified k-fold (\texttt{StratifiedKFold} class from \texttt{scikit-learn}), and report the average of the accuracy scores and fit times over the ten folds (\Cref{fig:openml_sampling_vs_adam} in the main paper).
\begin{table}[ht]
    \caption{Network hyperparameters used in the OpenML benchmark study.}
    \centering\label{tab:openml_hyperparameters}
    \begin{tabular}{lll}
    & Parameter & Value\\
        \hline
       Shared & Number of layers & \([1, 2, 3, 4, 5]\) \\
        & Layer width & \(500\) \\
        & Activation & tanh \\
       Sampling  &  \(L^2\)-regularization & \(10^{-5}\) \\
       Adam  & Learning rate  & \(10^{-3}\)\\
         & Max. epochs  & \(100\)\\
         & Early stopping patience  & \(3\)\\
         & Batch size  & \(64\)\\
       & Loss  & mean-squared error\\
        \hline
    \end{tabular}
    
\end{table}
\section{Deep neural operators} \label{appendix:neural_operators}

\subsection{Dataset}
To generate an initial condition $u_0$, we first sample five Fourier coefficients for the lowest frequencies. We sample both real and imaginary parts from a normal distribution with zero mean and standard deviation equal to five. We then scale the $k$-th coefficient by $1/(k+1)^2$ to create a smoother initial condition. For the first coefficient corresponding to the zero frequency, we also set the imaginary part to zero as our initial condition should be real-valued. Then, we use real inverse Fourier transform with orthonormal normalization (\texttt{np.fft.irrft}) to generate an initial condition with the discretization over 256 grid points. To solve the Burgers’ equation, we use a classical numerical solver (\texttt{scipy.integrate.solve\_ivp}) and obtain a solution $u_1$ at $t=1$. This way, we generate 15000 pairs of ($u_0$, $u_1$) and split them between train (9000 pairs), validation (3000 pairs), and test sets (3000 pairs).

\subsection{Experiment setup}

The Adam training was done on the GeForce 4x RTX 3080 Turbo GPU with 10 GB RAM, while sampling was performed on the 1x AMD EPYC 7402 @ 2.80GHz × 8 with 256GB RAM. We use the \texttt{sacred} package (\href{https://github.com/IDSIA/sacred}{https://github.com/IDSIA/sacred}, \href{https://opensource.org/license/mit/}{MIT license}, \cite{greff-2017}) to conduct the hyperparameter search.

We perform a grid search over several hyperparameters (where applicable): the number of layers/Fourier blocks, the width of layers, the number of hidden channels, the regularization scale, the learning rate, and the number of frequencies/modes to keep. The full search grid is listed in \Cref{tbl:burgers_grid_search}.

\begin{table}
    \centering
    \caption{Network hyperparameters used to train deep neural operators.}
    \begin{tabular}{llll}
        & Parameter & Values & Note \\
        \hline 
        Shared & Number of layers & \([1, 2, 4]\) &  \\
        & Layer width &\([256, 512, 1024, 2048, 4096]\) & \\
        & Number of modes & \(8, 16, 32\) & \\
        & Number of channels & \(16 ,32\) & \\
        & Activation & tanh & \\
        Sampling & $L^2$-regularization & \(10^{-10}, 10^{-8}, 10^{-6}\) & \\ 
        Adam & Learning rate & \(10^{-5}, 5\cdot10^{-5}, 10^{-4}, 5\cdot10^{-4}\) & FCNs and POD-DeepONet \\
        & & \(10^{-4}, 5\cdot10^{-4}, 10^{-3}, 5\cdot10^{-3}\) & FNO \\
        & Weight decay & \(0\) & \\
        & Max. epochs & 2000 & FNO, FCN in signal space \\
        & & 5000 & FCN in Fourier space \\
        & & 90000 & POD-DeepONet\\
        & Patience & 100 & FNO,  FCN in signal space \\
        & & 200 & FCN in Fourier space \\
        & & 4500 & POD-DeepONet \\
        & Batch size & 64 & FCNs and FNO \\
        & & full data & POD-DeepONet \\
        & Loss & mean-squared error & FCNs and POD-DeepONet \\
        & & mean relative $H^1$-loss & FNO \\
        \hline \\

    \end{tabular}
    \label{tbl:burgers_grid_search}
\end{table}

\paragraph{Adam} With the exception of FNO, we use the mean squared error as the loss function when training with Adam. For FNO, we train with the default option, that is, with the mean relative $H^1$-loss. This loss is based on the $H^1$-norm for a one-dimensional function $f$ defined as
\begin{equation*}
    \|f\|_{H^1} = \|f\|_{L^2} + \|f'\|_{L^2}.
\end{equation*}
For all the experiments, we use the mean relative $L^2$ error as the validation metric. We perform an early stopping on the validation set and restore the model with the best metric.

\paragraph{DeepONet} We use the \texttt{deepxde} package (\href{https://github.com/lululxvi/deepxde}{https://github.com/lululxvi/deepxde}, \href{https://github.com/lululxvi/deepxde/blob/master/LICENSE}{LGPL-2.1 License}, \cite{lu-2021b}) to define and train POD-DeepONet for our dataset.  After defining the model, we apply an output transform by centering and scaling the output with $1/p$, where $p$ is the number of modes we keep in the POD. To compute the POD, we first center the data and then use \texttt{np.linalg.eigh} to obtain the modes. We add two callbacks to the standard training: the early stopping and the model checkpoint. We use the default parameters for the optimization and only set the learning rate. The default training loop uses the number of iterations instead of the number of epochs, and we train for $90000$ iterations with patience set to $4500$. By default, POD-DeepONet uses all the data available as a batch. We ran several experiments with the batch size set to $64$, but the results were significantly worse compared to the default setting. Hence, we stick to using the full data as one batch.

\paragraph{FNO} We use the \texttt{neuralop} (\href{https://github.com/neuraloperator/neuraloperator}{https://github.com/neuraloperator/neuraloperator}, \href{https://github.com/neuraloperator/neuraloperator/blob/master/LICENSE}{MIT license}, \cite{li-2020a, kovachki-2022}) package with slight modifications. We changed the \texttt{trainer.py} to add the early stopping and to store the results of experiments. Apart from these changes, we use the functionality provided by the library to define and run experiments. We use batch size $64$ and train for $2000$ epochs with patience $100$. In addition to the hyperparameters considered for other architecture, we also did a search over the number of hidden channels ($32$ or $64$). We use the same number of channels for lifting and projection operators.

\paragraph{FCN} For both fully-connected networks in our experiments, we use PyTorch framework to define the models. For training a network in Fourier space, we prepend a forward Fourier transform before starting the training. During the inference, we applied the inverse Fourier transform to compute the validation metric. We considered using the full data as a batch, but several experiments indicated that this change worsens the final metric. Hence, we use batch size $64$ for both architectures.

\begin{figure}
    \centering
    \includegraphics[width=0.6\textwidth]{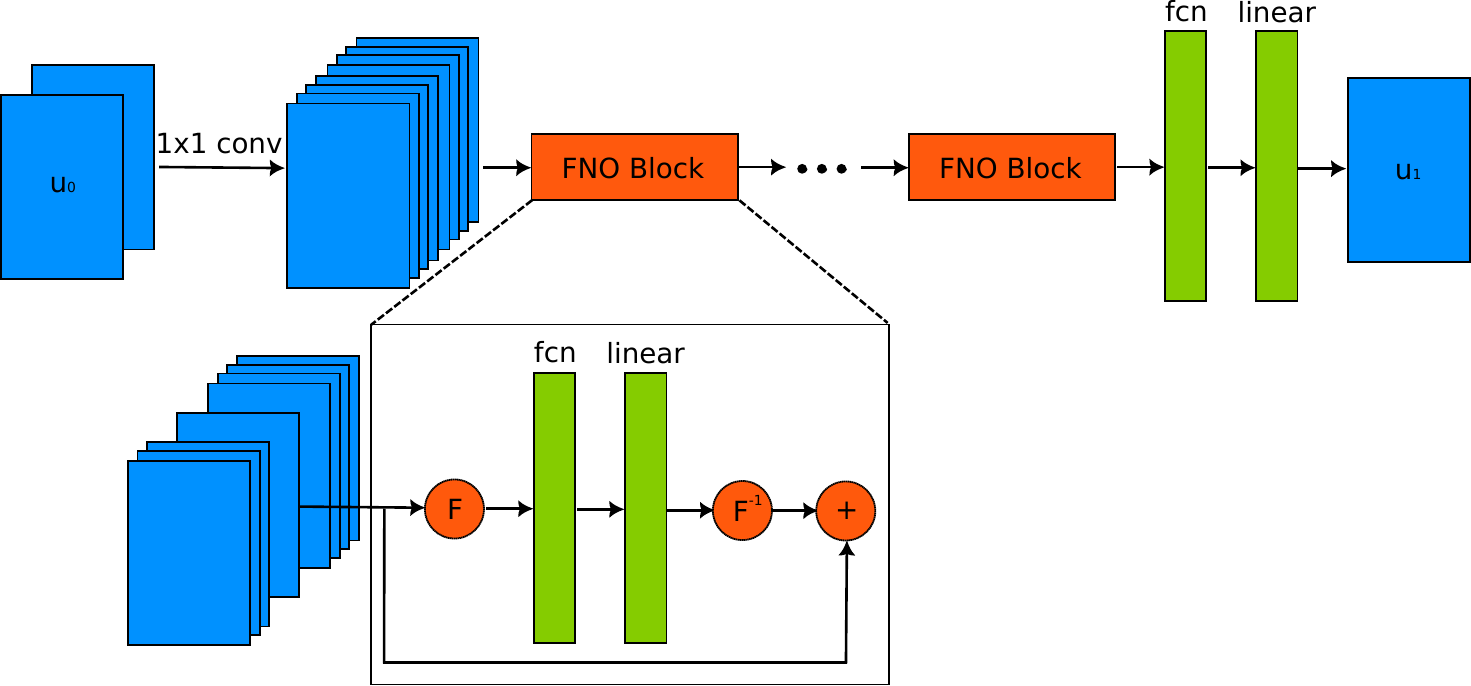}
    \caption{The architecture of the sampled FNO.}
    \label{fig:fno_block_architecture}
\end{figure}

\paragraph{Sampling FNO} Here, we want to give more details about sampled FNO. First, we start with input $u_0$ and append the coordinate information to it as an additional channel. Then, we lift this input to a higher dimensional channel space by drawing $1\times 1$ convolution filters from a normal distribution. Then, we apply a Fourier block which consists of fully-connected networks in Fourier space for each channel. After applying the inverse Fourier transform, we add the input of the block to the restored signal as a skip connection. After applying several Fourier blocks, we learn another FCN in signal space to obtain the final prediction. We can see the schematic depiction of the model in \Cref{fig:fno_block_architecture}. We note that to train fully-connected networks in the Fourier space, we apply Fourier transform to labels $u_1$ and use it during sampling as target functions. Similarly to the Adam-trained FNO, we search over the number of hidden channels we use for lifting.

\subsection{Results}
\begin{figure}
    \includegraphics[width=\textwidth]{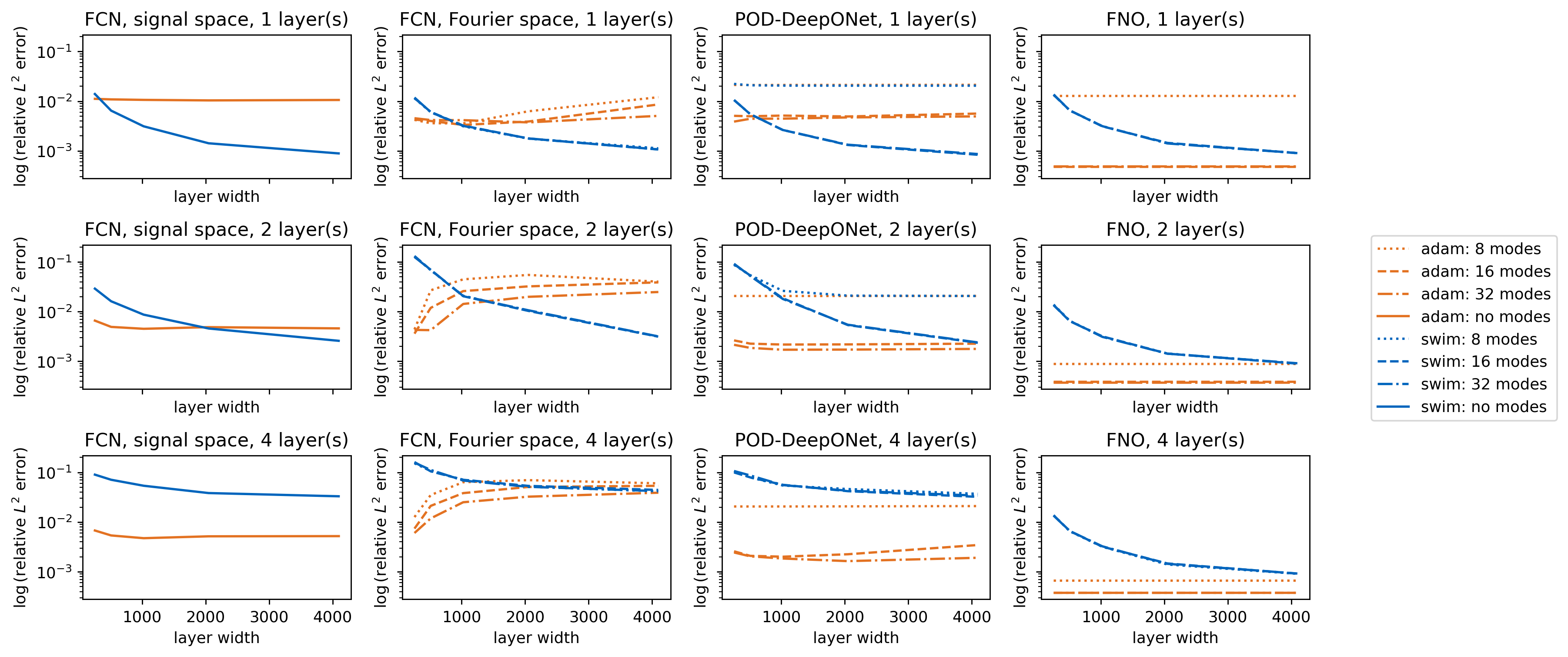}
    \caption{Comparison of different deep neural operators trained with Adam and with SWIM algorithm. FNO with Adam has no dependency on a layer width. Thus, we show results for FNO trained with Adam as a straight line in all rows. For the hyperparameters not present in the plot, we chose the values that produce the lowest error on the validation set. We repeat each experiment three times and average the resulting metrics for the plotting. The error is computed on the validation set.}
    \label{fig:burgers_full_results}
\end{figure}

We can see the results of the hyperparameter search in \Cref{fig:burgers_full_results}. We see that sampled FNO and FCN in Fourier space perform well even with a smaller number of frequencies available, while DeepONet requires more modes to achieve comparable accuracy. We also see that adding more layers is not beneficial for sampled networks. A more important factor is the width of the layers: all of the sampled networks achieved the best results with the largest number of neurons.

For the Adam-trained models, we note that we could not significantly improve the results by increasing the widths of the layers. Possibly, this is due to a limited range of learning rates considered in the experiments.
\section{Transfer learning} \label{appendix:transfer_learning}

This section contains details about the results shown in the main paper and the supporting hyper-parameter search concerning the transfer learning task. The source and target tasks of transfer learning, train-test split, and pre-trained models are the same as in the main paper. We describe the software, hardware, data pre-processing, and details of the sampling and the Adam training approach. Note that the results shown here are not averaged over five runs, as was done for the plots in the main paper.

\paragraph{Software and hardware}
We performed all the experiments concerning the transfer learning task in Python, using the TensorFlow framework. We used a GeForce 4x RTX 3080 Turbo GPU with 10 GB RAM for the Adam training and a 1x AMD EPYC 7402 socket with 24 cores @ 2.80GHz for sampling. We use the pre-trained models from Keras Applications which are licensed under the Apache 2.0 License which can be found at \url{https://github.com/keras-team/keras/blob/v2.12.0/LICENSE}. 

\paragraph{Pre-processing of data}
The size of images in the ImageNet data set is larger than the ones in CIFAR-10, so we use a bicubic interpolation to upscale the CIFAR-10 images to dimensions \(224 \times 224 \times 3\). We pre-process the input data for each model the same way it was pre-processed during the pre-training on ImageNet. Moreover, we apply some standard data augmentation techniques before training, such as vertical and horizontal shifts, rotation, and horizontal flips.
After pre-processing, the images are first passed through the pre-trained classification layers and a global average pooling layer. The output of the global average pooling layer and classification labels serve as the input and output data for the classification head respectively.  

\paragraph{Details of the Adam training and sampling approaches}
We find the weights and biases of the hidden layer of the classification head using the proposed sampling algorithm and the usual gradient-based training algorithm. 

First, in the Adam-training approach, we find the parameters of the classification head by iterative training with the Adam optimizer. We use a learning rate of $10^{-3}$, batch size of 32, and train for 20 epochs with early-stopping patience of 10 epochs. We store the parameters that yield the lowest loss on test data. We use the tanh activation function for the hidden layer, the softmax activation function for the output layer, and the categorical cross-entropy loss function with no regularization. 

Second, in the sampling approach, we use the proposed sampling algorithm to sample parameters of the hidden layer of the classification head. We use the tanh activation function for the hidden layer. 
Once the parameters of the hidden layers are sampled,
an optimization problem for the coefficients of a linear output layer is solved. For this, the mean squared error loss function without regularization is used. 

Unless mentioned otherwise, the above-mentioned setup is used for all the experiments in this section. 
\subsection{Sampling Vs Adam training}

\Cref{fig:acc_width_all} compares the train and test accuracy for different widths (number of neurons in the hidden layer of the classification head) using three pre-trained neural network architectures for the Adam training and sampling approaches. As in the main paper, we observe that for all the models, the sampling approach results in a higher test accuracy than the Adam training approach for sufficiently higher widths. 

The sampling algorithm tends to over-fit for very high widths. The loss on the train data for the iterative training approach decreases with width, particularly for the Xception network. This suggests that an extensive hyper-parameter search for the iterative training approach could yield higher classification accuracy on the train data. However, as shown in the main paper, the iterative training approach can be orders of magnitude slower than the sampling approach. 

\begin{figure}[ht]
    \centering
    \includegraphics[width=1\textwidth]{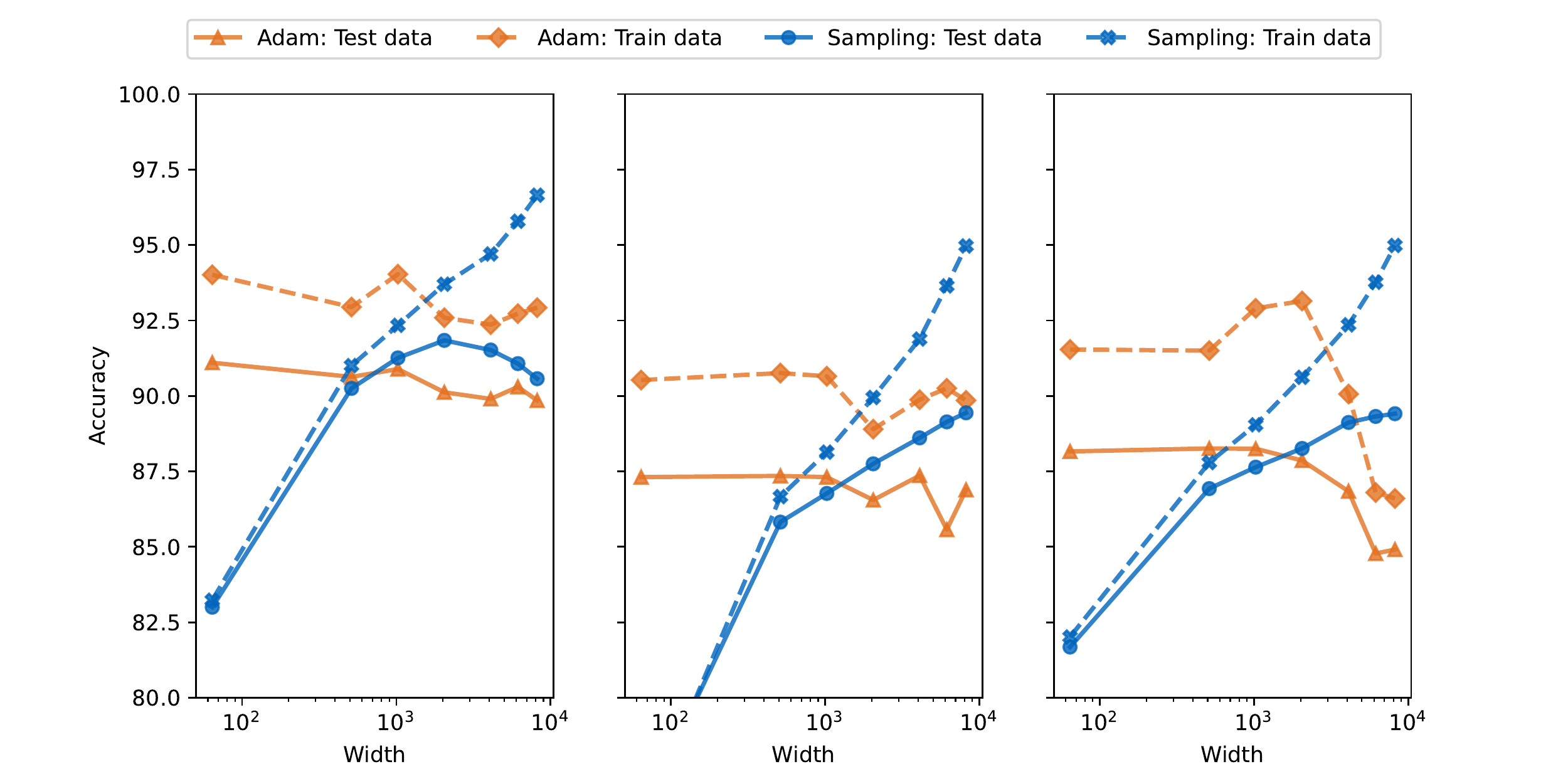}
    \caption{Left: ResNet50. Middle: VGG19. Right: Xception.}
    \label{fig:acc_width_all}
\end{figure}

\subsection{Sampling with tanh Vs ReLU activation functions}

This sub-section compares the performance of the sampling algorithm used to sample the weights of the hidden layer of the classification head for tanh and ReLu activation functions. 

\Cref{fig:activation_functions} shows that
for ResNet50, the test accuracy with tanh is similar to that obtained with ReLU for a width smaller than \(2048\). As the width increases beyond 4096, test accuracy with ReLU is slightly better than with tanh. However, for VGG19 and Xception, test accuracy with tanh is higher than or equal to that with ReLU for all widths. Thus, we find the sampling algorithm with the tanh activation function yields better results for classification tasks than with ReLU. 

On the training dataset, tanh and ReLU yield similar accuracies for all widths for ResNet50 and Xception. For VGG19, using the ReLU activation function yields much lower accuracies on the train and test data sets, especially as the width of the fully connected layer is increased. Thus, we observe that the tanh activation function is more suitable for classification tasks. 

\begin{figure}[ht]
    \centering
    \includegraphics[width=1\textwidth]{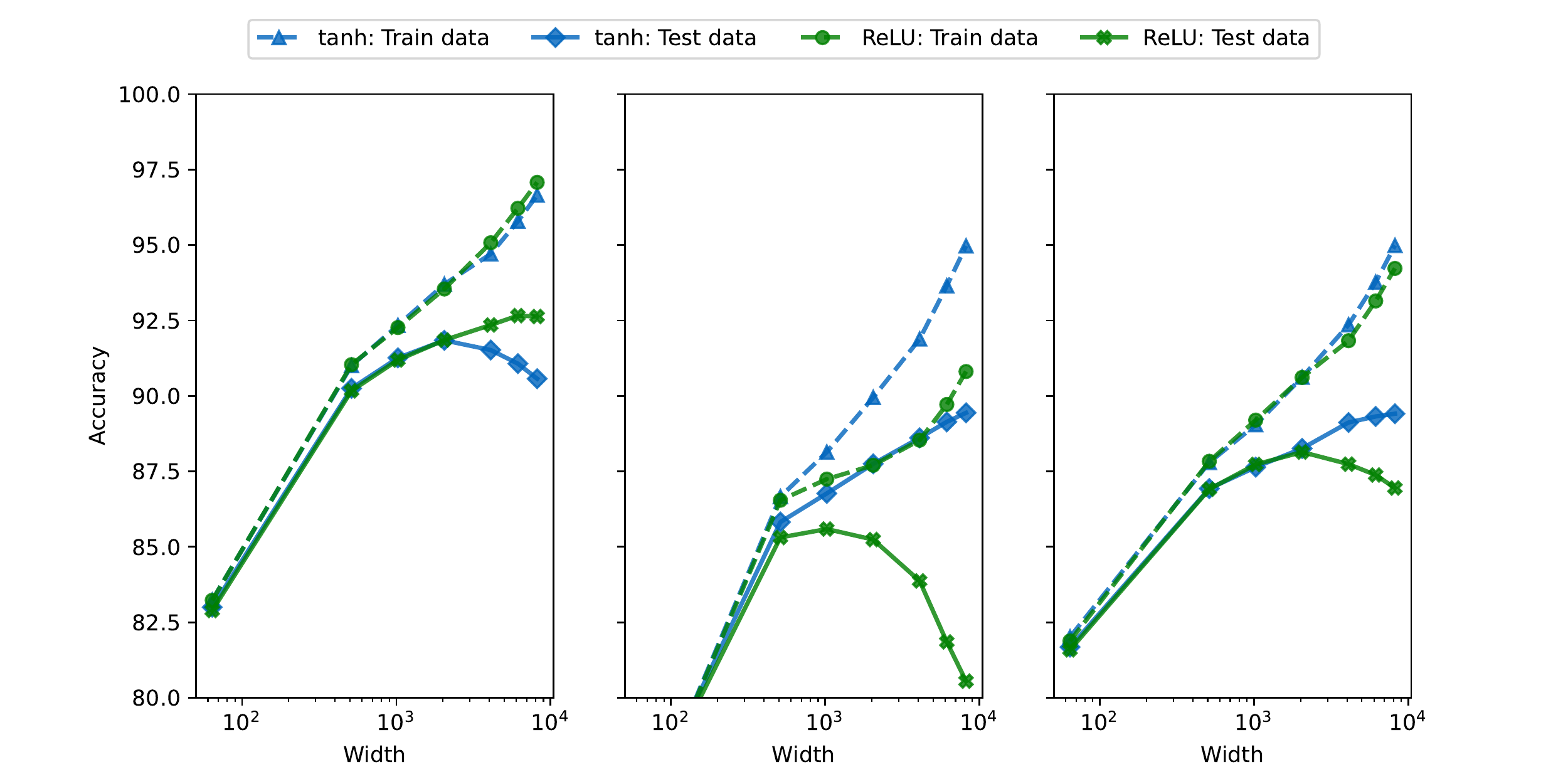}
    \caption{Left: ResNet50. Middle: VGG19. Right:Xception}
    \label{fig:activation_functions}
\end{figure}

\subsection{Fine-tuning}
There are two typical approaches to transfer learning: feature extraction and fine-tuning. In feature extraction, there is no need to retrain the pre-trained model. The pre-trained models capture the essential representation/features from a similar task. One only needs to add a new classifier (one or more fully connected layers) and find the parameters of the new classifier.

In fine-tuning, after the feature extraction, some or all the parameters of the pre-trained model (a few of the last layers or the entire model) are re-trained with a much smaller learning rate using typical gradient-based optimization algorithms. Fine-tuning is not the focus of this work. Nevertheless, it is essential to verify that the weights of the last layer sampled by the proposed sampling algorithm can also be trained effectively in the fine-tuning phase. The results are shown in \Cref{fig:finetuning}. 

\begin{figure}[ht]
    \centering
    \includegraphics[width=1\textwidth]{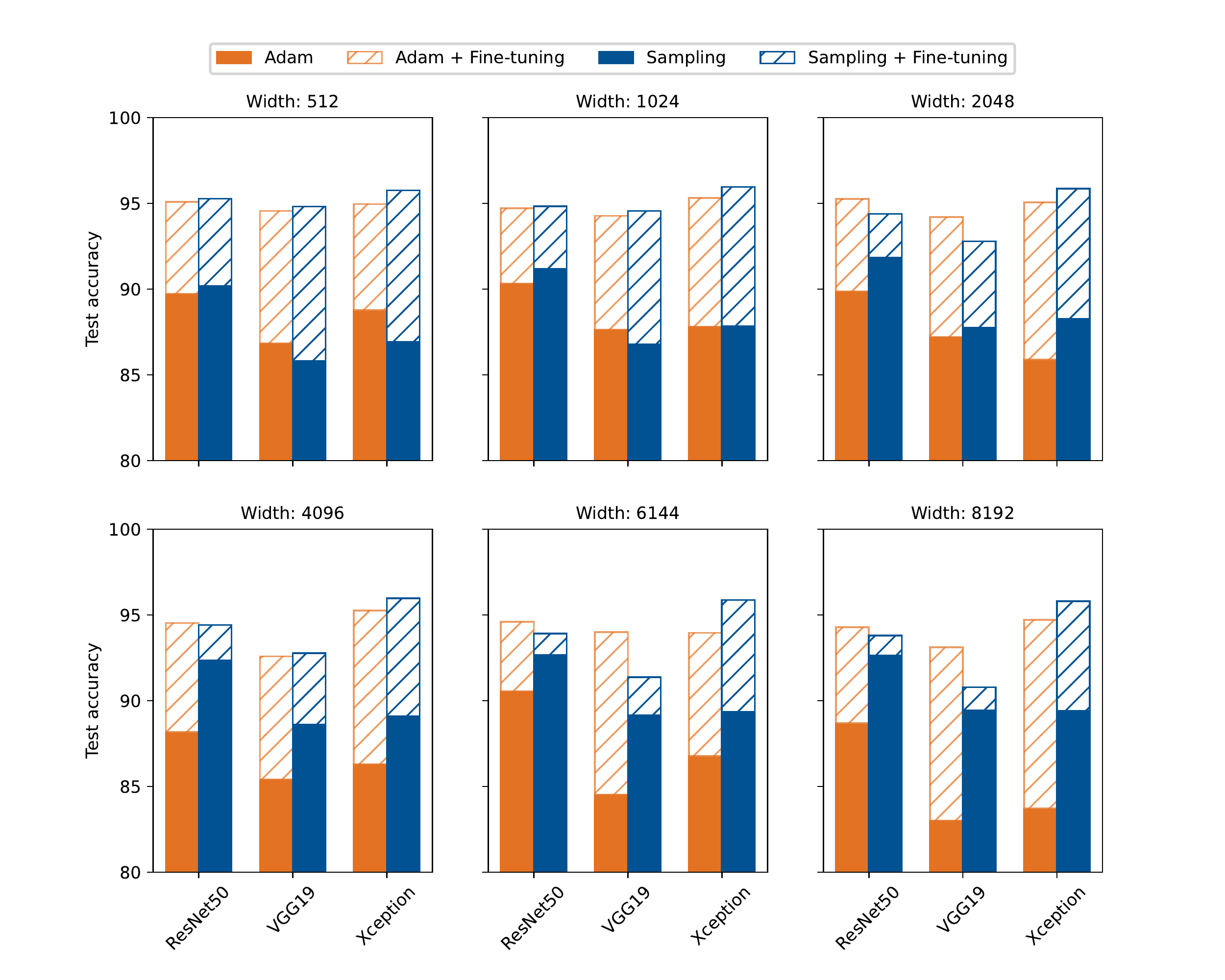}
    \caption{Comparison of test accuracies obtained by the sampling approach followed by fine-tuning and Adam training approach followed by fine-tuning for different widths}
    \label{fig:finetuning}
\end{figure}

In \Cref{fig:finetuning}, we observe that for certain widths (512, 1024, and 4096), sampling the last layer followed by fine-tuning the entire model yields slightly higher test accuracies than training the last layer followed by fine-tuning. For widths 2048, 6144, and 8192, for some models, sampling the last layer, followed by fine-tuning, is better; for others, training the last layer, followed by fine-tuning, is better. 

Nevertheless, these experiments validate that the parameters of the last layer sampled with the proposed algorithm serve as a good starting point for fine-tuning. Moreover, the test accuracy after fine-tuning is comparable irrespective of whether the last layer was sampled or trained. However, as we show in the main paper, sampling the weights in the feature extraction phase takes much less time and gives better accuracy than training with the Adam optimizer for appropriate widths. 

\subsection{One vs two hidden layers in the classification head}

The goal of this sub-section is to explore whether adding an additional hidden layer in the classification head leads to an improvement in classification accuracy. We keep the same width for the extra layer in this experiment. 

\Cref{fig:2_layers} compares the train and test accuracies obtained with one and two hidden layers in the classification head for different widths. 

\begin{figure}[ht]
    \centering
\includegraphics[width=1\textwidth]{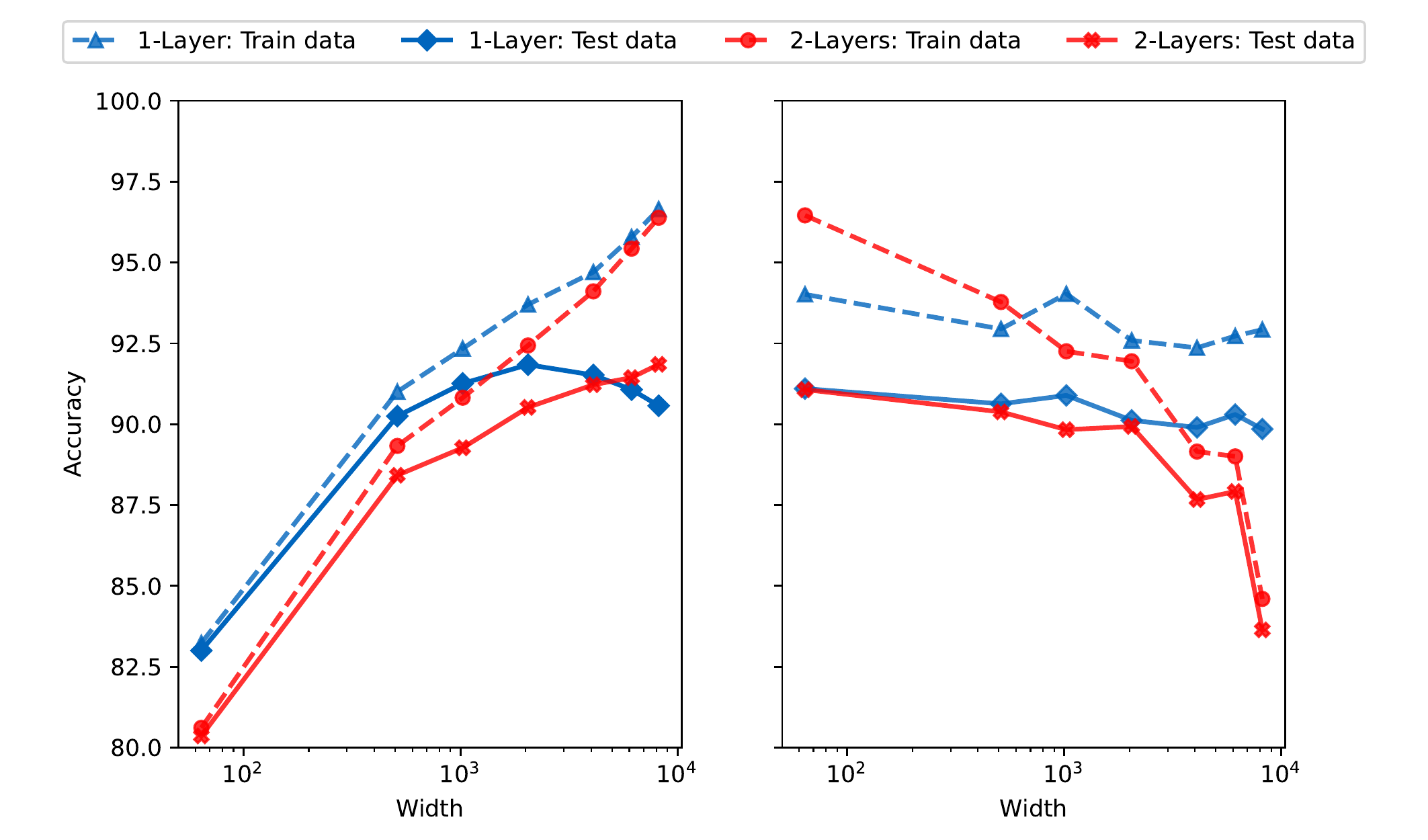}
    \caption{Left: Sampling, Right: Adam training}
    \label{fig:2_layers}
\end{figure}

\Cref{fig:2_layers} (left) shows that unless one chooses a very high width with the sampling approach >= 6148, adding an extra hidden layer yields a lower test accuracy. On the train data, the sampling approach with 2 hidden layers results in lower accuracy for all widths in consideration. 

\Cref{fig:2_layers} (right) shows that with the Adam training approach, adding an extra hidden layer yields a lower test accuracy for all the widths in consideration. For lower widths, adding more layers results in over-fitting. We believe that the accuracy on the train data for higher widths could be improved with an extensive hyper-parameter search. However, adding an extra layer increases the training time too.

\section{Code repository and algorithm complexity} \label{appendix:code_repository}

The code to reproduce the experiments from the paper can be found at 
\begin{center}
\url{https://gitlab.com/felix.dietrich/swimnetworks-paper}.
\end{center}
An up-to-date code base is maintained at
\begin{center}
\url{https://gitlab.com/felix.dietrich/swimnetworks}.
\end{center}
Python 3.8 is required to run the computational experiments and install the software. Installation works using ``\texttt{pip install -e .}'' in the main code repository.
The python packages required can be installed using the file \texttt{requirements.txt} file, using \texttt{pip}.
The code base is contained in the folder \texttt{swimnetworks}, the experiments and code for hyperparameter studies in the folder \texttt{experiments}.

\begin{table}[ht]\caption{Runtime and memory requirements for training a sampled neural networks with the SWIM algorithm, where \({N=\max\{N_0, N_1,N_2, \dots, N_L\}}\). Assumption (I) assume the output dimension is less than or equal to \(N\). Assumption (II) assumes in addition that \(N < M^2\), i.e., number of neurons and input dimension is less than the size of dataset squared. Assumption (III) assumes a fixed architecture.}
    \begin{center}
        \begin{tabular}{c c c }
            & Runtime  & Memory \\
        \toprule
           Assumption (I)  & \(\mathcal{O}(L\cdot M(\min\{\lceil N/M\rceil, M\}+ N^2))\) & \(\mathcal{O}(M\cdot \min \{\lceil N/M\rceil, M\} + L N^2)\)  \\
            Assumption (II) & \({\mathcal{O}(L\cdot  M( \lceil N/M\rceil + N^2))}\) & \(\mathcal{O}(M\cdot \lceil N/M\rceil + L N^2)\) \\
             Assumption (III) &\({\mathcal{O}(M)}\) & \(\mathcal{O}(M)\)
        \end{tabular}
    \end{center}
    \label{tab:complexity}
\end{table}

We now include a short complexity analysis of the SWIM algorithm (Algorithm 1 in the main paper), which is implemented in the code used in the experiments. The main points of the analysis can be found in \Cref{tab:complexity}. 

There are two separate parts that contribute to the runtime. First, we consider sampling the parameters of the hidden layers. Letting \({N = \max\{N_0, N_1,N_2, \dots, N_L\}}\), i.e., the maximum of the number of neurons in any given layer and the input dimension, and \(M\) being the size of the training set. The size of the training set also limits the number of possible weights / biases that we can sample, namely \(M^2\). The time complexity to sample the parameters of the hidden layers is \({\mathcal{O}(L\cdot M(\min\{\lceil N/M\rceil, M\}+ N^2))}\). We can further refine the expression with \({\mathcal{O}(\sum_{l=1}^L M\cdot (\min\{\lceil N_l/M\rceil, M\} + N_l \cdot N_{l-1}))}\). Several factors contribute to this runtime. (1) The number of layers increases the number of samples. (2) Computing the output after \(l-1\) layers for the entire dataset to compute the space from which we construct the weights, hence the term \({N_l\cdot N_{l-1} \cdot M}\). (3) Computing \({M\cdot \Tilde M_l}\) probabilities, where \({\Tilde M_l = \min\{\lceil N_l/M\rceil, M\}}\). When the size of the hidden layer is less than the number of training points --- which is often the case --- we compute \(2\cdot M\) probabilities --- depending on a scaling constant. On the other hand, when the size of the layer is greater than or equal to \(M^2\), we compute in worst case all possible probabilities, that is, \(M^2\) probabilities. The last contributing factor is to sample \(N_l\) pair of points, that in the expression is dominated by the term \({N_l\cdot N_{l-1} \cdot M}\). We are often interested in a fixed architecture, or at least to bound the number of neurons in each hidden layer to be less than the square of the number of training points, i.e.,  \(N < M^2\). Adding the latter as an assumption, we end up with the runtime \({\mathcal{O}(L\cdot  M( \lceil N/M\rceil + N^2))}\). 

For the second part, we optimize the weights/biases \(W_{L+1}, b_{L+1}\) to map from the last hidden layer to the output. Assume that we use the SVD decomposition to compute the pseudo-inverse and subsequently solve the least squares problem. The time complexity in general is then \({\mathcal{O}(M \cdot N_L\cdot \min\{M,N_L\} + N_L \cdot M \cdot N_{L+1})}\). Again, if we make the reasonable assumption that the number of training points is larger than the number of hidden layers, and that the output dimension is smaller than the dimension of the last hidden layer, we find \({\mathcal{O}(MN_L^2)}\), which can be rewritten to \({\mathcal{O}(M N^2)}\). With these assumptions in mind, the run time for the \emph{full} training procedure, from start to finish, is \({\mathcal{O}(L\cdot  M( \lceil N/M\rceil + N^2))}\), and when the architecture is fixed, we have \({\mathcal{O}(M)}\) runtime for the full training procedure.

In terms of memory, at every layer \(l\), we need to store the probability matrix, the output of the training set when mapped through the previous \(l-1\) layers, and the number of points we sample. This means the memory required is \(\mathcal{O}(M\cdot \lceil N_l/M\rceil + N_{L+1}\cdot N_L)\).  In the last layer, we only need the image of the data passed through all the hidden layers, as well as the weights/biases, which leads to \(\mathcal{O}(M + N_{L+1}\cdot N_{L})\). We end up with the required memory for the SWIM algorithm is \(\mathcal{O}(M\cdot \lceil N/M\rceil + L N^2)\). 

\end{document}